\pdfoutput = 1
\documentclass[11pt]{article}
\usepackage{fullpage}
\usepackage[english]{babel}
\usepackage[utf8x]{inputenc}
\usepackage[T1]{fontenc}
\usepackage{amsmath}
\usepackage{amssymb}
\usepackage{amsthm}
\usepackage{bbm}
\usepackage{bbold}
\usepackage{parskip}
\usepackage{thm-restate}
\usepackage{booktabs}
\usepackage{array}
\usepackage{xcolor}

\usepackage{pifont}
\newcommand{\cmark}{\ding{51}}
\newcommand{\xmark}{\ding{55}}

\usepackage{algorithm}
\usepackage{algpseudocode}

\usepackage{tikz}

\usepackage{hyperref}
\hypersetup{
	colorlinks=true,
	linkcolor=blue!70!black,
	citecolor=blue!70!black,
	urlcolor=blue!70!black
}
\usepackage{cleveref}
\usepackage{graphicx}
\usepackage{algorithm}
\usepackage[lined,boxed,ruled,norelsize,algo2e,linesnumbered]{algorithm2e}
\hypersetup{
	colorlinks=true,
	linkcolor=blue!70!black,
	citecolor=blue!70!black,
	urlcolor=blue!70!black
}

\newtheorem{theorem}{Theorem}
\newtheorem{lemma}{Lemma}

\newtheorem{definition}{Definition}
\newtheorem{corollary}{Corollary}
\newtheorem{proposition}{Proposition}
\newtheorem{remark}{Remark}
\newtheorem{model}{Model}
\newtheorem{problem}{Problem}
\newtheorem{construction}{Construction}

\renewcommand{\epsilon}{\varepsilon}

\newcommand{\vlam}{\boldsymbol{\lambda}}
\newcommand{\defeq}{:=}

\newcommand{\norm}[1]{\left\lVert#1\right\rVert}

\newcommand{\inprod}[2]{\left\langle#1, #2\right\rangle}

\newcommand{\eps}{\varepsilon}
\newcommand{\lam}{\lambda}

\newcommand{\R}{\mathbb{R}}

\newcommand{\N}{\mathbb{N}}

\newcommand{\half}{\frac{1}{2}}

\newcommand{\E}{\mathbb{E}}

\definecolor{burntorange}{rgb}{0.8, 0.33, 0.0}

\newcommand{\Par}[1]{\left(#1\right)}
\newcommand{\Brack}[1]{\left[#1\right]}
\newcommand{\Brace}[1]{\left\{#1\right\}}
\newcommand{\Abs}[1]{\left|#1\right|}

\newcommand{\oracle}{\mathcal{O}}
\newcommand{\alg}{\mathsf{alg}}
\newcommand{\ind}{\mathbb{I}}

\newcommand{\lsq}{\ell_{\mathsf{sq}}}
\newcommand{\llog}{\ell_{\mathsf{log}}}

\newcommand{\va}{\mathbf{a}}

\newcommand{\vc}{\mathbf{c}}
\newcommand{\vd}{\mathbf{d}}
\newcommand{\ve}{\mathbf{e}}
\newcommand{\vf}{\mathbf{f}}
\newcommand{\vg}{\mathbf{g}}

\newcommand{\vi}{\mathbf{i}}

\newcommand{\vk}{\mathbf{k}}

\newcommand{\vp}{\mathbf{p}}
\newcommand{\vq}{\mathbf{q}}

\newcommand{\vu}{\mathbf{u}}
\newcommand{\vv}{\mathbf{v}}
\newcommand{\vw}{\mathbf{w}}
\newcommand{\vx}{\mathbf{x}}
\newcommand{\vy}{\mathbf{y}}

\newcommand{\0}{\mathbf{0}}
\newcommand{\1}{\mathbf{1}}

\newcommand{\calA}{\mathcal{A}}
\newcommand{\calB}{\mathcal{B}}

\newcommand{\calD}{\mathcal{D}}
\newcommand{\calE}{\mathcal{E}}
\newcommand{\calF}{\mathcal{F}}
\newcommand{\calG}{\mathcal{G}}
\newcommand{\calH}{\mathcal{H}}

\newcommand{\calK}{\mathcal{K}}

\newcommand{\calM}{\mathcal{M}}
\newcommand{\calN}{\mathcal{N}}

\newcommand{\calQ}{\mathcal{Q}}
\newcommand{\calR}{\mathcal{R}}

\newcommand{\calU}{\mathcal{U}}
\newcommand{\calV}{\mathcal{V}}

\newcommand{\calX}{\mathcal{X}}
\newcommand{\calY}{\mathcal{Y}}

\newcommand{\xset}{\calX}

\newcommand{\reg}{\textup{reg}}

\newcommand{\codeStyle}[1]{{\bfseries #1} }

\newcommand{\codeReturn}{\codeStyle{Return:}}

\newcommand{\tvg}{\tilde{\vg}}

\newcommand{\sps}[1]{^{(#1)}}

\renewcommand{\epsilon}{\varepsilon}

\newcommand{\veps}{\boldsymbol{\eps}}
\newcommand{\bsa}{\mathsf{ImbalancedSimultaneousApproach}}
\newcommand{\Bern}{\mathsf{Bern}}

\newtheorem{assumption}{Assumption}
\newcommand{\by}{\bar{y}}
\newcommand{\hcalR}{\widehat{\calR}}
\newcommand{\sfL}{\mathsf{L}}

\usepackage{mathtools}
\usepackage{amsbsy}

\title{Optimal Recalibration of an Online Predictor}
\author{
Lunjia Hu\thanks{Northeastern University, \texttt{lunjia@alumni.stanford.edu}} \and 
Kevin Tian\thanks{University of Texas at Austin, \texttt{kjtian@cs.utexas.edu}} \and 
Chutong Yang\thanks{University of Texas at Austin, \texttt{cyang98@utexas.edu}}
}
\date{}
\allowdisplaybreaks
\begin{document}

\maketitle
\begin{abstract}
We study the problem of recalibrating an online predictor \cite{KuleshovE17, OkoroaforKS24}: given an arbitrary ``hint'' sequence of forecasts, the learner must output new predictions that are calibrated while incurring small excess error relative to the original forecasts, under a proper loss. We give an online algorithm that achieves 
$(\eps, \eps^2)$-recalibration for Lipschitz proper losses in 
 $T \approx \eps^{-3}$ rounds, using an imbalanced extension of the recent simultaneous Blackwell approachability reduction framework of \cite{HuTY26}. We show that this tradeoff is optimal by proving a matching lower bound for recalibrating against the squared loss. We also prove a companion $\calK_2$-recalibration theorem that obtains the same tradeoffs up to a logarithmic factor. 
 
 As our main application, we show how our recalibration algorithms can be combined with the online refinement method of \cite{FosterH23} to obtain simultaneous 
$\eps$-calibration and 
$\eps^2$-calibeating for smooth proper losses at the same asymptotic rate, improving upon prior works that achieved these properties separately or with a worse $\eps$ dependence.  In particular, the $\calK_2$ variant answers a question of \cite{ChenHJL26} on simultaneously achieving near-optimal calibeating and calibration rates. We also derive extensions to settings with multiple hint sequences. Finally, we empirically evaluate our algorithms on a classification dataset undergoing distribution shift.
\end{abstract}

\thispagestyle{empty}
\newpage
\tableofcontents
\thispagestyle{empty}
\newpage

\section{Introduction}\label{sec:intro}
\setcounter{page}{1}

Probabilistic prediction is a central primitive in modern machine learning. In an online classification setting with binary labels, this problem is as follows: examples arrive over $T$ timesteps, and for each $t \in [T]$ our goal is to output predictions $p_t \in [0, 1]$ (possibly depending on auxiliary example features) before seeing the label $y_t \in \{0, 1\}$.
These predictions $p_{[T]} \defeq \{p_t\}_{t \in [T]}$ are routinely interpreted as
probabilities modeling the labels $y_{[T]} \defeq \{y_t\}_{t \in [T]}$: an image classifier assigns probabilities to classes, a recommender predicts the probability of a click or purchase, and so on. 

To measure the interpretability of predictions $p_{[T]}$ as a reflection of the labels $y_{[T]}$, calibration \cite{Dawid82, FosterV98} has arisen as a standard desideratum. For prediction-label pairs $p_{[T]}, y_{[T]}$, denote the average label and total count of a predicted value $v \in [0, 1]$ up to iteration $t \in [T]$ by
\[
\Bar{y}_t(p) \defeq \frac{1}{n_t(p)}\sum_{s\in[t]}y_s\cdot\ind_{p_s = p},
\qquad
n_t(p) \defeq \sum_{s\in[t]} \ind_{p_s = p}.
\]
with the conventions $n_0(v)=0$, $\by_0(v)=0$, and $\by_t(v) = 0$ whenever $n_t(v) = 0$. Then the calibration error\footnote{Definition~\ref{def:calerr} is sometimes called the \emph{expected calibration error} (ECE) in the literature, to disambiguate with other calibration metrics.} is defined as follows.

\begin{definition}[Calibration error]\label{def:calerr}
For a sequence of predictor-label pairs $p_{[T]} \in [0, 1]^T$, $y_{[T]} \in \{0, 1\}^T$, we define the \emph{calibration error} by\footnote{Although the sum is over an infinite domain, only finitely many $v \in [0, 1]$ have nonzero $n_T(v)$.}
\begin{equation}\label{eq:l1cal}\calK_1\Par{p_{[T]}, y_{[T]}} \defeq \sum_{v \in [0, 1]} \frac{n_T(v)} T |v - \by_T(v)|. \end{equation}

We also define the \emph{$\calK_2$-calibration error}:
\begin{equation}\label{eq:l2cal}
\calK_2\Par{p_{[T]}, y_{[T]}} \defeq \sqrt{\sum_{v \in [0, 1]} \frac{n_T(v)}{T}\Par{v - \by_T(v)}^2}.
\end{equation}
\end{definition}
For any prediction-label sequences $(p_{[T]}, y_{[T]})$, these two calibration measures satisfy
\[
\calK_2^2\Par{p_{[T]}, y_{[T]}} \le \calK_1\Par{p_{[T]}, y_{[T]}} \le \calK_2\Par{p_{[T]}, y_{[T]}}.
\]

Note that if $\calK_1(p_{[T]}, y_{[T]}) = 0$, then $\by_T(v) = v$ for all predicted $v$; this reflects the interpretable condition that a $v$ fraction of iterations predicting $p_t = v$ will have labels $y_t = 1$. In this case, the predictions satisfy the classical definition of exact \emph{calibration} \cite{Dawid82, FosterV98}.

Oftentimes, calibration is only treated as a basic requirement, alongside other criteria such as \emph{loss minimization} (e.g., as measured by the squared loss). Indeed, calibration and loss minimization are generally incomparable; a predictor can be extremely miscalibrated while achieving near-optimal loss minimization, and vice versa. In this paper, we study the problem of \emph{recalibrating} an online predictor, which balances these two goals in a general \emph{hinted sequential prediction} setting.

\begin{model}[Hinted sequential prediction]\label{model:hsp}
In the \emph{hinted sequential prediction} setting, a learner observes a \emph{hint} sequence $q_{[T]} \in [0, 1]^T$, and outputs a \emph{prediction} sequence $p_{[T]} \in [0, 1]^T$ before observing a \emph{label} sequence $y_{[T]} \in \{0, 1\}^T$. We assume $(q_t, y_t)$ can depend jointly on all previous $(q_{[t - 1]}, p_{[t - 1]}, y_{[t - 1]})$, but not $p_t$; similarly, $p_t$ can depend on $(q_{[t - 1]}, p_{[t - 1]}, y_{[t - 1]})$ and $q_t$, but not $y_t$.

This setting can also be viewed as a $T$-iteration game between nature and a learner. In the $t^{\text{th}}$ iteration, nature selects a hint-label pair $(q_t, y_t)$ (depending on the game's history) and reveals $q_t$ to the learner. The learner then makes a prediction $p_t$, and then nature reveals $y_t$ to the learner.
\end{model}

In Model~\ref{model:hsp}, the hint $q_{[T]}$ can be viewed as an initial prediction, e.g., made by a model with good loss properties, but that does not necessarily satisfy calibration. This tension is motivated by \cite{GuoPSW17, KadavathCAHDPSHDT22}, studies which demonstrate that modern neural networks trained with a loss minimization objective are not always well-calibrated. Thus, a natural goal under Model~\ref{model:hsp} is to observe the hint $q_{[T]}$, and output a calibrated sequence $p_{[T]}$ that achieves similar loss. This is the goal in the problem of \emph{recalibration}, introduced by \cite{KuleshovE17} and studied further by \cite{OkoroaforKS24}.

\begin{definition}[Recalibration]\label{def:recal}
In the setting of Model~\ref{model:hsp}, let $\alpha, \beta \ge 0$, and let $\ell: [0, 1] \times \{0, 1\} \to \R$ be a loss. We say that (possibly randomized) $p_{[T]}$ is $(\alpha,\beta)$-recalibrated against
$(q_{[T]}, y_{[T]})$ under $\ell$ if
\[
\E\Brack{\calK_1\Par{p_{[T]},y_{[T]}}}\le \alpha,\quad\text{ and } \quad
\E\Brack{\ell\Par{p_{[T]},y_{[T]}}
-
\ell\Par{q_{[T]},y_{[T]}}}\le \beta.
\]
If only the first condition holds, we say that $p_{[T]}$ is $\alpha$-calibrated against $y_{[T]}$. Similarly, we say that (possibly randomized) $p_{[T]}$ is $(\alpha,\beta)$-$\calK_2$-recalibrated against
$(q_{[T]}, y_{[T]})$ under $\ell$ if
\[
\E\Brack{\calK_2\Par{p_{[T]},y_{[T]}}}\le \alpha,\quad\text{ and } \quad
\E\Brack{\ell\Par{p_{[T]},y_{[T]}}
-
\ell\Par{q_{[T]},y_{[T]}}}\le \beta.
\]
If only the first condition holds, we say that $p_{[T]}$ is $\alpha$-$\calK_2$-calibrated against $y_{[T]}$. 
\end{definition}

In Definition~\ref{def:recal}, we overload notation: for a loss $\ell$ that takes prediction-label pairs $(p, y)$ to a real value, we define its application to sequences by $\ell(p_{[T]}, y_{[T]}) \defeq \frac 1 T \sum_{t \in [T]} \ell(p_t, y_t)$. In this introduction, we focus primarily on the \emph{squared loss} $\ell(p, y) = \lsq(p, y) \defeq (p - y)^2$ for simplicity, although our main results apply to a much broader range of \emph{proper losses} (cf.\ Section~\ref{sec:prelims} for definitions).

\subsection{Our results}\label{ssec:results}

Our main result is an optimal algorithm for recalibration with parameters $(\eps, \eps^2)$. In particular, we provide a matching lower bound, showing our required iteration complexity is tight.

\begin{theorem}[Informal, see \Cref{thm:recalibration,thm:lb}]\label{thm:recal_inf}
If $T = \Omega(\eps^{-3})$, there is an algorithm that achieves $(\eps,\eps^2)$-recalibration under $\lsq$ in the setting of Model~\ref{model:hsp}, which is optimal up to constant factors.
\end{theorem}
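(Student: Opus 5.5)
The upper bound comes from discretizing and running one calibration procedure per hint bucket, using the \emph{imbalanced} simultaneous approachability reduction of \cite{HuTY26}. The lower bound comes from a random-bias construction in which any calibrated predictor must either commit to a bias before it has been estimated or pool distinct biases.

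\textbf{Upper bound.} First I round the hint to a grid of width $\delta \approx \eps$, writing $\bar q_t$ for the rounded hint. This costs $O(\delta^2)$ in squared loss, because $\E[\lsq(\bar q,y)-\lsq(q,y)]$ is controlled once we compare against the conditional mean. Within each bucket $b$ I want predictions $p_t$ that are calibrated and whose squared loss is at most that of the best constant, which is at most the loss of $\bar q$ up to $O(\delta^2)$.

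The key decomposition is this. If $p_{[T]}$ is calibrated within each bucket, then for each predicted value $v$ in bucket $b$, the squared loss on those rounds equals $n\cdot\mathrm{Var}+n(v-\by)^2$. Refinement can only help, so the per-bucket loss is at most the loss of the constant $\bar q_b$ plus a $\calK_2^2$-type error term. It therefore suffices to make $\sum_b (n_b/T)\,\calK_2^2(\text{bucket } b)\lesssim \eps^2$ and $\calK_1\le\eps$ overall, where the overall $\calK_1$ is computed after the buckets' predictions are merged.

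This is exactly an imbalanced simultaneous approachability objective. There are $1/\delta$ sub-games, each of which needs an $\ell_2$-type (squared) guarantee, and they must jointly give an $\ell_1$ guarantee on the merged sequence. I would use per-bucket prediction grids offset so that values from different buckets do not collide, which makes $\calK_1$ of the union a weighted sum. Each bucket then runs a $\calK_2$-calibration algorithm on a grid of width $\eps$. Such an algorithm achieves squared error $\approx \eps^2 + (\text{grid size})/n_b$ on $n_b$ rounds, and by concavity summing over buckets gives $\eps^2 + \eps^{-2}/T$. Setting $T\approx\eps^{-3}$ would need $\eps^{-2}/T\le\eps^2$, i.e.\ $T\ge\eps^{-4}$, which is too weak.

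This is the main obstacle, and it is where the imbalance enters. The loss requirement only needs per-bucket \emph{squared} bias of order $\eps^2$, which is a first-moment-type quantity; it does not need full $\calK_2$. Meanwhile calibration needs only $\eps$ in $\ell_1$. I would apply the reduction of \cite{HuTY26} with the loss constraint weighted by $1/\eps$ relative to the calibration constraints. Formally, I would approach a set defined by the vector of per-(bucket, value) biases with mixed norms, using a halfspace oracle realized by randomized rounding between two adjacent grid points. With regret $\sqrt{(\#\text{cells})/T}$ and $\#\text{cells}\approx \eps^{-2}$, the product weighting gives both constraints at $T\approx\eps^{-3}$. The delicate point, which I would verify first, is that the squared-loss excess is a \emph{linear} payoff: $\sum_t (p_t-\bar q_t)(p_t+\bar q_t-2y_t)$ is linear in $y_t$, so it can be folded into the approachability vector.

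\textbf{Lower bound.} I would take $q_t$ uniform over $k\approx 1/\eps$ values and, for each value, draw $y$ i.i.d.\ Bernoulli with bias $q+\sigma\eps$, where $\sigma\in\{\pm1\}$ is random per value. Each hint value appears about $\eps T$ times, so its bias is learnable to accuracy $1/\sqrt{\eps T}$. Consider a predictor with loss excess at most $\eps^2$ relative to $q$. It must effectively track the per-value biases: pooling hint values that have different $\sigma$ increases the loss by order $\eps^2$ times a constant fraction. Calibration then forces each prediction level's empirical frequency to match, and the residual estimation noise contributes calibration error about $\sqrt{1/(\eps T)}$ per cell in $\ell_1$. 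The tradeoff between these effects requires $T\gtrsim\eps^{-3}$. I would make this rigorous via a two-point (Le Cam) argument for each hint value and averaging.
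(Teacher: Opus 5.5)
\paragraph{Upper bound.} Your upper bound does not go through, for two reasons.

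First, the reduction to per-bucket constants loses at first order. In Model~\ref{model:hsp} the labels are adversarial, so there is no conditional mean to compare against. The identity $\lsq(\bar q_t,y_t)-\lsq(q_t,y_t)=(\bar q_t-q_t)^2+2(\bar q_t-q_t)(q_t-y_t)$ has a cross term of size $\Theta(\delta)$ that need not average out. For example, in a bucket starting at $\frac12$, let $y_t=0$ when $q_t=\frac12$ and $y_t=1$ when $q_t\approx\frac12+\delta$, each half the time. The best constant then loses $\approx\frac\delta2\gg\eps^2$ to $q$. Mean-preserving randomized rounding would repair this step, but not the next one.

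Second, and more fundamentally, splitting by hint creates $\approx\eps^{-2}$ (bucket, value) cells, which costs $\eps^{-2}/T$, as you noticed. Your remedy does not fix this: $\sqrt{\#\text{cells}/T}=\sqrt\eps$ at $T=\eps^{-3}$. ``Weighting by $1/\eps$ with mixed norms'' plus an unproved halfspace oracle is not an argument.

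The missing idea is to not bucket by hint at all. The paper approaches only two objectives. The first is the $\calK_1$ vector $\{\va_s(s-y)\}_{s\in\calN}$ over $N\approx\sqrt L/\eps$ prediction values. The second is the \emph{scalar} excess loss $\sum_s\va_s(\ell(s,y)-\ell(q,y))$ against the unrounded hint; your linearity remark is exactly this coordinate.

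The crux is then an imbalanced MLOO. For every mixture $(\alpha,\beta)$, every $\vu\in[-1,1]^{\calN}$, and every $q$, some distribution on two adjacent grid points satisfies $\alpha\inprod{\vu}{\vv^{(1)}}+\beta v^{(2)}\le\alpha/N+4\beta L/N^2$ for both labels. This is Lemma~\ref{lem:oracle_helper}, proved via supporting lines of the proper-loss curve. The relevant adjacent pair is found by a discrete intermediate-value argument on the combined payoff vectors across the grid; it is not simply the pair bracketing $q$.

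This oracle is paired with two-expert mirror descent on $(\alpha,\beta)$, using step size $\eta\approx\eps^2$ and initialization at the loss coordinate. The loss objective then pays only $O(\eta L^2)$, while calibration absorbs the $1/(\eta T)\approx\eps$ term, giving $T\approx L^2\eps^{-3}$.

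\paragraph{Lower bound.} Your intuition is right: bins of size $\approx\eps T$, fluctuations $\approx(\eps T)^{-1/2}$, and pooling is expensive. But the sketch is not a proof, and a per-hint-value Le Cam argument is the wrong tool. Calibration is measured on the learner's own adaptively chosen prediction bins, and a learner can be calibrated without estimating any bias; classical calibration algorithms do exactly that.

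What actually constrains the learner is the excess-loss budget, which the paper uses twice. Write $\nu_v$, $\mu_v=\sum_{t:r_t=v}(q_t-y_t)$, and $\beta_v=\sum_{t:r_t=v}(r_t-q_t)$ for the count, noise, and bias of rounded-prediction bin $v$.
\begin{itemize}
\item The budget caps how much deliberately biased predictions can cancel a bin's noise: $\frac1T\sum_v\E|\beta_v|$ is at most the square root of the expected excess squared loss (Lemma~\ref{lem:lower_k1_r}).
\item Via a charging argument, the budget caps pooling: $\sum_v\E[\nu_v^2]=O(\eps T^2+\eps^{-1})$ (Lemma~\ref{lem:nuv_upper}).
\end{itemize}

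One then needs anti-concentration for martingale sums over adaptively defined bins. The ingredients are $\E[\mu_v^2]\ge\frac3{16}\E[\nu_v]$, $\E[\mu_v^4]\le 66(\E[\nu_v^2]+1)$, and H\"older. Together these give $\frac1T\sum_v\E|\mu_v|\gtrsim(\eps T)^{-1/2}$, which must be $O(\eps)$.

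Your random signs $\sigma$ play no essential role. The paper uses truthful hints $y_t\sim\Bern(q_t)$, so that the loss budget directly forces predictions to stay within $O(\eps)$ of the truth in root-mean-square.
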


\begin{table}[t]
\centering
\renewcommand{\arraystretch}{1.15}
\setlength{\tabcolsep}{8pt}
\begin{tabular}{lc}
\toprule
\textbf{Work} & \textbf{Rate} \\
\midrule
\cite{KuleshovE17}   & $O\Par{\frac{1}{\eps^8}}$ \\
\cite{OkoroaforKS24} & $O\Par{\frac{1}{\eps^4}}$ \\
\addlinespace[2pt]
\cmidrule(lr){1-2}
\addlinespace[2pt]
Theorems~\ref{thm:recalibration},~\ref{thm:lb}
& $\Theta\Par{\frac{1}{\eps^3}}$ \\
\bottomrule
\end{tabular}
\caption{Rates for $(\eps,\eps^2)$-recalibration under $\lsq$.}
\label{tab:comparison_compact}
\end{table}

We remark that our formal upper bound in Theorem~\ref{thm:recalibration} extends to all $L$-Lipschitz proper losses, with a polynomial dependence on $L$ in the iteration count.
In Table~\ref{tab:comparison_compact}, we compare Theorem~\ref{thm:recal_inf} with prior related results from \cite{KuleshovE17, OkoroaforKS24}. Our result improves known bounds on a question posed by \cite{OkoroaforKS24}, on the achievable exponent pairs $(\alpha, \beta) = (T^{-a}, T^{-b})$ in recalibration.

We also prove a $\calK_2$ analog of our recalibration upper bound, that matches Theorem~\ref{thm:recalibration} up to a logarithmic factor. This result uses a different algorithmic primitive from the $\calK_1$ theorem above, based on bounding both $\calK_2$-calibration and excess loss via a family of quadratic tests.

\begin{theorem}[Informal, see \Cref{thm:l2recalibration}]\label{thm:l2_recal_inf}
If $T = \widetilde{\Omega}(\eps^{-3})$, there is an algorithm that achieves $(\eps,\eps^2)$-$\calK_2$-recalibration under $\lsq$ in the setting of Model~\ref{model:hsp}, which is optimal up to logarithmic factors.
\end{theorem}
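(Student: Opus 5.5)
Our plan is to discretize to a grid of $m \approx \eps^{-1}$ prediction values $\{i/m\}$, and to run a Blackwell-approachability-style procedure driven by a family of quadratic test statistics that control $\calK_2$-calibration and excess loss together. First, round the hint $q_t$ to the nearest grid point. Under $\lsq$ this changes the loss by $O(1/m)$ per round, which is too coarse for an $\eps^2$ target. We therefore fix the bucket structure by the hint: bucket $j$ contains the rounds with $q_t \in [j/m,(j+1)/m)$. Within each bucket we maintain a calibration statistic $S_{j}(v) = \sum_{s} (y_s - v)\ind_{p_s = v, q_s \in \text{bucket } j}$ for each grid value $v$.

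The key identity is the following. For the squared loss,
\[
\ell(p,y)-\ell(q,y) = (p-q)(p+q-2y) = (p-q)^2 + 2(p-q)(q-y),
\]
and after summing this becomes a combination of $\sum (v - y_s)$ over cells $(j,v)$, plus the nonnegative term $-(p-q)^2$ appearing with the right sign once $p$ and $q$ are both near $\by$. So we take as potential $\Phi_t = \sum_{j,v} S_j(v)^2$, a quadratic test family. Then $\calK_2$ of $p$ is controlled by $\frac{1}{T}\sqrt{\sum_v (\sum_j S_j(v))^2 / n_T(v)}$, and the excess loss is controlled by $\frac1T\sum_{j,v} |S_j(v)|/m$ plus discretization terms.

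Next, at each round we use the standard randomized two-point forecaster of Foster--Vohra type, restricted to the current hint bucket $j$. We pick adjacent grid values $v, v+1/m$ where $S_j$ changes sign, mixing so that the expected increment of $\Phi$ is $O(1/m^2 + 1)$, i.e.\ $O(1)$ per round. This gives $\E[\Phi_T] \lesssim T$. To obtain $\calK_2$, we need a weighted potential $\sum S^2/n$ rather than $\sum S^2$. For that we use the lazy/doubling version, resetting the statistics on epochs of doubling counts per cell. This costs a $\log T$ factor and is the source of the $\widetilde{\Omega}$.

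Finally, we combine the bounds. With $m^2$ cells, Cauchy--Schwarz gives
\[
\calK_2 \lesssim \sqrt{m^2 \log T / T} + 1/m.
\]
The excess loss is at most $\frac{1}{mT}\sum|S| \lesssim \sqrt{\log T / T}$ plus rounding terms of order $1/m^2$. Setting $m = \eps^{-1}$ and $T = \widetilde{\Theta}(\eps^{-3})$, the term $m\sqrt{1/T}$ balances $\eps$, but the excess loss would need $\sqrt{1/T}\lesssim\eps^2$, i.e.\ $T\gtrsim\eps^{-4}$. This mismatch is the main obstacle. To fix it, we plan to use the imbalanced weighting: weight the loss-test coordinates by $\eps^{-1}$ relative to the calibration coordinates, in the spirit of the imbalanced simultaneous approachability reduction of \cite{HuTY26}. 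The loss tests only require cross-bucket aggregation $\sum_j \frac{j}{m}\sum_v S_j(v)$, which is a single scalar per value $v$. Its regret is therefore $\sqrt{T/m}$ rather than $\sqrt{T}$, giving excess loss of order $\sqrt{1/(mT)}\cdot$polylog $=\eps^2$ at $T=\eps^{-3}$.

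Optimality up to logarithmic factors follows from $\calK_1 \le \calK_2$ together with the lower bound of \Cref{thm:lb}.
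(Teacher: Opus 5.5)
\textbf{There is a genuine gap, and it starts with bucketing the hint.} Your per-round forecaster sees $q_t$ only through its bucket $j$, since it reads only $S_j(\cdot)$. Nature can therefore correlate the position of $q_t$ inside a bucket with $y_t$. For example, on a fair coin set $(q_t,y_t)=(\tfrac12,0)$ or $(\tfrac12+\delta,1)$ with $\delta<\tfrac1m$. Any prediction that depends only on the bucket has expected squared loss at least $\tfrac14$, while the hint has loss $\tfrac14-\tfrac\delta2+\tfrac{\delta^2}2$. So you lose $\Theta(1/m)=\Theta(\eps)$ per round against the exact hint. Your ``discretization terms'' are therefore not $O(1/m^2)$, and no reweighting inside this scheme can recover $\eps^2$.

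Bucketing also breaks calibration at the claimed horizon. With $m^2$ cells, your own bound $\calK_2\lesssim m\sqrt{\log T/T}+\tfrac1m$ equals $\eps^{1/2}$, not $\eps$, at $m=\eps^{-1}$ and $T=\eps^{-3}$; it needs $T\approx\eps^{-4}$.

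The loss identity is also misused. In the form $(p-q)^2+2(p-q)(q-y)$, the square enters with the wrong sign and the cross term involves the hint's residual $q-y$. The useful rearrangement is $2(p-q)(p-y)-(p-q)^2$. There the coefficient on the cell sums $S_j(v)$ is $2(v-q)=O(1)$, not $1/m$, because the Foster--Vohra sign-change step does not keep $v$ near $q$.

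\textbf{The imbalanced fix does not close this.} The claimed $\sqrt{T/m}$ regret rests on the unjustified $1/m$ weight. Any $\sqrt{n_v}$-type control of one scalar per prediction value gives only $\tfrac1T\sum_v\sqrt{n_v}\le\sqrt{(m+1)/T}\approx\eps$ excess loss at $T=\eps^{-3}$, not $\eps^2$. Moreover, your per-round step never enforces a loss constraint at all.

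The missing idea is that neither hint discretization nor imbalance is needed. For $\lsq$, the excess loss is exactly $\Phi(z-q,z-y)$ with $\Phi(u,e)\defeq 2ue-u^2$ and the exact hint $q$. For general Lipschitz proper losses it is at most $\tfrac G2\Phi(f_z(q),z-y)$ by co-coercivity. This is the same quadratic form as the Fenchel representation $T\calK_2^2=\sup_{\vlam}\sum_t\Phi(\vlam_{p_t},p_t-y_t)$.

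Both quantities are then controlled at a fast rate:
\begin{itemize}
\item In each prediction bin $z$, run exponential weights on the exp-concave square loss $(u-(z-y))^2$. The experts are tests $\lambda f(q_t)$: a constant test plus hint-dependent tests, with $\lambda$ on a grid. Via $\Phi(u,e)=e^2-(u-e)^2$, this has only $O(\log(NT))$ regret per bin, not $\sqrt{\cdot}$-type regret.
\item Choose the mixed prediction by a one-step minimax oracle. Thanks to the $-u^2$ term, this keeps the learner's own payoff at $\sum_z\va_z\Phi(u_z,z-y)\le\tfrac1{4N^2}$.
\end{itemize}
The total $\tfrac{T}{4N^2}+O(N\log(NT))$ bounds both $T\,\E[\calK_2^2]$ and $T$ times the expected excess loss. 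Taking $N\approx\eps^{-1}$ and $T\approx\eps^{-3}\log(1/\eps)$ makes both at most $\eps^2$.

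Your optimality argument, via $\calK_1\le\calK_2$ and Theorem~\ref{thm:lb}, is correct and matches the paper.
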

Similarly to the $\calK_1$ case, our formal \Cref{thm:l2recalibration} extends to all Lipschitz proper losses.

The horizon $T = \Omega(\eps^{-3})$ in Theorem~\ref{thm:recal_inf} is natural from the perspective of the simpler problem of \emph{online calibration}, where the only goal is to achieve $\E[\calK_1(p_{[T]}, y_{[T]})] \le \eps$. In this related setting, $\eps$-calibration was known to be achievable in $T = \Omega(\eps^{-3})$ iterations since \cite{FosterV98}, but improving upon this tradeoff was open for nearly 30 years until a recent breakthrough of \cite{DaganDFGKO25}. Our work shows that for the stronger goal of $(\eps, \eps^2)$-recalibration, taking a horizon of $T = \Omega(\eps^{-3})$ is not only sufficient, it is also necessary, establishing a strict separation with online calibration.

Although our lower bound shows that no shorter horizon is possible for recalibration with parameters $(\eps, \eps^2)$, the excess loss requirement of $\eps^2$ may seem artificial. Indeed, if calibration is not a consideration, achieving $\beta = 0$ in Definition~\ref{def:recal} is trivial using $p_{[T]} = q_{[T]}$. Our next application, \emph{calibeating}, motivates this particular $(\eps, \eps^2)$ tradeoff. This problem was introduced by \cite{FosterH23} and expanded upon by \cite{ChenHJL26}, where the goal is stronger than merely not worsening the loss of a hint sequence $q_{[T]}$: instead, its loss must be improved up to the \emph{refinement score} \eqref{eq:refinement}.

\begin{definition}[Calibeating]\label{def:calibeating}
In the setting of Model~\ref{model:hsp}, let $\alpha \ge 0$, and let $\ell: [0, 1] \times \{0, 1\} \to \R$ be a loss. We say that (possibly randomized) $p_{[T]}$ \emph{$\alpha$-calibeats} $(q_{[T]}, y_{[T]})$ under $\ell$ if
\[\E\Brack{\ell\Par{p_{[T]}, y_{[T]}}} \le \E\Brack{\calR_\ell\Par{q_{[T]}, y_{[T]}}} + \alpha,\]
where we define the \emph{$\ell$-refinement score} by
\begin{equation}\label{eq:refinement}\calR_\ell\Par{q_{[T]}, y_{[T]}} \defeq \frac 1 T \sum_{t \in [T]} \ell\Par{\by_T(q_t), y_t}.\end{equation}
\end{definition}

We write
\[
s(q_{[T]})\defeq\Abs{\Brace{q_t:t\in[T]}}
\]
for the number of distinct hint values, and say that $q_{[T]}$ is \emph{$s$-sparse} if $s(q_{[T]})\le s$.

The refinement score \eqref{eq:refinement} is the loss achieved by the \emph{hindsight average predictor} $\by_T(q_t)$, which can observe the label sequence $y_{[T]}$ in advance and output corrected label averages.  Calibeating, i.e., competing with $\by_T(q_t)$ in an online fashion, was motivated by the classical bias-variance decomposition of the Brier score (the loss $\lsq$). Note that this benchmark predictor is infeasible to implement in the online setting of Model~\ref{model:hsp}. Perhaps surprisingly, \cite{FosterH23} showed that competing with the refinement score up to $\alpha = \eps^{2}$ remains asymptotically achievable even in the \emph{online setting} of Model~\ref{model:hsp}, at the same $\approx \eps^{-3}$ scale, as long as $q_{[T]}$ takes on at most $O(\frac 1 \eps)$ distinct values.\footnote{We discuss this requirement further in Remark~\ref{rem:sparsity}, which explains why nontrivial calibeating is impossible without a sparsity assumption on the hint sequence $q_{[T]}$. The regime $O(\frac 1 \eps)$ is natural for predictions on a grid of width $\eps$.}

Interestingly, although the hindsight average predictor $\by_T(q_t)$ is perfectly-calibrated by definition, the online calibeating algorithm of \cite{FosterH23} does not by itself guarantee calibration. Assume the reference sequence $q_{[T]}$ is $s = O(\frac 1 \eps)$ sparse. The state-of-the-art prior result was that $\eps$-calibration and $\eps^2$-calibeating  for $\lsq$ were achievable in isolation with $T  = \widetilde O(\eps^{-3})$, but not simultaneously. The conclusion section of \cite{ChenHJL26} explicitly asks whether having $T \approx \eps^{-3}$ rounds is sufficient to simultaneously achieve the optimal $T^{-1/3} \approx \eps$ calibration rate and $\frac{s\log T}{T} \approx  \eps^2$ calibeating rate, matching the best known bounds for the two objectives separately, and whether analogous simultaneous guarantees extend beyond the Brier loss.

Our formal results Theorems~\ref{thm:proper_loss_main} and \ref{thm:l2_calibeating} answer this question affirmatively. These results apply to general Lipschitz proper losses and all sparsity levels. The following informal statement is the special case for the Brier loss $\lsq$ and sparsity $s = O(\frac 1 \eps)$:

\begin{theorem}[Informal, see \Cref{thm:proper_loss_main,thm:l2_calibeating}]\label{thm:calibeat_inf} If $T = \widetilde{\Omega}(\eps^{-3})$ and $q_{[T]}$ is $s$-sparse for $s=O(\frac 1 \eps)$, there is an algorithm that achieves $\eps$-$\calK_2$-calibration, and hence $\eps$-calibration, together with $\eps^2$-calibeating under $\lsq$ in the setting of Model~\ref{model:hsp}.
\end{theorem}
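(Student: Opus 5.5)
The plan is to compose two black boxes: the online refinement (calibeating) method of \cite{FosterH23}, which produces a hint sequence with near-refinement loss, and our $\calK_2$-recalibration algorithm (Theorem~\ref{thm:l2_recal_inf}), which calibrates that sequence while losing at most $\eps^2$ in squared loss.

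First, run the \cite{FosterH23} refinement procedure online on $(q_{[T]}, y_{[T]})$. For each distinct hint value $v$ it maintains the running label average $\by_{t-1}(v)$ among past rounds with $q_s = v$ and outputs $r_t \defeq \by_{t-1}(q_t)$, suitably smoothed or truncated to be well defined. Their analysis bounds the cumulative regret against the hindsight average on each bucket by $O(\log n_T(v))$, which is a standard follow-the-leader bound for the squared loss. Summing over the at most $s$ buckets gives
\[
\lsq\Par{r_{[T]}, y_{[T]}} \le \calR_{\lsq}\Par{q_{[T]}, y_{[T]}} + O\Par{\frac{s\log T}{T}}.
\]
With $s = O(\frac 1\eps)$ and $T = \widetilde\Omega(\eps^{-3})$, the right-hand error term is at most $\eps^2/2$ once the log factors are absorbed into the $\widetilde\Omega$.

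Second, note that $r_t$ is computable from $q_t$ and past data. So $r_{[T]}$ is itself a valid hint sequence under Model~\ref{model:hsp}: nature's choice of $(q_t, y_t)$ together with the deterministic map induces $(r_t, y_t)$ with the required dependence structure. Feed $r_{[T]}$ as the hint to the $\calK_2$-recalibration algorithm of \Cref{thm:l2recalibration}, run with accuracy parameter a constant fraction of $\eps$. This yields predictions $p_{[T]}$ satisfying
\[
\E\Brack{\calK_2\Par{p_{[T]},y_{[T]}}} \le \eps, \qquad \E\Brack{\lsq\Par{p_{[T]},y_{[T]}} - \lsq\Par{r_{[T]},y_{[T]}}} \le \frac{\eps^2}{2}.
\]
Chaining this with the first step gives $\eps^2$-calibeating. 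Since $\calK_1 \le \calK_2$, it also gives $\eps$-calibration. The recalibration theorem is stated for arbitrary (adaptive) hint sequences, so it applies even though $r_{[T]}$ is produced by an algorithm.

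The main obstacle I expect is matching the hypotheses of the two components exactly. The refinement guarantee must hold pathwise for adversarial $(q,y)$, so that it survives the expectation over $p$'s randomness; it does, because $r$ does not depend on $p$. The recalibration theorem must not require sparsity of its hint, since $r_{[T]}$ can take many distinct values; I will check that \Cref{thm:l2recalibration} indeed allows arbitrary hints in $[0,1]$.

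The logarithmic overheads of the two components also need to be reconciled: the $\widetilde\Omega(\eps^{-3})$ of the $\calK_2$ result and the $s\log T/T$ term. I will do this by choosing $T = C\eps^{-3}\log^c(1/\eps)$ with $C$ and $c$ large enough that both error terms fall below $\eps^2/2$.
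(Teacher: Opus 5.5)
Your proposal is correct and is essentially the paper's argument. The paper also sets $r_t = \by_{t-1}(q_t)$ and bounds its excess over the refinement score via Proposition~\ref{prop:online_offline_gap}; for $\lsq$ one has $G=2$, so no smoothing is needed given the convention $\by_0(v)=0$. It then applies Theorem~\ref{thm:l2recalibration} to $r_{[T]}$ with budget $(\eps,\eps^2/2)$ and uses $\calK_1\le\calK_2$. One small correction: $r_{[T]}$ can depend on past predictions through nature's adaptivity, so the phrase ``$r$ does not depend on $p$'' is not accurate. This is harmless, since the refinement bound holds pathwise for every realization.
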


Theorem~\ref{thm:calibeat_inf} is proven by combining a generalization of the main calibeating result of \cite{FosterH23} (see Proposition~\ref{prop:online_offline_gap}) with our $\calK_1$ and $\calK_2$ recalibration results (Theorems~\ref{thm:recal_inf} and~\ref{thm:l2_recal_inf}). It extends beyond $\lsq$ to the wider family of Lipschitz proper losses (Assumption~\ref{assume:lip}). The formal variants in \Cref{thm:proper_loss_main,thm:l2_calibeating} state the guarantee for arbitrary sparsity $s$, with polynomial dependences on the smoothness parameters and an explicit linear dependence on $s$. We also provide a comparison with prior related works in Table~\ref{tab:comparison_calibeat}.

\begin{table}[t]
\centering
\renewcommand{\arraystretch}{1.15}
\setlength{\tabcolsep}{8pt}
\begin{tabular}{lcc}
\toprule
\textbf{Work} & \textbf{Rate} & \textbf{Multiple sequences?} \\
\midrule
\cite{FosterH23}   & $\widetilde{O}\Par{\frac{1}{\eps^4}}$ & \xmark \\
\cite{LeeNPR22}    & $\widetilde{O}\Par{\frac{1}{\eps^6}}$ & \cmark \\
\cite{HuLSS2025efficient}& $\widetilde{O}\Par{\frac{1}{\eps^6}}$ & \cmark \\
\cite{ChenHJL26}   & $\widetilde{O}\Par{\frac{1}{\eps^4}}$ & \cmark \\
\addlinespace[2pt]
\cmidrule(lr){1-3}
\addlinespace[2pt]
Theorems~\ref{thm:proper_loss_main} and~\ref{thm:multi-proper-loss-main}
& $\widetilde{O}\Par{\frac{1}{\eps^3}}$ & \cmark \\
\bottomrule
\end{tabular}
\caption{Rates for $\eps$-calibration and $\eps^2$-calibeating under $\lsq$, assuming $q_{[T]}$ is $O(\frac 1 \eps)$-sparse. }
\label{tab:comparison_calibeat}
\end{table}

Finally, in \Cref{sec:multi,sec:l2-multi}, we show that Theorems~\ref{thm:recal_inf}, \ref{thm:l2_recal_inf} and~\ref{thm:calibeat_inf} generalize naturally to the setting of Model~\ref{model:mhsp}, a variant of Model~\ref{model:hsp} where there are $m$ hint sequences, and the goal is to compete with the loss of the best hint sequence (in hindsight), or the appropriate refinement score extension. This problem setting is natural when a portfolio of models (e.g., with different hyperparameters) is used to generate predictions. By using a reduction technique from \cite{ChenHJL26} that we term \emph{hint compression} (cf.\ Definition~\ref{problem:hint}), we show in Theorems~\ref{thm:mrecalibration},~\ref{thm:multi-proper-loss-main},~\ref{thm:l2_mrecalibration}, and~\ref{thm:l2_multi_calibeating} that both our $\calK_1$ and $\calK_2$ algorithms extend to the multi-hinted setting at an additive cost of $T = \Omega(\frac{L\log m}{\eps^2})$, suppressing the baseline single-hint terms. This property also holds for the prior works of \cite{LeeNPR22, HuLSS2025efficient, ChenHJL26}, as indicated in Table~\ref{tab:comparison_calibeat}.

\subsection{Technical overview}

In this section, we provide an overview of the proofs of both parts of Theorem~\ref{thm:recal_inf}, as well as the upper bound in Theorem~\ref{thm:l2_recal_inf}. We defer discussions of our other main results to their respective sections, as they are direct applications of Theorems~\ref{thm:recal_inf} and~\ref{thm:l2_recal_inf} combined with reductions present in prior work (appropriately generalized to arbitrary Lipschitz proper losses).

\textbf{Recalibration as imbalanced simultaneous Blackwell approachability.}
Blackwell approachability \cite{Blackwell56} is a classical strategy in online learning, and has a documented connection to regret minimization \cite{AbernethyBH11} and online calibration \cite{FosterV98, Foster99}.
Our recalibration algorithm is based on encoding calibration and loss preservation as two simultaneous Blackwell
approachability objectives. This general strategy of balancing  approachability objectives was the explicit focus of the recent work \cite{HuTY26}, and was implicit in several prior works, including \cite{OkoroaforKS24, OkoroaforKK25}.
Where our work differs is in introducing a variant of the \cite{HuTY26} simultaneous Blackwell approachability framework that explicitly encodes and benefits from asymmetry (Theorem~\ref{thm:sba}). 

We first recall the encoding of recalibration as approachability, as stated in \cite{OkoroaforKS24}.
The learner's action is a distribution \(\va\in\Delta^{\calN}\) over a fine grid
\(\calN\subset[0,1]\), which is later sampled to produce an ordinary scalar prediction. The first approachability objective is induced by
the calibration vector
\begin{equation}\label{eq:v1_intro}
    \vv^{(1)}(\va,q,y)=\{\va_s(s-y)\}_{s\in\calN}.
\end{equation}
Our goal is to force this vector to approach the origin in $\R^{\calN}$, by controlling its (time-averaged) $\ell_1$ norm. Indeed, the $\ell_1$ norm of the average $\vv^{(1)}$ is exactly the calibration error (Definition~\ref{def:calerr}). 

The second approachability objective is induced by the excess
loss
\begin{equation}\label{eq:v2_intro}
    v^{(2)}(\va,q,y)=\sum_{s\in\calN}\va_s(\ell(s,y)-\ell(q,y)),
\end{equation}
whose average is the loss increase relative to the hint, which we want to approach $0$. Thus \((\eps,\eps^2)\)-recalibration is a two-objective approachability problem, but with different target accuracies.

\textbf{Solving imbalanced simultaneous Blackwell approachability.}
We now explain the technical modifications to \cite{HuTY26} that our asymmetry-aware Theorem~\ref{thm:sba} makes, en route to proving the upper bound in Theorem~\ref{thm:recal_inf}. The main result of \cite{HuTY26} can be viewed as a reduction, from solving $m$ simultaneous Blackwell approachability problems with coupled actions, to implementing $m$ disjoint online learning algorithms and a certain oracle. This oracle, termed a \emph{mixture linear optimization oracle} (MLOO, Definition~\ref{def:mloo}), essentially asks to approximately satisfy $m$ halfspace constraints \emph{on average}, where each approachability instance induces one constraint.

The uniform definition of the MLOO in \cite{HuTY26} is incompatible with our asymmetric framing of recalibration. Therefore, in Definition~\ref{def:mloo}, we generalize the definition to handle asymmetric tolerance parameters for the halfspace constraints, with the final tolerance being a weighted average. In Corollary~\ref{cor:sba_imbalanced_e}, we show that this new oracle definition, carefully combined with an imbalanced mirror descent with two experts within the \cite{HuTY26} framework, yields a rate compatible with Theorem~\ref{thm:recal_inf}. More precisely, our mirror descent implementation with step size $\eta \approx \eps^2$ has one expert pay an average regret of $\eta + \frac 1 {\eta T}$, but the other expert only pays $\eta$; this is important because only the calibration objective can afford a $\frac 1 {\eta T} = \eps$ overhead when $T = \Omega(\eps^{-3})$.

The remaining challenge is to implement the asymmetric MLOO for the two objectives
\eqref{eq:v1_intro}, \eqref{eq:v2_intro}. Given a calibration distinguisher \(\vu\in[-1,1]^{\calN}\), mixture weights \((\alpha,\beta)\), and
the current hint \(q\), the oracle must choose a distribution over grid points so that, simultaneously for
both labels \(y\in\{0,1\}\),
\[
    \alpha\sum_{s\in\calN}\va_s u_s(s-y)
    +\beta\sum_{s\in\calN}\va_s(\ell(s,y)-\ell(q,y))
    \le O(\alpha\eps+\beta\eps^2).
\]
Fortunately, such an oracle was provided as Lemma 7 of \cite{OkoroaforKS24} for any $O(1)$-Lipschitz loss $\ell$, although it was not stated that way in the original work. We recall the geometric oracle construction of \cite{OkoroaforKS24} in Lemma~\ref{lem:recal_mloo} in a way that is compatible with our approachability framework, and show it implements an $(\eps, \eps^2)$-MLOO in the sense of Definition~\ref{def:mloo}.

\textbf{$\calK_2$ recalibration via quadratic audits.} 
\Cref{sec:l2recal} gives a companion theorem for $\calK_2$ calibration, via a different route than simultaneous Blackwell approachability, albeit at the cost of a logarithmic factor. The key point is that squared $\calK_2$ has an exact Fenchel representation by quadratic tests of the form $2ue-u^2$, where $e=z-y$ is the calibration residual in a prediction bin. For a proper loss with $G$-Lipschitz link $\psi(p)=\ell(p,0)-\ell(p,1)$, the Savage representation and a one-dimensional co-coercivity inequality show that proper-loss regret is also dominated by these same quadratic tests. We then control the tests by running, in each prediction bin, a square-loss exponential-weights aggregator over a finite audit class containing the constant calibration test and one proper-loss test for each grid prediction. A one-step minimax forecasting oracle keeps the learner's own quadratic payoff at order $\frac 1 {N^2}$ where $N \approx \frac{\sqrt{\max\{1,G\}}} \eps$ is a net size for our predictions. Combining with the $\approx \frac N T$ loss of the aggregation algorithm over the stated horizon gives $\E[\calK_2^2] \le \eps^2$ and $\eps^2$ excess loss.

\textbf{Tightness of $(\eps, \eps^2)$-recalibration.} Our lower bound is patterned off of a recent work by \cite{CollinaLNR26}, but is much simpler. The hard instance cycles through an \(\eps\)-net inside
\([\frac 1 4, \frac 3 4]\), sets \(q_t\) to the current grid value, and draws \(y_t\sim\Bern(q_t)\). After $O(\eps^{-2})$ total cycles, it is simple to see that outputting the ``truthful predictor'' $p_t = q_t$ at each iteration achieves $\eps$-calibration and $0$ excess loss in $T \approx \eps^{-3}$ iterations. Our lower bound shows that no better strategy exists, even allowing $\eps^2$ excess loss.

The proof first shows in Section~\ref{ssec:rounding} that, up to constant factors in the recalibration guarantee, we may as well assume every $p_t$ lies on the same \(\eps\)-net used to define the instance.
For a rounded value \(v\), let \(\nu_v\) be the number of times \(v\) is predicted, and let $\mu_v=\sum_{t:p_t=v}(q_t-y_t)$ 
be its stochastic fluctuation. We prove a 
bias-miscalibration
decomposition in Lemma~\ref{lem:lower_k1_r}, which shows that calibration error must pay for \(\sum_v|\mu_v|\), except for the amount of
bias the algorithm introduces by moving away from \(q_t\). This latter term is controlled by the excess squared loss bound implied by recalibration. 

The rest of the proof lower bounds the empirical fluctuations \(\sum_v|\mu_v|\). The key challenge is that the binnings in these sums are defined in terms of our own rounded predictions $p_{[T]}$, rather than the original hint sequence $q_{[T]}$. We introduce a charging argument in Lemma~\ref{lem:nuv_upper} that shows the difference is negligible under an excess squared loss bound, concluding the proof.

\subsection{Related work}

To our knowledge, our main problem of recalibration (as stated in Definition~\ref{def:recal}) has only previously been formally studied in \cite{KuleshovE17, OkoroaforKS24}. Here we overview some closely-related literature.

\textbf{Calibeating and calibration.} Several other works \cite{FosterH23, LeeNPR22, HuLSS2025efficient, ChenHJL26} have considered simultaneous calibeating and calibration, as in our Theorem~\ref{thm:calibeat_inf}. Although we summarized their rates in Table~\ref{tab:comparison_calibeat}, here we wish to mention for fair comparison that not all pursued our stated goal of $\eps$-$\ell_1$-calibration and $\eps^2$-calibeating. In particular, some parameterize calibration differently \cite{HuLSS2025efficient, ChenHJL26}, pursue a different tradeoff of $(\eps, \eps)$ \cite{HuLSS2025efficient}, and/or are primarily motivated by the multi-sequence setting \cite{LeeNPR22}. Most directly, \cite{ChenHJL26} obtain simultaneous calibeating and calibration guarantees for the Brier loss, and their conclusion asks whether the best separate binary rates can be achieved simultaneously and whether analogous guarantees extend beyond Brier loss; Theorems~\ref{thm:proper_loss_main} and~\ref{thm:l2_calibeating} answer this question for the binary setting and smooth proper losses considered here. These works proceed using a diverse mix of strategies, notably including swap regret minimization primitives \cite{HuLSS2025efficient, ChenHJL26}.

\textbf{Online calibration.} As discussed in Section~\ref{ssec:results}, online calibration in its own right is a very well-studied problem in the online learning literature, inspired by practical problems like forecasting. Famously,~\cite{FosterV98} proved that $\eps$-$\ell_2$-calibration is achievable in $T = O(\eps^{-3})$ iterations, and that $T = \Omega(\eps^{-2})$ is necessary. Recent breakthroughs, first by \cite{QiaoV21} and subsequently by \cite{DaganDFGKO25}, have improved both bounds by constants in the exponent, although a gap persists. A related line \cite{GuptaJNPR22, GargJRR24, LuoSS25} pursues the stronger goal of $\eps$-\emph{multicalibration} \cite{Hebert-JohnsonK18} in an online setting, for which recent work by \cite{CollinaLNR26} proves a lower bound of $T = \Omega(\eps^{-3})$ iterations. Their hard instance is the basis for our lower bound in Theorem~\ref{thm:lb}.

\textbf{Omniprediction.} We briefly mention a recent line of work on \emph{omniprediction}~\cite{GopalanKRSW22, GopalanHKRW23, HuNRY23, GopalanKR23, GargJRR24, HuTY25, DworkHIPT25, OkoroaforKK25, HuTY26}, a closely-related goal to recalibration as in Definition~\ref{def:recal}. In omniprediction, calibration is used as a means to achieve a simultaneous loss minimization guarantee, against a family of comparators. In our setting, the comparator is a single (or multiple) hint sequence(s), and our goal is to achieve a tight asymmetric tradeoff.

\textbf{Concurrent work.} During the preparation of this manuscript, \cite{FosterH26} published several results on proper calibeating. The most direct overlap is that our Proposition~\ref{prop:online_offline_gap} (combined with Lemma~\ref{lem:lip_implies_smooth}) is essentially the same statement as Theorem 12 in \cite{FosterH26}. All other results in this work remain distinct from \cite{FosterH26}. In particular, for the problem of simultaneous calibeating and calibration considered by Section~\ref{sec:calibeat}, Theorem 9 in \cite{FosterH26} retains the $\approx \eps^{-4}$ convergence rate of the prior work \cite{FosterH23}, whereas our Theorem~\ref{thm:proper_loss_main} obtains the improved $\approx \eps^{-3}$ rate for this problem.

\section{Preliminaries}\label{sec:prelims}

\textbf{Notation.} We denote vectors in boldface and scalars in plaintext. When the ambient dimension is clear from context, $\ve_i$ denotes the $i^{\text{th}}$ standard basis vector. We use $\1_d$ and $\0_d$ to denote the all-ones and all-zeroes vectors in $\R^d$. For $\vv \in \R^d$ and $p \ge 1$, $\norm{\vv}_p$ denotes its $\ell_p$ norm. For $n\in\N$, we let $[n]\defeq\Brace{i\in\N\mid 1\leq i\leq n}$. We abbreviate a scalar sequence $\{p_t\}_{t \in [T]}$ by $p_{[T]}$ for shorthand; similarly, a vector sequence $\{\vv_t\}_{t \in [T]}$ is denoted $\vv_{[T]}$. The $0$-$1$ indicator of an event $\calE$ is denoted $\ind_{\calE}$, and $\Bern(p)$ is the Bernoulli distribution with mean $p$.  

Our algorithms, particularly in Section~\ref{sec:blackwell}, use online regret minimization primitives as subroutines (see e.g., \cite{Shalev-Shwartz07} for background on this topic). In these subroutines, we use the following notation for the \emph{Bregman divergence} in a convex function $\omega: \calX \to \R$, where $\calX \subseteq \R^d$:
\[D_\omega(\vx' \| \vx) \defeq \omega(\vx') - \omega(\vx) - \inprod{\nabla \omega(\vx)}{\vx' - \vx}. \]
This is the residual of a first-order Taylor expansion from $\vx$, and always satisfies $D_\omega(\vx' \| \vx) \ge 0$.

\textbf{Sequential prediction.} 
Throughout, we study \emph{hinted sequential prediction} tasks in binary classification, defined formally in Model~\ref{model:hsp}.
Informally, in applications captured by Model~\ref{model:hsp}, the goal of the learner is to output predictions $p_{[T]}$ that capture the labels $y_{[T]}$ accurately, while improving the hint $q_{[T]}$ in calibration or loss minimization. We note that in Section~\ref{sec:multi} we extend Model~\ref{model:hsp} to a setting where nature provides multiple hints (Model~\ref{model:mhsp}), all of which must be competed against.

In our calibeating applications specifically (cf.\ Section~\ref{sec:calibeat}), we parameterize the hint sequence $q_{[T]}$ by its \emph{sparsity}, as is consistent with the literature. More formally, we say a sequence of scalars $q_{[T]}$ is \emph{$s$-sparse} if the sequence takes on at most $s$ distinct values. We give a discussion on the necessity of this parameterization for our calibeating applications in Remark~\ref{rem:sparsity}.

\textbf{Loss minimization.} Our goal will be to output a prediction sequence $p_{[T]}$ that competes with the input hint sequence $q_{[T]}$ with respect to a \emph{loss} $\ell: [0, 1] \times \{0, 1\} \to \R$ that sends prediction-label pairs to scalar values. When $\ell$ is clear from context, we overload notation and define
\[\ell\Par{p_{[T]}, y_{[T]}} \defeq \frac 1 T \sum_{t \in [T]} \ell(p_t, y_t)\]
to be the average loss over the sequence. We focus on the following special family of losses.

\begin{definition}[Proper loss]\label{def:proper}
We say a loss $\ell: [0, 1] \times \{0, 1\} \to \R$ is \emph{proper} if for all $p \in [0, 1]$,
\[p \in \arg\min_{q \in [0, 1]}\Brace{\E_{y \sim \Bern(p)}\Brack{\ell(q, y)}}.\]
\end{definition}

Many common losses in supervised machine learning satisfy Definition~\ref{def:proper},  
including the \emph{squared loss} and \emph{cross entropy (logistic)} loss, defined as:
\begin{equation}\label{eq:special_loss}
\begin{aligned}
\lsq(p, y) \defeq (p - y)^2,\quad \llog(p, y) \defeq (1 - y)\log\Par{\frac 1 {1 - p}} + y\log\Par{\frac 1 p}.
\end{aligned}
\end{equation}

We also use the following assumption to parameterize our recalibration algorithms.

\begin{assumption}\label{assume:lip}
There exists $L \ge 1$ such that $\ell(\cdot, y)$ is $L$-Lipschitz for all $y \in \{0, 1\}$. 
\end{assumption}
\begin{remark}\label{rem:lip}
Our recalibration algorithm uses Assumption~\ref{assume:lip} in its MLOO construction (Lemma~\ref{lem:recal_mloo}, following \cite{OkoroaforKS24}), and we leave it as an interesting open problem to characterize whether such an assumption is necessary for $(\eps, \eps^2)$-recalibration. Notably, the squared loss $\lsq$ is $L=2$-Lipschitz, but the cross entropy loss $\llog$ is unbounded as $p \to 0$ (when $y = 1$) or $p \to 1$ (when $y = 0$). In practice, this issue could be alleviated by using a Lipschitz approximation to $\llog$.
\end{remark}

\section{Imbalanced Simultaneous Blackwell Approachability}\label{sec:blackwell}

In this section, we give a framework that is an \emph{imbalanced} generalization of the main result of \cite{HuTY26}. That work defined a \emph{simultaneous Blackwell approachability} problem (Problem~\ref{prob:sba}), a multi-set generalization of the classical Blackwell approachibility problem and its application in online learning \cite{Blackwell56, AbernethyBH11}. We recall the relevant problem statement of \cite{HuTY26} and develop some preliminaries in Section~\ref{ssec:setup}. We then give our imbalanced generalization in Section~\ref{ssec:framework}, and derive two basic consequences of it that see direct use in our applications.

\subsection{Setup}\label{ssec:setup}

\textbf{Simultaneous Blackwell approachability.} We state a variant of the main problem studied by \cite{HuTY26} in Problem~\ref{prob:sba}. Here we provide a brief informal description of our variant. 

In this problem, there are $m$ vector-valued functions $\vv^{(i)}$ each with three arguments: an action $\va \in \calA$ chosen by the learner, and a context $q \in \calQ$ and response $y \in \calY$ chosen by nature. There are also $m$ sets $\calV^{(i)}$, the $i^{\text{th}}$ of which is implicitly defined by a ``distinguisher set'' $\calU^{(i)}$ (formally, see \eqref{eq:simul_reg}).  The goal is to choose a sequence $\va_{[T]} \in \calA^T$ such that for any sequences $q_{[T]} \in \calQ^T$, $y_{[T]} \in \calY^T$, each average $\frac 1 T \sum_{t \in [T]} \vv^{(i)}(\va_t, q_t, y_t)$ approximately lies in the $i^{\text{th}}$ set $\calV^{(i)}$. When $m = 1$, this is precisely (a contextual variant of) the classical Blackwell approachabilty as derived in \cite{AbernethyBH11}. This special case is reviewed in Section 3.1, \cite{HuTY26}, see also Problem 2 of that work.

\begin{problem}[Imbalanced simultaneous Blackwell approachability]\label{prob:sba}
Let $a, m \in \N$, let $\calA \subseteq \R^a$ and $\calQ, \calY \subseteq \R$ be compact and convex, and let $\veps \defeq \{\eps^{(i)}\}_{i \in [m]} \in \R^m_{\ge 0}$. For all $i \in [m]$, let $\calU^{(i)} \subseteq \calH^{(i)}$ 
where $\calH^{(i)}$ is a Hilbert space, and define $\calV^{(i)} \defeq \{\vv \in \calH^{(i)} \mid \sup_{\vu \in \calU^{(i)}}\inprod{\vu}{\vv} \le 0 \}$.
Let $\vv^{(i)}: \calA \times \calQ \times \calY \to \calH^{(i)}$ be linear in its first argument, for all $i \in [m]$. 
Our goal is to observe online sequences $q_{[T]} \in \calQ^T$, $y_{[T]} \in \calY^T$ and to choose a possibly randomized sequence $\va_{[T]} \in \calA^T$ so that $\va_t$ depends only on $q_{[t]}$ and $y_{[t - 1]}$, while making, simultaneously for all $i \in [m]$, the expected excess
\begin{equation}\label{eq:simul_reg} \E\Brack{\sup_{\vu\sps i \in \calU^{(i)}} \inprod{\vu\sps i}{\frac 1 T \sum_{t \in [T]} \vv^{(i)}(\va_t, q_t, y_t)}} - \eps^{(i)}\end{equation}
as small as possible, where the expectation is over the learner's randomness.
\end{problem}

We highlight a few distinctions of Problem~\ref{prob:sba} when compared to previous versions. 

First, some works require the $\vv^{(i)}$ to bilinear functions of its inputs, to leverage certain equivalences between satisfiability notions. However, linearity in the second input is not needed in our applications. 
Further, Problem~\ref{prob:sba} as stated in \cite{HuTY26} had all $\eps^{(i)} = \eps$. Our applications use an imbalanced variant: for example, Section~\ref{sec:recal} sets $(\eps^{(1)}, \eps^{(2)}) = (\eps, \eps^2)$ for recalibration (Definition~\ref{def:recal}).

The main challenge in Problem~\ref{prob:sba} is to control all $m$ of the quantities in \eqref{eq:simul_reg} at once, by using a coupled sequence of actions $\va_{[T]}$. Theorem 2 of \cite{HuTY26} gives a sufficient condition for Problem~\ref{prob:sba} to be solvable: each distinguisher set $\calU^{(i)}$ must admit a \emph{no-regret learner}, and the $m$ sets to be approached $\calV^{(i)}$ must jointly satisfy a certain oracle requirement, which we now give.

\begin{definition}[Mixture linear optimization oracle]\label{def:mloo}
In the setting of Problem~\ref{prob:sba}, we call $\oracle$ an $\veps$-\emph{mixture linear optimization oracle} (MLOO) if on inputs $\vw \in \Delta^m$, $\{\vu^{(i)}\}_{i \in [m]} \in \prod_{i \in [m]} \calU^{(i)}$, and $q \in \calQ$, the oracle outputs $\va \in \calA$ satisfying
\begin{equation}\label{eq:mix_halfspace} \sum_{i\in[m]} \vw_i\Par{\inprod{\vu^{(i)}}{\vv^{(i)} (\va, q, y)}- \eps^{(i)}}\le 0 \text{ for all } y \in \calY.\end{equation}
\end{definition}

Definition~\ref{def:mloo} has an intuitive interpretation when specialized to a point mass $\vw = \ve_i$. In this case, the oracle requirement becomes: find $\va$ such that
\begin{equation}\label{eq:one_halfspace}\inprod{\vu^{(i)}}{\vv^{(i)}(\va, q, y)} \le \eps^{(i)} \text{ for all } y \in \calY.\end{equation}
As an example, this condition is attainable (including the exact case $\veps=\0_2$) whenever $\calV^{(i)}$ is ``approachable'' in the sense of Blackwell (see Definition 2, \cite{AbernethyBH11}). This is because \cite{Blackwell56} shows approachability is equivalent to \emph{halfspace satisfiability}, i.e., \eqref{eq:one_halfspace} with $\eps^{(i)} = 0$. This relationship is discussed in detail by Theorem 3 and Lemma 1, \cite{AbernethyBH11}.

Definition~\ref{def:mloo} asks that the inequalities \eqref{eq:one_halfspace} can be satisfied \emph{on average} as specified by any mixture weights $\vw \in \Delta^m$, allowing for slacks given by $\veps$. In general, this is a strictly stronger requirement than asking each $\calV^{(i)}$ to be approachable in isolation (see a counterexample in Lemma 6, \cite{HuTY26}). Consequently, controlling \eqref{eq:simul_reg} can be strictly harder than running $m$ independent approachability procedures, due to the requirement of coupled actions. In Theorem~\ref{thm:sba}, we derive a new sufficient condition for solving Problem~\ref{prob:sba}, specialized to the imbalanced setting.

\textbf{Online learning.} We summarize a standard result from online learning and stochastic convex optimization, for use in proving Theorem~\ref{thm:sba} and our later applications.

\begin{lemma}[Theorem 4.2, \cite{Bubeck15}]\label{lem:mirror}
Fix $T\in\N$ and a set $\xset\subseteq\R^d$ of diameter at most $R$ under a norm $\norm{\cdot}$. Let $r:\xset\to\R$ be $1$-strongly convex in $\norm{\cdot}$,\footnote{This means $D_r(\vx' \| \vx) \ge \half \norm{\vx - \vx'}^2$ for all $\vx, \vx' \in \xset$.}
initialize at $\vx_1 \in \xset$, and for each round $t\in[T]$, let $\vg_t$ be a vector with $\norm{\vg_t}_* \le L$. For any $\eta>0$, define a sequence of iterates by
\[
\vx_{t+1}\gets \arg\min_{\vx\in\xset}\Brace{\inprod{\eta\vg_t-\nabla r(\vx_t)}{\vx}+r(\vx)},\text{ for all } t \in [T].
\]
Then, for all $\vx \in \xset$,
\[
\frac 1 T \sum_{t\in[T]}\inprod{\vg_t}{\vx_t-\vx}
\le
\frac{\eta L^2}{2} + \frac{D_r(\vx \| \vx_1)}{\eta T}.
\]
\end{lemma}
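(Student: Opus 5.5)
The statement is the standard mirror descent regret bound; I would prove it through the usual three-point argument for the proximal step.

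\textbf{Step 1: optimality condition.} Since $\vx_{t+1}$ minimizes the convex function $\inprod{\eta\vg_t-\nabla r(\vx_t)}{\vx}+r(\vx)$ over $\xset$, the first-order condition gives, for every $\vx\in\xset$,
\[
\inprod{\eta\vg_t+\nabla r(\vx_{t+1})-\nabla r(\vx_t)}{\vx-\vx_{t+1}}\ge 0 .
\]

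\textbf{Step 2: three-point identity.} I would use
\[
\inprod{\nabla r(\vx_t)-\nabla r(\vx_{t+1})}{\vx-\vx_{t+1}} = D_r(\vx\|\vx_{t+1})+D_r(\vx_{t+1}\|\vx_t)-D_r(\vx\|\vx_t).
\]
Combined with Step 1, this yields
\[
\eta\inprod{\vg_t}{\vx_{t+1}-\vx}\le D_r(\vx\|\vx_t)-D_r(\vx\|\vx_{t+1})-D_r(\vx_{t+1}\|\vx_t).
\]

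\textbf{Step 3: the stability term.} This is the only step needing care. I would split
\[
\eta\inprod{\vg_t}{\vx_t-\vx}=\eta\inprod{\vg_t}{\vx_t-\vx_{t+1}}+\eta\inprod{\vg_t}{\vx_{t+1}-\vx}.
\]
The first piece is bounded by Hölder and Young:
\[
\eta\inprod{\vg_t}{\vx_t-\vx_{t+1}}\le \eta L\norm{\vx_t-\vx_{t+1}}\le \frac{\eta^2L^2}{2}+\frac12\norm{\vx_t-\vx_{t+1}}^2 .
\]
By $1$-strong convexity, $\frac12\norm{\vx_t-\vx_{t+1}}^2\le D_r(\vx_{t+1}\|\vx_t)$, so this quadratic term is cancelled by the negative Bregman term from Step 2. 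Hence
\[
\eta\inprod{\vg_t}{\vx_t-\vx}\le \frac{\eta^2L^2}{2}+D_r(\vx\|\vx_t)-D_r(\vx\|\vx_{t+1}).
\]

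\textbf{Step 4: telescope.} Summing over $t\in[T]$, the Bregman terms collapse to $D_r(\vx\|\vx_1)-D_r(\vx\|\vx_{T+1})\le D_r(\vx\|\vx_1)$, using nonnegativity of the divergence. Dividing by $\eta T$ gives
\[
\frac1T\sum_{t\in[T]}\inprod{\vg_t}{\vx_t-\vx}\le \frac{\eta L^2}{2}+\frac{D_r(\vx\|\vx_1)}{\eta T}.
\]

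\textbf{Technicalities.} The diameter bound $R$ is not needed for this form of the bound; it only matters when the divergence $D_r(\vx\|\vx_1)$ is bounded afterwards. If $r$ is not differentiable at the boundary of $\xset$, one needs $\nabla r(\vx_{t+1})$ to exist, as for entropy on interior iterates. Alternatively, the argument can be phrased with subgradients, following the standard treatment in \cite{Bubeck15}.
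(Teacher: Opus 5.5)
Your proof is correct. Steps 1--4 check out line by line: the three-point identity is right, the Young step exactly absorbs $\half\norm{\vx_t-\vx_{t+1}}^2$ into $D_r(\vx_{t+1}\|\vx_t)$ via the footnote's definition of strong convexity, and the leftover $-D_r(\vx\|\vx_{T+1})$ after telescoping is nonpositive.

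The paper gives no proof of its own and simply cites Theorem 4.2 of \cite{Bubeck15}, so the relevant comparison is with that argument. There the method is written in mirror-map form. One passes through an unconstrained dual-space point $\vy_{t+1}$ with $\nabla r(\vy_{t+1}) = \nabla r(\vx_t) - \eta\vg_t$, then takes a Bregman projection back onto $\xset$. The per-round bound comes from the three-point identity applied to $(\vx,\vx_t,\vy_{t+1})$, together with the generalized Pythagorean inequality for the projection. The stability term appears as $D_r(\vx_t\|\vy_{t+1}) - D_r(\vx_{t+1}\|\vy_{t+1})$ and is bounded by the same H\"older--Young step you use.

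Your route differs in working directly with the first-order optimality condition of the one-step proximal problem, which is exactly the update written in the statement. This makes the argument self-contained. It also avoids needing $r$ to extend to a mirror map on an open domain on which $\nabla r$ can be inverted. Finally, it yields the bound with $D_r(\vx\|\vx_1)$ for an arbitrary initialization. The cited theorem instead initializes at a minimizer of $r$ and reports the bound through $\sup_{\vx} r(\vx) - r(\vx_1)$. Your form is the one the paper actually uses, e.g.\ $D_r(\ve_1\|\ve_2)=1$ with $\vw_1=\ve_2$ in Corollary~\ref{cor:sba_imbalanced_e}.

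One hypothesis your Step 1 needs that the statement leaves implicit is that $\xset$ is convex, and closed so the argmin exists. Without convexity the variational inequality characterizing $\vx_{t+1}$ is unavailable. This is harmless here, since every application in the paper (the simplex, boxes, Euclidean balls) satisfies it.
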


In what follows, we only apply
Lemma~\ref{lem:mirror} in the standard regime of quadratic regularization over a Euclidean ball of radius $R$ (projected gradient descent). These applications use
\begin{equation}\label{eq:quadreg}r(\vx) = \half\norm{\vx}_2^2,\quad D_r(\vx \| \vx') = \half \norm{\vx - \vx'}_2^2,\quad \norm{\cdot} = \norm{\cdot}_* = \norm{\cdot}_2. \end{equation}

\subsection{Main framework}\label{ssec:framework}

We are now ready to state our main framework in Algorithm~\ref{alg:sba}, and analyze it in Theorem~\ref{thm:sba}. The algorithm is displayed in sequence notation for compactness. Operationally, after $\va_t$ is computed, $\vp_t$ is sampled and output before $y_t$ is revealed; the feedback and update lines are then executed after the label is revealed (and are written at the start of the next iteration in the display).

\begin{algorithm2e}\label{alg:sba}
\DontPrintSemicolon
\caption{$\bsa(q_{[T]}, y_{[T]}, \{\alg^{(i)}\}_{i \in [m]}, \oracle, \vw_1, r, \eta)$}
\textbf{Input:} Sequences $q_{[T]} \in \calQ^T$, $y_{[T]} \in \calY^T$, online learners $\{\alg^{(i)}\}_{i \in [m]}$ satisfying \eqref{eq:one_reg_bound}, $\veps$-MLOO $\oracle$ (following notation in Problem~\ref{prob:sba}, Definition~\ref{def:mloo}), initial $\vw_1 \in \Delta^m$, regularizer $r: \Delta^m \to \R$, $\eta > 0$\;
\textbf{Output:}  $\vp_{[T]} \in \calA^T$ such that each $\vp_t$ is output after observing $q_t$ and before observing $y_t$\;
$\vu^{(i)}_1 \gets \alg^{(i)}(\{\})$ for all $i \in [m]$\; \tcp*{Initialize each $\vu^{(i)}_1$ as $\alg^{(i)}$ does before observing any examples.}
$\va_1 \gets \oracle(\vw_1, \{\vu_1^{(i)}\}_{i \in [m]}, q_1)$\;
\For{$2 \le t \le T$}
{
$\vp_{t - 1} \gets $ a random element of $\calA$, sampled using fresh randomness conditionally on $\va_{t - 1}$, such that $\E[\vp_{t - 1} \mid \va_{t - 1}] = \va_{t - 1}$\;\label{line:subsample}
$\vv^{(i)}_{t - 1} \gets \vv^{(i)}(\vp_{t - 1}, q_{t - 1}, y_{t - 1})$ for all $i \in [m]$\;
$\tvg_{t - 1} \gets $ vector in $\R^m$ such that $[\tvg_{t - 1}]_i = \langle \vu_{t - 1}^{(i)}, \vv_{t - 1}^{(i)}\rangle - \eps^{(i)}$ for all $i \in [m]$\;\label{line:gdef}
$\vu^{(i)}_t \gets \alg^{(i)}(\vp_{[t - 1]}, q_{[t - 1]}, y_{[t - 1]})$ for all $i \in [m]$\;
$\vw_t \gets \arg\min_{\vw \in \Delta^m}\{-\inprod{\eta \tvg_{t - 1} + \nabla r(\vw_{t - 1})}{\vw} + r(\vw)\}$\;\label{line:reg}
$\va_t \gets \oracle(\vw_t, \{\vu_t^{(i)}\}_{i \in [m]}, q_t)$\label{line:oracle}
}
$\vp_{T} \gets $ a random element of $\calA$, sampled using fresh randomness conditionally on $\va_T$, such that $\E[\vp_T \mid \va_T] = \va_T$\;
\codeReturn $\vp_{[T]}$
\end{algorithm2e}

\begin{theorem}[Imbalanced simultaneous Blackwell approachability]\label{thm:sba}
In the setting of Problem~\ref{prob:sba}, let $\oracle$ be an $\veps$-MLOO. Also assume that for every $i\in[m]$ and horizon $T\in\N$, there is an online learner $\alg^{(i)}$ which, on inputs $(\va_{[T]}, q_{[T]}, y_{[T]})\in\calA^T\times\calQ^T \times \calY^T$, outputs $\vu_{[T]}^{(i)}$ with each $\vu_t^{(i)}\in\calU^{(i)}$ depending only on $\va_{[t-1]}$, $q_{[t - 1]}$ and $y_{[t-1]}$, and satisfies
    \begin{equation}\label{eq:one_reg_bound}\sup_{\vu^{(i)} \in \calU^{(i)}} \sum_{t \in [T]} \inprod{\vv^{(i)}(\va_t, q_t, y_t)}{\vu^{(i)} - \vu^{(i)}_t} \le \reg^{(i)}(T),\end{equation}
for some $\reg^{(i)}:\N\to\R_{\ge0}$. Finally let $\eta > 0$, let $r:\Delta^m\to\R$ be $1$-strongly convex with respect to a norm $\norm{\cdot}$, and suppose that for some $L,B>0$,
    \begin{equation}\label{eq:width_bound}\Abs{\inprod{\vv^{(i)}(\va, q, y)}{\vu^{(i)}}} \le L,\quad \eps^{(i)} \le L,\end{equation}
for all $i\in[m]$, $(\va,q, y)\in\calA\times\calQ \times \calY$, and $\vu^{(i)}\in\calU^{(i)}$, and
\[
\norm{\Par{\inprod{\vu^{(i)}}{\vv^{(i)}(\va,q,y)}-\eps^{(i)}}_{i\in[m]}}_* \le B
\]
for all $(\va,q,y)\in\calA\times\calQ\times\calY$ and $\vu^{(i)}\in\calU^{(i)}$, where $\norm{\cdot}_*$ is the dual norm. Then, for any sequences $q_{[T]} \in \calQ^T$,  $y_{[T]}\in\calY^T$, Algorithm~\ref{alg:sba} produces $\vp_{[T]}\in\calA^T$ such that $\vp_t$ depends only on $q_{[t]}$, $y_{[t - 1]}$, and
\begin{equation}\label{eq:ith_regret}\E\Brack{\sup_{\vu^{(i)} \in \calU^{(i)}} \inprod{\vu^{(i)}}{\frac 1 T \sum_{t \in [T]} \vv^{(i)}(\vp_t, q_t, y_t)}} \le \eps^{(i)} + \frac{\reg^{(i)}(T)}{T}+ \frac{\eta B^2}{2}+ \frac{D_r(\ve_i\|\vw_1)}{\eta T},\end{equation}
for all $i \in [m]$.
\end{theorem}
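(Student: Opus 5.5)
Fix $i\in[m]$ and write $g_t\in\R^m$ for the vector with coordinates $[g_t]_j \defeq \langle \vu_t^{(j)}, \vv^{(j)}(\vp_t,q_t,y_t)\rangle - \eps^{(j)}$; this is exactly $\tvg_t$ in Line~\ref{line:gdef}. The plan is to bound the time-average of $[g_t]_i$ from above by combining the two regret guarantees with the MLOO, and then convert that average into the supremum appearing in \eqref{eq:ith_regret}.

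\textbf{Step 1 (learner regret).} By \eqref{eq:one_reg_bound}, applied to the realized sequence $\vp_{[T]}$ (the $\vu_t^{(i)}$ are computed from $\vp_{[t-1]}$), we have
\[\sup_{\vu\in\calU^{(i)}}\inprod{\vu}{\frac1T\sum_t \vv^{(i)}(\vp_t,q_t,y_t)} - \eps^{(i)} \le \frac1T\sum_{t}[g_t]_i + \frac{\reg^{(i)}(T)}{T}\]
pointwise. Taking expectations therefore reduces the claim to bounding $\E[\frac1T\sum_t [g_t]_i]$.

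\textbf{Step 2 (weight regret).} Line~\ref{line:reg} is mirror descent on $\Delta^m$ with gains $g_t$, i.e.\ with losses $-g_t$. Lemma~\ref{lem:mirror} with comparator $\ve_i$ and $\norm{-g_t}_*\le B$ gives
\[\frac1T\sum_t [g_t]_i - \frac1T\sum_t\inprod{\vw_t}{g_t} \le \frac{\eta B^2}{2}+\frac{D_r(\ve_i\|\vw_1)}{\eta T}\]
deterministically. The lemma's indexing matches the algorithm, since $\vw_t$ is determined before $g_t$ is revealed.

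\textbf{Step 3 (oracle plus unbiased sampling).} It remains to show $\E\inprod{\vw_t}{g_t}\le 0$. Condition on the history through the choice of $\va_t$, i.e.\ on $\vw_t$, $\vu_t^{(j)}$, $q_t$, and nature's $y_t$. Nature's $y_t$ may depend on the past but not on the fresh sampling of $\vp_t$, and $\E[\vp_t\mid \va_t]=\va_t$. By linearity of $\vv^{(j)}$ in its first argument, the conditional expectation of $\inprod{\vw_t}{g_t}$ equals $\sum_j [\vw_t]_j(\langle \vu_t^{(j)},\vv^{(j)}(\va_t,q_t,y_t)\rangle-\eps^{(j)})$, which is at most $0$ by \eqref{eq:mix_halfspace} because $\va_t$ is the MLOO output. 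The tower rule then yields the bound.

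Summing Steps 1--3 gives \eqref{eq:ith_regret}. The causality claim is immediate, since $\va_t$ uses $q_t$ and only $y_{[t-1]}$ and $\vp_{[t-1]}$. The main subtlety is Step 3's conditioning: the oracle guarantee holds for all $y$, so adversarial adaptivity of $y_t$ to $\va_t$ (but not to $\vp_t$) is harmless. One must also be careful to state the regret bounds for the randomized $\vp_{[T]}$ pathwise before taking expectations. The width assumption \eqref{eq:width_bound} is only needed to make the $g_t$ bounded, so that expectations are well defined.
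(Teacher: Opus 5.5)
Your proposal is correct and follows essentially the same route as the paper's proof. The paper also first bounds the averaged weight-layer term by combining the mirror-descent regret on $\Delta^m$ (comparator $\ve_i$, dual-norm bound $B$) with $\E[\inprod{\vw_t}{\tvg_t}]=\E[\inprod{\vw_t}{\vg_t}]\le 0$, which follows from linearity, fresh sampling of $\vp_t$, and the MLOO condition holding for every $y$; it then adds the pathwise regret bound of $\alg^{(i)}$ applied to $\vp_{[T]}$. Your ordering of these steps is only a cosmetic difference.
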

\begin{proof}
We adopt the notation of Algorithm~\ref{alg:sba} in this proof. Note that $\va_t$ is chosen on Line~\ref{line:oracle} before $y_t$ is revealed. Conditional on $\va_t$, the iterate $\vp_t$ is sampled using fresh randomness independent of the current history and the realized pair $(q_t,y_t)$. We also let $\vg_t \in \R^m$ satisfy
\[[\vg_t]_i = \inprod{\vu_t^{(i)}}{\vv^{(i)}(\va_t, q_t, y_t)} - \eps^{(i)} \text{ for all } i \in [m],\]
so that $\tvg_t$ is conditionally unbiased for $\vg_t$, given the current history and the realized pair $(q_t,y_t)$, by linearity of each $\vv^{(i)}$ in its first argument.

The first step is to prove that for all $i \in [m]$,
\begin{equation}\label{eq:middle_layer}
\E\Brack{\frac {1} {T} \sum_{t \in [T]} \Par{\inprod{\vu_t^{(i)}}{\vv_t^{(i)}}-\eps^{(i)}}} \le \frac{\eta B^2}{2} + \frac{D_r(\ve_i\|\vw_1)}{\eta T}.
\end{equation}
Indeed, this follows from plugging $\vw \gets \ve_i$ into
\begin{equation}\label{eq:martingale}
\E\Brack{\frac 1 T \sum_{t \in [T]} \inprod{\tvg_t}{\vw}} = \E\Brack{\frac 1 T \sum_{t \in [T]} \inprod{\tvg_t}{\vw_t}} + \E\Brack{\frac 1 T \sum_{t \in [T]} \inprod{\tvg_t}{\vw - \vw_t}} \le  \frac{\eta B^2}{2}  + \frac{D_r(\vw\|\vw_1)}{\eta T}.
\end{equation}
In the only inequality, we bounded each $\E[\inprod{\vw_t}{\tvg_t}] = \E[\inprod{\vw_t}{\vg_t}] \le 0$ using the oracle condition \eqref{eq:mix_halfspace}, which is valid for every $q_t$, $y_t$ defining $\vg_t$. The equality follows from the conditional-unbiasedness statement above. We also applied the regret bound from Lemma~\ref{lem:mirror} to the sequence $-\tvg_{[T]}$, using the assumed dual-norm bound $B$. In particular, the first bound in \eqref{eq:width_bound} holds for any $\va \gets \vp_t \in \calA$.

To conclude, we combine \eqref{eq:middle_layer} with \eqref{eq:one_reg_bound} using $\va_{[T]} \gets \vp_{[T]}$: for all $i\in[m]$,
\begin{align*}
\E\Brack{\sup_{\vu^{(i)} \in \calU^{(i)}} \inprod{\vu^{(i)}}{\frac 1 T \sum_{t \in [T]} \vv_t^{(i)}}} &\le \E\Brack{\frac 1 T \sum_{t \in [T]} \inprod{\vu_t^{(i)}}{\vv_t^{(i)}}} + \E\Brack{\sup_{\vu^{(i)} \in \calU^{(i)}} \frac 1 T \sum_{t \in [T]}\inprod{\vu^{(i)} - \vu_t^{(i)}}{\vv_t^{(i)}}} \\
&\le \eps^{(i)} + \frac{\reg^{(i)}(T)}{T}+ \frac{\eta B^2}{2} + \frac{D_r(\ve_i\|\vw_1)}{\eta T}.
\end{align*}
\end{proof}

We give a convenient corollary of Theorem~\ref{thm:sba}, which applies the quadratic regularizer \eqref{eq:quadreg} to a setting with $m = 2$ approachability sets, and enables tight control for our recalibration applications. 

\begin{corollary}\label{cor:sba_imbalanced_e}
In the setting of Theorem~\ref{thm:sba}, choosing $\vw_1 \gets \ve_2$ and $r = \half\norm{\cdot}_2^2$ in Algorithm~\ref{alg:sba} yields
\begin{align*}
\E\Brack{\sup_{\vu^{(1)} \in \calU^{(1)}} \inprod{\vu^{(1)}}{\frac 1 T \sum_{t \in [T]} \vv^{(1)}(\vp_t, q_t, y_t)}} &\le \eps^{(1)} + \frac{\reg^{(1)}(T) }{T} + 4\eta L^2 + \frac{1}{\eta T}, \\
\E\Brack{\sup_{\vu^{(2)} \in \calU^{(2)}} \inprod{\vu^{(2)}}{\frac 1 T \sum_{t \in [T]} \vv^{(2)}(\vp_t, q_t, y_t)}} &\le \eps^{(2)} + \frac{\reg^{(2)}(T) }{T} + 4\eta L^2.
\end{align*}
\end{corollary}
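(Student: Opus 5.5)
This is a direct specialization of Theorem~\ref{thm:sba} with $m = 2$, $r = \half\norm{\cdot}_2^2$ restricted to $\Delta^2$ (which is $1$-strongly convex in $\norm{\cdot}_2$), and $\vw_1 = \ve_2$. The plan is to bound the two $m$-dependent terms in \eqref{eq:ith_regret}, namely $\frac{\eta B^2}{2}$ and $\frac{D_r(\ve_i\|\vw_1)}{\eta T}$, for each $i \in \{1, 2\}$.

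\textbf{Bounding $B$.} Since $\norm{\cdot}_* = \norm{\cdot}_2$ here, we need a bound on the $\ell_2$ norm of the vector
\[\Par{\inprod{\vu^{(i)}}{\vv^{(i)}(\va,q,y)}-\eps^{(i)}}_{i\in[2]}.\]
By \eqref{eq:width_bound}, each coordinate is at most $|\inprod{\vu^{(i)}}{\vv^{(i)}}| + \eps^{(i)} \le 2L$ in absolute value, using $\eps^{(i)} \ge 0$. Hence $B^2 \le 2 \cdot (2L)^2 = 8L^2$, and so $\frac{\eta B^2}{2} \le 4\eta L^2$. This is the common term appearing in both displayed bounds.

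\textbf{Bregman terms.} With $r = \half\norm{\cdot}_2^2$ we have $D_r(\ve_i \| \ve_2) = \half\norm{\ve_i - \ve_2}_2^2$. For $i = 2$ this equals $0$, so the last term of \eqref{eq:ith_regret} vanishes; this gives the second inequality. For $i = 1$ it equals $\half \cdot 2 = 1$, which gives the term $\frac{1}{\eta T}$ in the first inequality.

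\textbf{Remaining checks.} One should verify that the update on Line~\ref{line:reg} of Algorithm~\ref{alg:sba} is exactly the mirror descent step of Lemma~\ref{lem:mirror} applied to the losses $-\tvg_t$ over $\Delta^2$. In the proof of Theorem~\ref{thm:sba} the dual-norm bound is needed for the sampled $\tvg_t$, which are evaluated at $\vp_t \in \calA$. This is fine, since \eqref{eq:width_bound} holds for all actions in $\calA$. There is no real obstacle beyond this bookkeeping. The only subtle point is the asymmetry: because the initialization is placed at $\ve_2$, the second objective incurs no $\frac{1}{\eta T}$ term.
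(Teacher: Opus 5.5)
Your proposal is correct and follows the paper's proof: you bound each coordinate of $\tvg_t$ by $2L$ using \eqref{eq:width_bound}, which gives $B = 2\sqrt{2}L$ and hence $\frac{\eta B^2}{2} \le 4\eta L^2$. You then compute $D_r(\ve_1\|\ve_2) = 1$ and $D_r(\ve_2\|\ve_2) = 0$ for $r = \half\norm{\cdot}_2^2$, exactly as the paper does.
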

\begin{proof}
The regularizer $r=\frac12\norm{\cdot}_2^2$ is $1$-strongly convex with respect to $\norm{\cdot}_2$, and the hypotheses of Theorem~\ref{thm:sba} hold with $B=2\sqrt{2}L$, since \eqref{eq:width_bound} implies every coordinate of $\tvg_t$ has magnitude at most $2L$. Also, $D_r(\ve_1 \| \ve_2) = 1$ and $D_r(\ve_2 \| \ve_2) = 0$.
\end{proof}

\section{Recalibration}\label{sec:recal}

In this section, we apply Theorem~\ref{thm:sba} to derive a new algorithm for recalibration of an online predictor $q_{[T]} \in [0, 1]^T$ (Definition~\ref{def:recal}) under a proper loss $\ell$. 
We instate our framework with sets
\begin{equation}\label{eq:ab_def_recal}
\calA \defeq \Delta^\calN,\qquad \calQ = \calY \defeq [0,1],
\end{equation}
where
\begin{equation}\label{eq:net_notation}
\calN\defeq \Brace{\frac i N}_{0 \le i \le N},\quad N \in \N
\end{equation}
is an $\eps$-net of $[0,1]$, for some fixed $\eps \in (0, 1)$. This requires $N=\Omega(1/\eps)$; in the proof of Theorem~\ref{thm:recalibration}, we use the finer choice $N=\Theta(\sqrt{L}/\eps)$ to control both approachability coordinates. We view $\calA$ as actions corresponding to distributions over predictions in $\calN$, and our random sampling in Line~\ref{line:subsample} will simply sample a random strategy $p \in \calN$ according to $\va \in \calA$.

Although the original loss is defined for binary labels, throughout this section we use its affine extension to fractional responses,
\begin{equation}\label{eq:affine_loss_extension}
\ell(p,y) \defeq (1-y)\ell(p,0)+y\ell(p,1),\qquad y\in[0,1].
\end{equation}
Consequently, both vector-payoff functions below are affine in $y$. It therefore suffices to verify the MLOO inequalities at the endpoints $y\in\{0,1\}$; the inequalities then hold throughout the convex response set $\calY=[0,1]$.

Moreover, we define approachability sets
\begin{equation}\label{eq:uv1def_recal}
\calU^{(1)}\defeq [-1,1]^{\calN},\qquad
\vv^{(1)}(\va,q, y) \defeq \{\va_s(s-y)\}_{s\in\calN},
\end{equation}
living in $\calH^{(1)} \defeq \R^{\calN}$, and
\begin{equation}\label{eq:uv2def_recal}
\calU^{(2)}\defeq \{1\},\qquad
v^{(2)}(\va,q, y) \defeq \sum_{s \in \calN} \va_s (\ell(s, y) - \ell(q, y)),
\end{equation}
living in $\calH^{(2)} \defeq \R$. We pause to make a few observations about this setup. 

First, the role of $\vv^{(1)}$ is to enforce calibration, for which the hint sequence in Definition~\ref{def:recal} is irrelevant, so it is independent of $q$. Moreover, $\norm{\vv}_1 = \sup_{\vu \in \calU^{(1)}} \inprod{\vu}{\vv}$, so the left-hand side of \eqref{eq:ith_regret} with $i \gets 1$ is $\E[\calK_1(p_{[T]}, y_{[T]})]$, where $p_t \in \calN$ is the prediction associated with a deterministic action (point mass) $\vp_t \in \calA$. Thus, the first approachability set \eqref{eq:uv1def_recal} exactly captures calibration error. 

Analogously, when all actions $\va$ taken by the algorithm are point masses, it is straightforward to see that the second approachability set \eqref{eq:uv2def_recal} captures the excess loss aspect of Definition~\ref{def:recal}.

To instantiate Theorem~\ref{thm:sba} in the setting of \eqref{eq:ab_def_recal}, \eqref{eq:uv1def_recal}, \eqref{eq:uv2def_recal}, we require three ingredients: no-regret learners over $\calU^{(1)}$ and $\calU^{(2)}$ in the sense of \eqref{eq:one_reg_bound}, and an MLOO for the instance. In Section~\ref{ssec:mloo}, we repurpose a result from \cite{OkoroaforKS24} for use as an MLOO with appropriate parameters. In Section~\ref{ssec:recal}, we put together the components to prove Theorem~\ref{thm:recalibration}, our main recalibration result.

\subsection{Imbalanced MLOO implementation}\label{ssec:mloo}

In this section, we give a short rederivation of Lemma 7, \cite{OkoroaforKS24}. The subroutine in that result is used within Theorem~\ref{thm:sba} as our MLOO with parameters $(\eps^{(1)}, \eps^{(2)}) \approx (\eps, \eps^2)$, in the setting of \eqref{eq:ab_def_recal}, \eqref{eq:uv1def_recal}, \eqref{eq:uv2def_recal}. Recall from Definition~\ref{def:mloo} that the input to the MLOO (when $m = 2$) is mixture weights $\vw = (\alpha, \beta)$ satisfying $\alpha + \beta = 1$, $\vu \in \calU^{(1)}$, and $q \in \calQ$, and the output is $\va \in \calA$ satisfying the following inequalities at the two endpoints (and hence, by affinity, for every $y\in\calY$):
\[\alpha\inprod{\vu}{\vv^{(1)}(\va, q, y)} + \beta v^{(2)}(\va, q, y)\le \alpha\eps^{(1)} + \beta\eps^{(2)}, \text{ for all } y \in \{0, 1\}.\]

We begin by recalling some basic facts about proper losses.

\begin{lemma}\label{lem:monotonicity}
Let $\ell: [0, 1] \times \{0, 1\} \to \R$ be a proper loss. Then, $\ell(\cdot, 0)$ is monotone nondecreasing and $\ell(\cdot, 1)$ is monotone nonincreasing (both in their first argument).
\end{lemma}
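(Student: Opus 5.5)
The plan is to compare the two properness inequalities at a pair of points, with no regularity assumptions on $\ell$. Fix $0 \le p < p' \le 1$. By Definition~\ref{def:proper}, $p$ minimizes $q \mapsto \E_{y \sim \Bern(p)}[\ell(q,y)]$ and $p'$ minimizes $q \mapsto \E_{y\sim\Bern(p')}[\ell(q,y)]$. Writing $a \defeq \ell(p',0)-\ell(p,0)$ and $b \defeq \ell(p',1)-\ell(p,1)$, the two optimality conditions read
\[
(1-p)\,a + p\,b \ge 0, \qquad (1-p')\,a + p'\,b \le 0.
\]
Subtracting the second from the first gives $(p'-p)(a - b) \ge 0$, so $a \ge b$ since $p' > p$.

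The goal is to show $a \ge 0$ and $b \le 0$, which are exactly the two monotonicity claims.

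For $b \le 0$, combine the second inequality with $a \ge b$. This gives $0 \ge (1-p')a + p'b \ge (1-p')b + p'b = b$. Note that $1-p' \ge 0$ is used here.

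For $a \ge 0$, combine the first inequality with $b \le a$. This gives $0 \le (1-p)a + pb \le (1-p)a + pa = a$, using $p \ge 0$.

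Endpoint cases such as $p=0$ or $p'=1$ need no separate treatment, since the weights stay nonnegative throughout.

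The main point of care is getting the direction of each properness inequality right. A tempting alternative is to argue via the Savage representation or derivatives of $\ell$, but that would require differentiability, which has not been assumed. The pairwise-comparison argument above avoids this entirely and works for arbitrary (even discontinuous) proper losses.
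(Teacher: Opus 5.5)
Your proposal is correct and is essentially the paper's own proof: the same two properness inequalities for the differences at $p<p'$, subtracted to get $a \ge b$, then each inequality combined with $a\ge b$ to conclude $a \ge 0$ and $b \le 0$. Nothing is missing, and like the paper's argument it needs no regularity assumptions on $\ell$.
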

\begin{proof}
Let $0 \le p < q \le 1$, and define $\Delta_0 \defeq \ell(q, 0) - \ell(p, 0)$ and $\Delta_1 \defeq \ell(q, 1) - \ell(p, 1)$. Properness of $\ell$ implies $(1 - p)\ell(p, 0) + p\ell(p, 1) \le (1 - p)\ell(q, 0) + p\ell(q, 1)$ (and likewise with $p, q$ swapped), so
\[(1 - p)\Delta_0 + p\Delta_1 \ge 0 \ge (1 - q)\Delta_0 + q\Delta_1.\]
Rearranging yields
\[(q-p)(\Delta_0 - \Delta_1) \ge 0 \implies \Delta_1 \le \Delta_0.\]
Finally, we have the desired
\[\Delta_0 \ge (1 - p)\Delta_0 + p\Delta_1 \ge 0,\quad \Delta_1 \le (1 - q)\Delta_0 + q\Delta_1 \le 0.\]
\end{proof}

We now prove our main helper result in the MLOO implementation.

\begin{lemma}[Lemma 7, \cite{OkoroaforKS24}]\label{lem:oracle_helper}
    Let $N\in\N$ be even, and let $\vu \in \R^{\{0\} \cup [N]}$ satisfy $\norm{\vu}_\infty \le 1$.
    Let $c,d:[0,1]\times\{0,1\}\rightarrow \mathbb{R}$, and define a function $f:\left(\{0\}\cup[N]\right)\times \{0,1\}\rightarrow \mathbb{R}$ by
    \[
    f(i,y)\defeq \vu_ic\Par{\frac{i}{N},y}+d\Par{\frac{i}{N},y}.
    \]
    If the following assumptions hold:
    \begin{enumerate}
        \item $c(p,y)=p - y$,
        \item $d$ is proper, $L$-Lipschitz, $d(\cdot,0)$ is monotone nondecreasing and $d(\cdot,1)$ is monotone nonincreasing (both in the first argument), and there exists a $p \in [0, 1]$ such that $d(p,0) = d(p,1) =0$,
        \item $f(0,0)\leq 0$ and $f(N,1)\leq 0$,
    \end{enumerate}
    then there exists a distribution $\mathcal{D}$ over $\{0\}\cup[N]$ such that
    \begin{equation}\label{eq:aug_quadrant}\E_{i\sim \mathcal{D}}\Brack{\begin{pmatrix}f(i,0) \\ f(i, 1) \end{pmatrix}}\in\Brace{\left(-\infty, \frac{1}{N}+\frac{4L}{N^2}\right]\times\left(-\infty, \frac{1}{N}+\frac{4L}{N^2}\right]}.
    \end{equation}
    Moreover, $\calD$ is either a point mass, or supported on two neighboring indices $\{j, j+1\}$.
    \begin{figure}[ht!]
\centering
\includegraphics[width=0.5\textwidth]{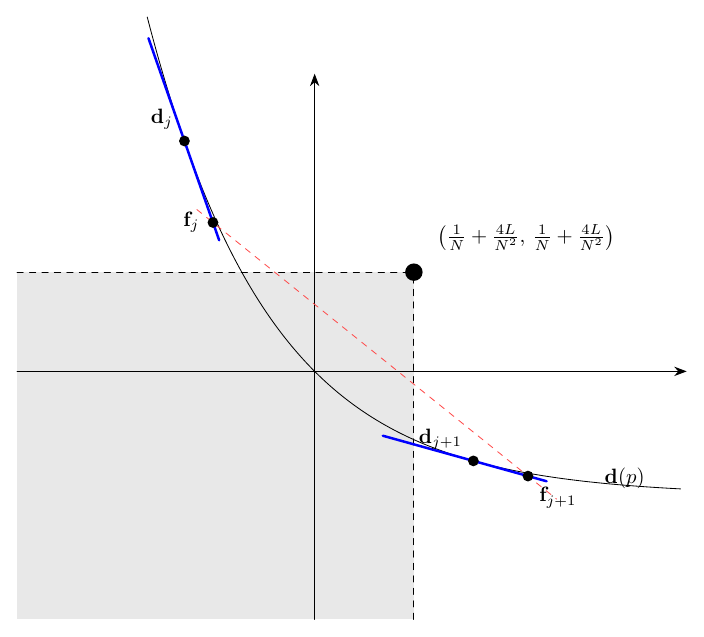}
\end{figure}
\end{lemma}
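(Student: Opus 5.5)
Write $F(i) \defeq (f(i,0), f(i,1)) \in \R^2$. The plan is a discrete intermediate-value argument along the grid, in the spirit of the fixed-point proofs of \cite{FosterV98}. Mixing two neighbors $j,j+1$ moves $\E_{\calD}[F]$ along the segment $[F(j),F(j+1)]$. It therefore suffices to find either a single index $i$ with $F(i)$ in the quadrant $Q \defeq (-\infty,\tau]^2$, where $\tau = \frac1N+\frac{4L}{N^2}$, or a consecutive pair whose segment meets $Q$.

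\textbf{Step 1 (sign structure).} Since $c(p,0)=p\ge 0$ and $c(p,1)=p-1\le 0$, and $|\vu_i|\le 1$, we have $f(i,0)\le p_i + d(p_i,0)$ and $f(i,1)\le (1-p_i)+d(p_i,1)$ with $p_i = i/N$. More usefully, I will track the scalar "potential" $g(i) \defeq (1-p_i) f(i,0) + p_i f(i,1)$. The calibration part of $g$ cancels exactly, since $(1-p)p + p(p-1)=0$. The remaining part, $(1-p_i)d(p_i,0)+p_i d(p_i,1)$, is the Bayes risk at $p_i$. By properness it is at most $(1-p_i)d(p^*,0)+p_i d(p^*,1)=0$, where $p^*$ is the point with $d(p^*,\cdot)=0$. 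Hence $g(i)\le 0$ for every $i$, so every $F(i)$ lies in the closed halfplane $\{x : (1-p_i)x_0+p_i x_1 \le 0\}$. This says each $F(i)$ has at least one coordinate $\le 0$: if $f(i,0)>0$ then $f(i,1)<0$ (for $p_i>0$), and symmetrically.

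\textbf{Step 2 (crossing index).} Let $A \defeq \{i : f(i,0)\le 0\}$, which contains $0$ by hypothesis, and $B \defeq \{i : f(i,1)\le 0\}$, which contains $N$. By Step 1, $A\cup B$ is the whole grid, so some pair is either (i) an index in $A\cap B$, which gives a point mass with $F\in Q$ and we are done, or (ii) neighbors $j\in A\setminus B$ and $j+1\in B\setminus A$. In case (ii), $f(j,0)\le 0 < f(j,1)$ and $f(j+1,1)\le0<f(j+1,0)$. Choose $\lambda\in[0,1]$ so that the mixture $\lambda F(j)+(1-\lambda)F(j+1)$ has equal coordinates. It remains to show this common value is at most $\tau$.

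\textbf{Step 3 (neighbor perturbation; the main obstacle).} Compare $F(j+1)$ with the value obtained by evaluating at $p_j$. Switching from $j$ to $j+1$ changes $c$ by exactly $1/N$ in each coordinate and $d$ by at most $L/N$. The catch is that $\vu_j$ and $\vu_{j+1}$ can differ arbitrarily, so $F(j)$ and $F(j+1)$ need not be close. I therefore plan to use the halfplane constraint from Step 1 at both endpoints instead of continuity.

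For the equalized mixture value $x$, I will take the same convex combination of the inequalities $g(j)\le0$ and $g(j+1)\le0$, written with a common weight $p_j$. The cross terms then become $(p_{j+1}-p_j)\cdot(f(j+1,0)-f(j+1,1))$. This is $1/N$ times a quantity I bound by $|c|\le 1$ together with the $d$ increments. Those increments are controlled by $L$-Lipschitzness times $1/N$, and a second factor of $1/N$ appears because $d$ vanishes to first order in the Bayes-risk expansion. That is how I expect to get $x \le \frac1N + \frac{4L}{N^2}$.

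If this bookkeeping does not close, I would first try the fallback of handling the $d$-part via the Savage/Bayes-risk representation, bounding $(1-p)d(p',0)+p\,d(p',1) \le 2L|p-p'|$ with $|p-p'|=1/N$. Evenness of $N$ is not used above; I would expect it to matter only if the argument is routed through the midpoint $p=\frac12$, and I would ignore it unless needed.
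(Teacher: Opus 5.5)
\textbf{There is a genuine gap in Step 3.} Your Steps 1 and 2 are correct and match the first half of the paper's proof: $g(i)\le 0$ is exactly \eqref{eq:line_neg}, and the switch from $A$ to $B$ is the paper's second/fourth-quadrant crossing. The gap is in Step 3, which carries all of the lemma's quantitative content.

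With $\phi_j(x)\defeq(1-p_j)x_0+p_jx_1$, your plan writes $x=\lambda g(j)+(1-\lambda)\Par{g(j+1)+\frac1N\Par{f(j+1,0)-f(j+1,1)}}$. It then drops $g(j),g(j+1)\le 0$ and must bound $f(j+1,0)-f(j+1,1)=\vu_{j+1}+d(p_{j+1},0)-d(p_{j+1},1)$. The $d$-term here is a \emph{level}, not an increment. It vanishes only at $p^*$, so Lipschitzness bounds it only by $2L|p_{j+1}-p^*|$, which is $\Theta(L)$ in general. For example, with $d(\cdot,y)=\beta(\lsq(\cdot,y)-\lsq(q,y))$ it equals $2\beta(p_{j+1}-q)$.

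The bookkeeping you describe therefore yields only $x\le\frac1N+O(\frac LN)$. Your fallback bound $(1-p)d(p',0)+p\,d(p',1)\le 2L|p-p'|$ is also first order. This loses exactly the $N^{-2}$ scaling that makes $\eps^{(2)}=\frac{4L}{N^2}\approx\eps^2$ in Lemma~\ref{lem:recal_mloo}. The phrase ``$d$ vanishes to first order'' names the right phenomenon, but neither route contains an inequality that realizes it.

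\textbf{How to close it.} Keep the $d$-part of $F(j+1)$ intact and compare it with $\vd(p_j)$ under the weights of $p_j$, where $\vd(p)\defeq(d(p,0),d(p,1))$. Since $\phi_j\Par{(p_{j+1},p_{j+1}-1)}=\frac1N$, you have $\phi_j(F(j+1))=\frac{\vu_{j+1}}N+\phi_j(\vd(p_{j+1}))$. With $\Delta_y\defeq d(p_{j+1},y)-d(p_j,y)$,
\[
\phi_j(\vd(p_{j+1}))-\phi_j(\vd(p_j))=\Par{(1-p_{j+1})\Delta_0+p_{j+1}\Delta_1}+\frac1N\Par{\Delta_0-\Delta_1}\le\frac{2L}{N^2},
\]
where the first bracket is $\le 0$ by properness of $d$ at $p_{j+1}$, and the second uses Lipschitzness. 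Since $\phi_j(F(j))=\phi_j(\vd(p_j))\le 0$, this gives $x\le\phi_j(\vd(p_j))+(1-\lambda)\Par{\frac1N+\frac{2L}{N^2}}\le\frac1N+\frac{2L}{N^2}$.

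Alternatively, you can rescue your own decomposition using the sign condition at $j$. Let $s_+\defeq f(j+1,0)-f(j+1,1)>0$ and $s_-\defeq f(j,1)-f(j,0)>0$. Equalization gives $(1-\lambda)s_+=\frac{s_+s_-}{s_++s_-}\le\frac{s_++s_-}4$. Since $s_++s_-=\vu_{j+1}-\vu_j+\Delta_0-\Delta_1\le 2+\frac{2L}N$, the level cancels.

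\textbf{Comparison with the paper.} Completed this way, your route is genuinely different from the paper's and somewhat cleaner. The paper picks its weight so that the mixture of $\vf_j$ with a surrogate point $\vg_j\in\sfL_j$ hits the diagonal point $\vi_j$. It then bounds the vertical error $\vd_{j+1}-\vk_j$ by the slope difference $\frac{N}{j(j+1)}$ times $\frac LN$. That bound blows up for small $j$, which is why the paper needs $N$ even and the label swap forcing $j\ge\frac N2$. Measuring the equalized point through the single functional $\phi_j$ never divides by $p_j$. So you need neither evenness nor the symmetry reduction, and the constant improves from $4L$ to $2L$.
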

\begin{proof}
For convenience in this proof, for $p \in [0, 1]$, we define vector-valued functions
\[\vc(p) \defeq \begin{pmatrix}c\Par{p, 0} \\ c\Par{p, 1} \end{pmatrix},\quad \vd(p) \defeq \begin{pmatrix} d\Par{p, 0} \\ d\Par{p, 1} \end{pmatrix},\]
and denote for shorthand $\vc_i \defeq \vc(\frac i N)$, $\vd_i \defeq \vd(\frac i N)$, and $\vf_i \defeq \vu_i \vc_i + \vd_i$ for all $0 \le i \le N$, so that
\[\vf_i = \begin{pmatrix} f(i, 0) \\ f(i, 1) \end{pmatrix}.\]
Our goal is to show that either some $\vf_i$ lies in the set on the right-hand side of \eqref{eq:aug_quadrant}, or the line segment between two adjacent $\vf_j$, $\vf_{j + 1}$ lies in that set. We also define the orthogonal direction
\[\vc_i^\perp \defeq \begin{pmatrix} 1 - \frac i N \\ \frac i N \end{pmatrix} \text{ for all } i \in \{0\} \cup [N].\]
Next, for all $i \in \{0\} \cup [N]$, let $\sfL_i$ be the line through $\vd_i$ and $\vf_i$: since $\inprod{\vc_i}{\vc_i^\perp} = 0$,
    \begin{align*}      
   \sfL_i \defeq \{\vd_i+\lambda \vc_i\mid \lambda\in \R\} = \Brace{\vv \in \R^2 \mid \left\langle\vc_i^\perp,\vv - \vd_i\right\rangle= 0}.
   \end{align*}
   We first show $\sfL_i$ is a supporting line of $\vd(p)$. Since $d$ is proper, for any point $\vx = \vd(p)$,
   \begin{equation}\label{eq:proper_d}
   \Par{1 - \frac i N} d\Par{p, 0} + \frac i N d\Par{p, 1} = \left\langle\vc_i^\perp,\vx\right\rangle\geq\left\langle \vc_i^\perp,\vd_i\right\rangle = \Par{1 - \frac i N}d\Par{\frac i N, 0} + \frac i N d\Par{\frac i N, 1}.
   \end{equation}
   Indeed, a similar argument shows that there is a supporting line of $\vd(p)$ at any $p \in (0, 1]$ with slope $\frac{c(p, 1)}{c(p, 0)} = 1 - \frac 1 p$, which is increasing in $p$; the endpoint $p=0$ is obtained by the limiting vertical supporting line. Thus, the region of $\R^2$ lying above $\vd(p)$ is convex.

   Now, if any $\vf_i$ lies in the third quadrant $\R^2_{\le 0}$, \eqref{eq:aug_quadrant} holds for a point mass $\calD$. Assume henceforth there is no such $\vf_i$. We claim that no $\vf_i$ is strictly in the first quadrant either; if there were such a $\vf_i$, then it cannot be $i = 0$ or $i = N$ by assumption, so $\inprod{\vc_i^\perp}{\vf_i} > 0$. However,
   \begin{equation}\label{eq:line_neg}\inprod{\vc_i^\perp}{\vf_i} = \inprod{\vc_i^\perp}{\vd_i} \le 0,\end{equation}
   where we used the definition of $\sfL_i$ and the inequality applied \eqref{eq:proper_d} with the $p$ such that $\vd(p) = \0_2$.
   
   Thus have that every $\vf_i$ lies either in the second or fourth quadrants. By assumption, $\vf_0$ must lie in the second quadrant and $\vf_N$ must lie in the fourth quadrant. There then exists a $j \in \{0,\ldots,N-1\}$ such that $\vf_j$ is in the second quadrant and $\vf_{j+1}$ is in the fourth quadrant. Since $N$ is even, after swapping labels $y=0$ and $y=1$ if necessary, we may assume $j\ge \frac N 2$.
    
    \begin{figure}[ht!]
\centering
\includegraphics[width=0.5\textwidth]{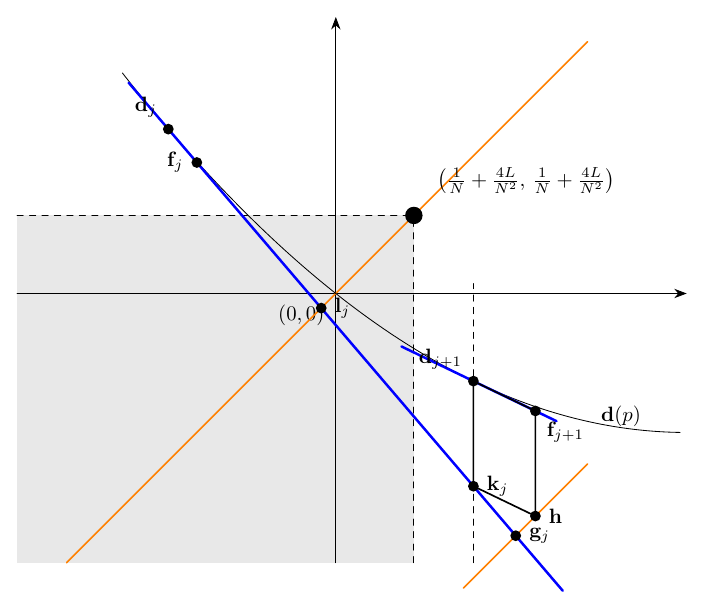}
\end{figure}

    Let $\vk_j$ be the intersection of the vertical line through $\vd_{j+1}$ with $\sfL_j$. Since the slope of $\sfL_j$ is $1-\frac{N}{j}$ and the slope of $\sfL_{j+1}$ is $1-\frac{N}{j+1}$, the point $\vk_j$ lies below $\vd_{j+1}$. Because $\vd_j$ lies above $\sfL_{j + 1}$, a direct calculation shows that the vertical gap between $\vd_{j + 1}$ and $\vk_j$ is at most the product of the horizontal gap between $\vd_j, \vd_{j + 1}$, and the difference in slopes between $\sfL_j$, $\sfL_{j + 1}$. This is
    \begin{equation}\label{eq:gap_bound}\frac{N}{j(j+1)}\cdot\Par{d\Par{\frac{j + 1}{N}, 0} -d\Par{\frac{j}{N}, 0}}\leq \frac{N}{(N/2)(N/2 + 1)} \cdot \frac{L}{N}\le \frac{4L}{N^2},\end{equation}
    where we applied Lipschitzness of $d$. 
    
    Finally, we are ready to define $\calD$. Let $\vi_j$ be the intersection between $\sfL_j$ and the line through the origin with slope $1$. We claim that $\vi_j$ lies between $\vf_j$ and the following point on $\sfL_j$: 
    \[
    \vg_j \defeq \vk_j+ \vu_{j+1}\vc_j = \vk_j +(\vf_{j+1}-\vd_{j+1})+\vu_{j+1}(\vc_j-\vc_{j+1}).
    \]
   This is because $\vf_j$'s second coordinate is larger than its first. On the other hand, 
   \[\vg_j = \vf_{j + 1} + \Par{\vk_j - \vd_{j + 1}} - \frac{\vu_{j + 1}}{N} \1_2 ,\]
   so its first coordinate is larger than its second, because $\vf_{j + 1}$ has this property and $\vk_j - \vd_{j + 1}$ is vertically downwards. Thus, $\vi_j = (1 - \lam) \vf_j + \lam \vg_j$ for some $\lam \in [0, 1]$.

    We output $\vf_j$ with probability $1 - \lam$, and $\vf_{j + 1}$ with probability $\lam$. We claim that $(1 - \lam)\vf_j + \lam \vf_{j + 1}$ lies in the set \eqref{eq:aug_quadrant}, which would conclude the proof. To see this, note that $\vi_j$ is in the third quadrant, because its two coordinates are equal (say, to $t$), so combining \eqref{eq:line_neg} with $\vi_j \in \sfL_j$,
    \[t = \Par{1 - \frac j N} t + \frac j N \cdot t  = \inprod{\vc^\perp_j}{\vd_j} \le 0.\]
    Finally, we can bound the coordinates of $(1 - \lam)\vf_j + \lam\vf_{j + 1}$:
    \begin{align*}
    (1 - \lam)\vf_j + \lam\vf_{j + 1} = \vi_j + \lam\Par{\Par{\vd_{j + 1} - \vk_j} + \frac{\vu_{j + 1}}{N}\1_2}.
    \end{align*}
    Indeed, both coordinates of $\lam(\vd_{j + 1} - \vk_j)$ are at most $\frac{4L}{N^2}$ using \eqref{eq:gap_bound}, and both coordinates of $\frac{\lam\vu_{j + 1}}{N}\1_2$ are at most $\frac 1 N$ using the assumption on $\vu$. Since $\vi_j \in \R_{\le 0}^2$, we are done.
\end{proof}

\begin{lemma}\label{lem:recal_mloo}
Suppose $N$ is even. Under Assumption~\ref{assume:lip}, there is a $\veps$-MLOO for \eqref{eq:ab_def_recal}, \eqref{eq:uv1def_recal}, \eqref{eq:uv2def_recal}, with
\[\eps^{(1)} = \frac 1 N,\quad \eps^{(2)} = \frac{4L}{N^2}.\]

Moreover, the output is a $2$-sparse element of $\Delta^{\calN}$ supported on neighboring predictions $\{\frac j N, \frac {j + 1} N\}$.
\end{lemma}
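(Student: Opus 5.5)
The plan is to reduce the MLOO requirement directly to Lemma~\ref{lem:oracle_helper}, after rescaling the mixture weights. Fix an input $\vw=(\alpha,\beta)\in\Delta^2$, $\vu\in\calU^{(1)}=[-1,1]^{\calN}$ and $q\in[0,1]$. For $\va\in\Delta^{\calN}$ with $\va=\E_{i\sim\calD}[\ve_{i/N}]$, both payoffs are linear in $\va$, so the MLOO inequality reads
\[
\E_{i\sim\calD}\Brack{\alpha\,\vu_i\Par{\tfrac iN-y}+\beta\Par{\ell\Par{\tfrac iN,y}-\ell(q,y)}}\le \frac{\alpha}{N}+\frac{4\beta L}{N^2}\quad\text{for } y\in\{0,1\}.
\]
By the affine extension \eqref{eq:affine_loss_extension}, this then holds for all $y\in[0,1]$.

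\textbf{Main case $\alpha>0$.} Divide by $\alpha$ and set $c(p,y)=p-y$ and $d(p,y)\defeq\frac{\beta}{\alpha}\Par{\ell(p,y)-\ell(q,y)}$. I would then check the hypotheses of Lemma~\ref{lem:oracle_helper} one at a time.
\begin{itemize}
\item Properness: subtracting a function of $y$ alone and multiplying by a nonnegative scalar preserves the argmin in Definition~\ref{def:proper}, so $d$ is proper.
\item Lipschitzness and monotonicity: $d$ is $\frac{\beta}{\alpha}L$-Lipschitz by Assumption~\ref{assume:lip}. It has the required monotonicity by Lemma~\ref{lem:monotonicity}.
\item Common zero: $d(q,0)=d(q,1)=0$.
\item Boundary conditions: $f(0,0)=\vu_0\cdot 0+\frac{\beta}{\alpha}(\ell(0,0)-\ell(q,0))\le 0$, since $\ell(\cdot,0)$ is nondecreasing. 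Symmetrically, $f(N,1)=\frac\beta\alpha(\ell(1,1)-\ell(q,1))\le0$, since $\ell(\cdot,1)$ is nonincreasing.
\end{itemize}
The lemma (with $N$ even) then yields $\calD$ supported on at most two neighbors $\{j,j+1\}$ with both coordinates at most $\frac1N+\frac{4(\beta/\alpha)L}{N^2}$. Multiplying by $\alpha$ gives exactly the target $\alpha\eps^{(1)}+\beta\eps^{(2)}$, and the sparsity claim follows from the same lemma.

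\textbf{Degenerate case $\alpha=0$, $\beta=1$.} This is the main subtlety, since dividing by $\alpha$ is impossible. Applying the lemma with $\vu=\0$ and $d=\ell-\ell(q,\cdot)$ verbatim only gives $\frac1N+\frac{4L}{N^2}$, and that extra $\frac1N$ is not affordable. I would instead reread the end of the proof of Lemma~\ref{lem:oracle_helper}. The final bound there is $\vi_j+\lambda\Par{(\vd_{j+1}-\vk_j)+\frac{\vu_{j+1}}{N}\1_2}$, and its $\frac1N$ term comes solely from $\vu_{j+1}$. With $\vu=\0$ that term vanishes (and the point-mass case is unaffected), so the same construction gives the bound $\frac{4L}{N^2}=\eps^{(2)}$ and again a distribution on neighboring grid points.

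As a fallback if this rereading is deemed too informal, I would take $\alpha\to0^+$. The guarantee from the main case, $\alpha/N+4\beta L/N^2$, tends to $4L/N^2$. Since the outputs lie in the compact set of $2$-sparse neighbor distributions over finitely many $j$, I can pass to a limit point along a subsequence. This still satisfies the inequality by continuity in $\va$.
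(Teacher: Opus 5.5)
Your proposal is correct and follows the paper's proof. For $\alpha>0$ you use the same rescaled $d(p,y)=\frac{\beta}{\alpha}(\ell(p,y)-\ell(q,y))$ fed into Lemma~\ref{lem:oracle_helper}, and your fallback for $\alpha=0$ (taking $\alpha_k\to0^+$ and extracting a subsequence with a common neighboring support) is exactly the paper's limiting argument; your primary route for $\alpha=0$, rerunning the helper lemma's proof with $\vu=\0$ so that the $\frac{\vu_{j+1}}{N}\1_2$ term vanishes, is also valid but depends on that proof's internals rather than on the lemma's statement.
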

\begin{proof}
Consider inputs to the MLOO, $\vw = (\alpha, \beta) \in \Delta^2$, $\vu \in \calU^{(1)}$, and $q \in \calQ$. We first handle the boundary case $\alpha = 0$. For every $\alpha_k > 0$ with $\alpha_k \to 0$, set $\beta_k = 1-\alpha_k$ and apply the construction below to $(\alpha_k,\beta_k)$. Since there are only finitely many neighboring supports in $\calN$, a subsequence of the resulting distributions has a common neighboring support and converges to some $\va \in \Delta^{\calN}$. Passing to the limit in the MLOO inequalities gives $v^{(2)}(\va,q,y)\le 4L/N^2$ for both $y\in\{0,1\}$, and hence for every $y\in\calY$ by affinity. This is exactly the desired guarantee when $\alpha=0$.

Henceforth assume $\alpha>0$. Define
\[d\Par{p, y} \defeq \frac \beta \alpha \Par{\ell(p, y) - \ell(q, y)}.\]
We check that this function satisfies the assumptions of Lemma~\ref{lem:oracle_helper}. Indeed, it is proper and $\frac{\beta L}{\alpha}$-Lipschitz by assumption, satisfies the monotonicity conditions by Lemma~\ref{lem:monotonicity}, and satisfies $d(q, 0) = d(q, 1) = 0$. Moreover, propriety implies $f(0, 0) = d(0, 0) \le 0$ and $f(N, 1) = d(1, 1) \le 0$.

Now let $\va$ be the natural identification of the output distribution from Lemma~\ref{lem:oracle_helper} with an element of $\Delta^{\calN}$. By the conclusion of Lemma~\ref{lem:oracle_helper}, we have for either endpoint $y \in \{0,1\}$,
\[\inprod{\vu}{\vv^{(1)}(\va, q, y)} + \frac{\beta}{\alpha}v^{(2)}(\va, q, y) \le \frac 1 N + \frac{4\beta L}{\alpha N^2}.\]
Rescaling this guarantee by $\alpha$ gives the desired MLOO inequality at both endpoints. The affine extension \eqref{eq:affine_loss_extension} then gives the inequality for every $y\in\calY$, as required by Definition~\ref{def:mloo}.
\end{proof}

\subsection{Main result}\label{ssec:recal}

We can now combine our MLOO implementation in Lemma~\ref{lem:recal_mloo} with standard regret minimization bounds within Theorem~\ref{thm:sba} to obtain our main result on recalibration. We briefly discuss the required online learners $\alg^{(1)}$ and $\alg^{(2)}$ satisfying \eqref{eq:one_reg_bound}. First, because $\calU^{(2)}$ is a singleton, $\alg^{(2)}$ can trivially achieve $\reg^{(2)}(T) = 0$ by returning $1$ every iteration. We provide $\alg\sps 1$ in the next result.

\begin{lemma}\label{lem:recal_regret}
    There is an online learning algorithm $\alg\sps 1$ that satisfies \eqref{eq:one_reg_bound} with
    \[\reg^{(1)}(T) \defeq \frac{\eps T}{8} + \frac {16N} {\eps}.\]
\end{lemma}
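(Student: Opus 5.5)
We will take $\alg^{(1)}$ to be projected online gradient descent, i.e., Lemma~\ref{lem:mirror} with the quadratic regularizer \eqref{eq:quadreg} over $\xset = \calU^{(1)} = [-1,1]^{\calN}$. The loss vectors are $\vg_t = -\vv^{(1)}(\va_t, q_t, y_t)$, since we want to maximize $\inprod{\vu}{\vv^{(1)}}$. The update is $\vu_{t+1} = \Pi_{[-1,1]^{\calN}}(\vu_t + \eta \vv^{(1)}(\va_t,q_t,y_t))$, started at $\vu_1 = \0$. This update is coordinatewise clipping, so each $\vu_t$ depends only on $\va_{[t-1]}, q_{[t-1]}, y_{[t-1]}$, as \eqref{eq:one_reg_bound} requires.

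Next we bound the two constants in Lemma~\ref{lem:mirror}. For the gradient norm, $\norm{\vv^{(1)}(\va,q,y)}_2^2 = \sum_s \va_s^2 (s-y)^2 \le \sum_s \va_s^2 \le \norm{\va}_1^2 = 1$ because $\va \in \Delta^{\calN}$, so we may take $L = 1$. For the Bregman term, every comparator $\vu \in [-1,1]^{\calN}$ satisfies $D_r(\vu \| \0) = \half \norm{\vu}_2^2 \le \frac{N+1}{2} \le N$, using $|\calN| = N+1$ and $N \ge 1$. Lemma~\ref{lem:mirror} multiplied by $T$ then gives
\[
\sup_{\vu \in \calU^{(1)}} \sum_{t\in[T]} \inprod{\vv^{(1)}(\va_t,q_t,y_t)}{\vu - \vu_t} \le \frac{\eta T}{2} + \frac{N}{\eta}.
\]

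We choose $\eta = \frac{\eps}{4}$. Then $\frac{\eta T}{2} = \frac{\eps T}{8}$ and $\frac{N}{\eta} = \frac{4N}{\eps} \le \frac{16N}{\eps}$, which gives the claimed $\reg^{(1)}(T)$ with room to spare. Lemma~\ref{lem:mirror} needs no assumption on the horizon beyond the bounds above, so this holds for every $T$.

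The only point needing care is the gradient norm bound. It works because the actions lie in the simplex, so the $\ell_2$ norm of $\vv^{(1)}$ is dimension-free. A naive bound such as $\norm{\vv^{(1)}}_2 \le \sqrt{N+1}$ would produce an extra factor of $N$, and the stated bound would not follow.
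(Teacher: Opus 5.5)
Your proof is correct and is essentially the paper's argument: projected gradient descent over $[-1,1]^{\calN}$ initialized at $\0$, analyzed via Lemma~\ref{lem:mirror} with $D_r(\vu\|\0)\le N$. The only difference is in constants: the paper uses the looser bound $\norm{\vv^{(1)}}_2\le\norm{\vv^{(1)}}_1\le 2$ with $\eta=\frac{\eps}{16}$, which gives exactly $\frac{\eps T}{8}+\frac{16N}{\eps}$, while your bound of $1$ with $\eta=\frac{\eps}{4}$ gives the slightly smaller $\frac{\eps T}{8}+\frac{4N}{\eps}$.
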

\begin{proof}
This follows from projected gradient descent (Lemma~\ref{lem:mirror} with the settings in \eqref{eq:quadreg}), $\vx_1 \gets \0_{\calN}$, and $\eta \gets \frac \eps {16}$. More precisely, we use that each $\vu \in \calU^{(1)}$ has
\[\norm{\vu}_2 \le \sqrt{N + 1} \norm{\vu}_\infty \le \sqrt{2N} =: R, \]
and each feedback vector $\vv \defeq \vv^{(1)}(\va, q, y)$ for $(\va, q, y) \in \calA \times \calQ \times \calY$ has
    $\norm{\vv}_2\leq\norm{\vv}_1\leq 2 =: L$.
\end{proof}

\begin{theorem}\label{thm:recalibration}
Let $\ell: [0, 1] \times \{0, 1\} \to \R$ be proper such that Assumption~\ref{assume:lip} holds, and let $\eps\in(0,\half)$. In the setting of Model~\ref{model:hsp}, there is an algorithm that outputs $p_{[T]} \in [0, 1]^T$ that is $(\eps, \eps^2)$-recalibrated against $(q_{[T]}, y_{[T]})$ under $\ell$, if for an appropriate constant, 
\[T = \Omega\Par{\frac{L^2}{\eps^3}}.\]
\end{theorem}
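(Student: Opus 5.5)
We instantiate Algorithm~\ref{alg:sba} with $m=2$, the sets \eqref{eq:ab_def_recal}, \eqref{eq:uv1def_recal}, \eqref{eq:uv2def_recal}, and the MLOO of Lemma~\ref{lem:recal_mloo}, and then read off both recalibration guarantees from Corollary~\ref{cor:sba_imbalanced_e}. In Line~\ref{line:subsample}, $\vp_t$ is a point mass at some $p_t\in\calN$ sampled from $\va_t$. Hence the first left-hand side in Corollary~\ref{cor:sba_imbalanced_e} is exactly $\E[\calK_1(p_{[T]},y_{[T]})]$, because $\sup_{\vu\in[-1,1]^\calN}\inprod{\vu}{\cdot}=\norm{\cdot}_1$. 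The second left-hand side is $\E[\ell(p_{[T]},y_{[T]})-\ell(q_{[T]},y_{[T]})]$. For the online learners, we use $\alg\sps 1$ from Lemma~\ref{lem:recal_regret} and the trivial $\alg\sps 2\equiv 1$, which has zero regret.

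The next step is to check the width constants. For $i=1$, we have $|\inprod{\vu}{\vv^{(1)}}|\le\norm{\vv^{(1)}}_1\le 1$. For $i=2$, Assumption~\ref{assume:lip} gives $|v^{(2)}|\le L|s-q|\le L$. The slacks $\eps^{(1)}=1/N$ and $\eps^{(2)}=4L/N^2$ are also at most $L$. So \eqref{eq:width_bound} holds with constant $L\ge1$. Plugging into Corollary~\ref{cor:sba_imbalanced_e} gives
\[
\E[\calK_1]\le \frac1N+\frac\eps8+\frac{16N}{\eps T}+4\eta L^2+\frac1{\eta T},\qquad \E[\text{excess loss}]\le \frac{4L}{N^2}+4\eta L^2 .
\]

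Now choose the parameters. Take $N$ even with $N=\Theta(\sqrt L/\eps)$, large enough that $4L/N^2\le\eps^2/2$ (e.g.\ $N\ge 3\sqrt L/\eps$); this also gives $1/N\le\eps/8$ because $L\ge1$. Take $\eta=\eps^2/(8L^2)$, so that $4\eta L^2=\eps^2/2$ and the excess loss bound is at most $\eps^2$. On the calibration side, $4\eta L^2=\eps^2/2\le\eps/4$. The remaining terms are $\frac1{\eta T}=\frac{8L^2}{\eps^2T}$ and $\frac{16N}{\eps T}=O\bigl(\frac{\sqrt L}{\eps^2T}\bigr)$. Both are at most $\eps/8$ once $T\ge C L^2/\eps^3$ for a suitable constant $C$. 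Summing all contributions then gives $\E[\calK_1]\le\eps$.

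The main point to get right is this imbalance. The $\frac1{\eta T}$ term appears only in the calibration bound, because we initialize at $\vw_1=\ve_2$ so that $D_r(\ve_2\|\vw_1)=0$. This is what lets $\eta\approx\eps^2/L^2$ stay compatible with $T\approx L^2/\eps^3$. We also need to confirm the bookkeeping conditions. Theorem~\ref{thm:sba} requires the learners' regret bounds to hold for the sampled point masses $\vp_t$, and Lemma~\ref{lem:recal_regret} holds for arbitrary actions, so this is fine. The outputs lie in $\calN\subseteq[0,1]$ and depend only on $q_{[t]},y_{[t-1]}$, as Model~\ref{model:hsp} requires.
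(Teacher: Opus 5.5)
Your proof is correct and follows essentially the same route as the paper: you instantiate Corollary~\ref{cor:sba_imbalanced_e} with the MLOO of Lemma~\ref{lem:recal_mloo}, the learner of Lemma~\ref{lem:recal_regret}, and the trivial learner, take $N=\Theta(\sqrt L/\eps)$ even and $\eta=\Theta(\eps^2/L^2)$, and use $\vw_1=\ve_2$ to keep the $\frac{1}{\eta T}$ term out of the loss bound. One small arithmetic slip: your example $N\ge 3\sqrt L/\eps$ only gives $\frac1N\le\frac\eps3$, not $\frac\eps8$, but your calibration terms still sum to at most $\frac{23}{24}\eps$ (or take $N\ge 8\sqrt L/\eps$), so the conclusion is unaffected.
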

\begin{proof}
We apply Corollary~\ref{cor:sba_imbalanced_e} with the setup in \eqref{eq:ab_def_recal}, \eqref{eq:uv1def_recal}, \eqref{eq:uv2def_recal}, where we use Lemma~\ref{lem:recal_mloo} with the even choice $N = 2\lceil \frac{2\sqrt L}{\eps}\rceil$ as the MLOO, Lemma~\ref{lem:recal_regret} as $\alg^{(1)}$, and the trivial learner over $\calU^{(2)}$ as $\alg^{(2)}$. We now bound each of the quantities in Theorem~\ref{thm:sba} and explain its implications to recalibration.

First, the MLOO parameters yielded by Lemma~\ref{lem:recal_mloo} with the stated value of $N$ are
\begin{equation}\label{eq:veps_def}\eps^{(1)} = \frac 1 N \le \frac \eps 4,\quad \eps^{(2)} = \frac {4L} {N^2} \le \frac{\eps^2}{4}.\end{equation}
This implies we can take $L$ as given in the theorem statement in \eqref{eq:width_bound}, because $|\inprod{\vu}{\vv}| \le 1$ for $\vu \in \calU^{(1)}$ and $\vv = \vv^{(1)}(\va, q, y)$ by the $\ell_1$-$\ell_\infty$ H\"older's inequality, and $|\vv^{(2)}(\va, q, y)| \le L$ by Lipschitzness of $\ell$. Next, Lemma~\ref{lem:recal_regret} gives
\[\frac{\reg^{(1)}(T)}{T} \le \frac \eps 8 + \frac{16N}{\eps T}\]
and we clearly have $\reg^{(2)}(T) = 0$. Therefore, if we take
\[\eta \gets \frac{\eps^2}{16L^2},\]
and let $T$ be a sufficiently large constant multiple of $L^2/\eps^3$, then the two bounds in Corollary~\ref{cor:sba_imbalanced_e} are at most $(\eps, \eps^2)$ respectively. Here we use $L\ge 1$, so $N=O(\sqrt L/\eps)$ and the term $16N/(\eps T)$ is dominated by the stated horizon. We therefore sample $\vp_t$ as a point mass according to $\va_t$, using fresh randomness on every round; its conditional expectation is $\va_t$ even after conditioning on the current history and any fixed realized pair $(q_t,y_t)$. We output the prediction $p_t$ associated with this point mass.

By the definitions in \eqref{eq:uv1def_recal}, \eqref{eq:uv2def_recal}, when every prediction output is a point mass, Corollary~\ref{cor:sba_imbalanced_e} implies $(\eps, \eps^2)$-recalibration: the first inequality is exactly
\[\E\Brack{\frac 1 T \sum_{s \in \calN} \Abs{n_T(s) (s - \by_T(s))}} = \E\Brack{\calK_1\Par{p_{[T]}, y_{[T]}}}\le \eps,\]
and the second inequality is exactly
\[\E\Brack{\frac 1 T \sum_{t \in [T]} \ell(p_t, y_t) - \ell(q_t, y_t)} = \E\Brack{\ell\Par{p_{[T]}, y_{[T]}} - \ell\Par{q_{[T]}, y_{[T]}}} \le \eps^2.\]
\end{proof}

As shown in Theorem~\ref{thm:lb}, Theorem~\ref{thm:recalibration} is minimax optimal up to constant factors.

\subsection{High-probability recalibration}

Theorem~\ref{thm:recalibration} proves that taking $T \approx \eps^{-3}$ suffices for the expected recalibration guarantee in Definition~\ref{def:recal} for Lipschitz proper losses. Here we improve Theorem~\ref{thm:recalibration} to give a high-probability recalibration bound. Our applications in Sections~\ref{sec:calibeat} and~\ref{sec:multi} are fully deterministic except in their reliance on Theorem~\ref{thm:recalibration}, so they also straightforwardly extend to hold with high probability.

To obtain our high-probability result, we separate the prediction output to nature from the feedback used by the approachability procedure. On round $t$, after nature has fixed $(q_t,y_t)$ and revealed $q_t$, the procedure chooses a mixed action $\va_t$ and samples and outputs $p_t\sim\va_t$ before $y_t$ is revealed. Thus, future hint-label pairs may depend on the realized predictions $p_{[t]}$, as allowed by Model~\ref{model:hsp}. After $y_t$ is revealed, however, the internal approachability update uses the deterministic mixed action $\va_t$ rather than its sampled point mass. Using the same action, context, and approachability sets \eqref{eq:ab_def_recal}, \eqref{eq:uv1def_recal}, \eqref{eq:uv2def_recal} as before gives the following pathwise guarantee for the mixed actions.

\begin{lemma}\label{lem:det_sba}
Let $\ell: [0, 1] \times \{0, 1\} \to \R$ be proper such that Assumption~\ref{assume:lip} holds, and let $\eps \in (0, \half)$. In the setting of Model~\ref{model:hsp}, there is an even $N=\Theta(\sqrt L/\eps)$ and an algorithm that outputs $\va_{[T]} \in (\Delta^{\calN})^T$ (where each $\va_t$ has the same measurability assumptions as each $p_t$ in Model~\ref{model:hsp}), such that the following conditions simultaneously hold deterministically, if for an appropriate constant, $T = \Omega(\frac{L^2}{\eps^3})$:
\begin{equation}\label{eq:det_sba_conds}
\norm{\vc\Par{\va_{[T]}, y_{[T]}}}_1 \le \frac \eps 2,\quad \frac 1 T \sum_{t \in [T]} \Par{\ell\Par{\va_t, y_t} - \ell(q_t, y_t)} \le \frac{\eps^2} 2,
\end{equation}
where we overload the definition $\ell(\va, y) \defeq \sum_{s \in \calN} \va_s \ell(s, y)$ for any $\va \in \Delta^{\calN}$, and we define the miscalibration vector $\vc(\va_{[T]}, y_{[T]}) \in \R^{\calN}$ by
\begin{equation}\label{eq:miscal_def}\Brack{\vc(\va_{[T]}, y_{[T]})}_{s} = \frac 1 T \sum_{t \in [T]} \Brack{\va_t}_s \Par{s - y_t},\text{ for all } s \in \calN.\end{equation}
Moreover, each $\va_t$ is deterministically supported on at most two strategies $s, s'$, with $|s - s'| \le \frac \eps 2$.
\end{lemma}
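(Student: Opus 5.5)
We rerun the proof of Theorem~\ref{thm:recalibration} with one change: in Algorithm~\ref{alg:sba} we skip the sampling step (Line~\ref{line:subsample}) and set $\vp_t \gets \va_t$ deterministically. This is legitimate because the framework only requires $\E[\vp_t \mid \va_t] = \va_t$, and a point mass on $\va_t$ trivially satisfies it. All feedback $\vv^{(i)}_t = \vv^{(i)}(\va_t, q_t, y_t)$ and the surrogate $\tvg_t$ are then deterministic functions of the history. Hence $\tvg_t = \vg_t$ exactly, and every expectation in the proof of Theorem~\ref{thm:sba} can be removed.

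Concretely, the mirror-descent regret bound of Lemma~\ref{lem:mirror} is pathwise. The oracle inequality $\inprod{\vw_t}{\vg_t} \le 0$ also holds pathwise, for every realized $(q_t, y_t)$, since Definition~\ref{def:mloo} quantifies over all $y$. So \eqref{eq:middle_layer} holds without expectation. The regret bound \eqref{eq:one_reg_bound} for $\alg^{(1)}$ (projected gradient descent, Lemma~\ref{lem:recal_regret}) is likewise deterministic for any adaptively chosen sequence of bounded feedback vectors, and here each feedback satisfies $\norm{\vv^{(1)}(\va,q,y)}_1 \le 2$ for mixed $\va$ as well. The nature-side adaptivity is harmless: the internal state depends only on $\va_{[t-1]}$, $q_{[t]}$, $y_{[t-1]}$, and the bounds hold for every sequence. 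We therefore obtain the deterministic analogue of Corollary~\ref{cor:sba_imbalanced_e}.

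For the first coordinate, $\sup_{\vu \in [-1,1]^\calN} \inprod{\vu}{\frac1T\sum_t \vv^{(1)}(\va_t,q_t,y_t)} = \norm{\vc(\va_{[T]},y_{[T]})}_1$. For the second, $\frac1T\sum_t v^{(2)}(\va_t,q_t,y_t) = \frac1T\sum_t(\ell(\va_t,y_t)-\ell(q_t,y_t))$ under the overloaded $\ell$.

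It remains to tune the constants to reach the halved targets $(\eps/2,\eps^2/2)$. Choose $N = 2\lceil 4\sqrt L/\eps\rceil$, which is even and $\Theta(\sqrt L/\eps)$. By Lemma~\ref{lem:recal_mloo}, this gives $\eps^{(1)} = 1/N \le \eps/8$ and $\eps^{(2)} = 4L/N^2 \le \eps^2/16$. Run Lemma~\ref{lem:recal_regret} with its parameter scaled to $\eps/2$, so that $\reg^{(1)}(T)/T \le \eps/16 + O(N/(\eps T))$. Take $\eta = c\,\eps^2/L^2$ for a small constant $c$, so that $4\eta L^2 \le \eps^2/8$. Finally take $T \ge C L^2/\eps^3$, so that $1/(\eta T) \le \eps/8$ and $N/(\eps T) = O(\sqrt L/(\eps^2 T))$ is at most $\eps/16$, using $L \ge 1$.

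Summing, the first bound is at most $\eps/2$ and the second is at most $\eps^2/16 + \eps^2/8 \le \eps^2/2$. The width bound \eqref{eq:width_bound} holds with the same $L$ as before.

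The support claim follows directly from Lemma~\ref{lem:recal_mloo}. Each output is supported on neighbors $\{j/N,(j+1)/N\}$ (this includes the $\alpha = 0$ limiting case), so $|s-s'| \le 1/N \le \eps/8 \le \eps/2$.

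The only delicate point is confirming that nothing in the proof of Theorem~\ref{thm:sba} used randomness beyond the unbiasedness step. Since $\tvg_t = \vg_t$ here, that step becomes an identity, and the argument goes through verbatim.
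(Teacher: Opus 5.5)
Your proposal is correct and matches the paper's proof. You feed $\va_t$ itself into the internal updates, so the conditional-unbiasedness step of Theorem~\ref{thm:sba} becomes an identity and every bound in Corollary~\ref{cor:sba_imbalanced_e} holds pathwise; you then retune the constants of Theorem~\ref{thm:recalibration} to reach $(\eps/2, \eps^2/2)$ and read the neighboring two-point support off Lemma~\ref{lem:recal_mloo}. Your explicit choices $N = 2\lceil 4\sqrt L/\eps\rceil$ and $\eta = c\,\eps^2/L^2$ fill in the ``up to constant factors'' step that the paper leaves implicit.
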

\begin{proof}
Run the proof of Corollary~\ref{cor:sba_imbalanced_e} with $\vp_t=\va_t$ in every internal update. The conditional-unbiasedness step in that proof then becomes a pointwise equality, so its guarantees hold pathwise for every realized adaptive sequence. Combining this observation with our approachability sets \eqref{eq:uv1def_recal}, \eqref{eq:uv2def_recal}, using the same parameters as in Theorem~\ref{thm:recalibration} up to constant factors, gives \eqref{eq:det_sba_conds}. The condition on each $\va_t$ results from our MLOO construction in Lemma~\ref{lem:recal_mloo}, using that $\frac 1 N \le \frac \eps 2$.
\end{proof}

We next bound the error from sampling each mixed action $\va_t$ into the online prediction $p_t \in \calN$, on both terms in \eqref{eq:det_sba_conds}. The bounds are stated conditionally so that the mixed actions, labels, and future play may all depend on previous sampled predictions.

\begin{lemma}\label{lem:cal_subsample}
Let $\delta \in (0, \half)$, and let $\{\calG_t\}_{t\in[T]}$ describe the history immediately before $p_t$ is sampled, after the current pair $(q_t,y_t)$ has been fixed. Suppose $\va_t\in\Delta^{\calN}$ and $y_t\in\{0,1\}$ are $\calG_t$-measurable, and conditionally sample $p_t\sim\va_t$, so that $\E[\vp_t\mid\calG_t]=\va_t$ for $\vp_t\defeq\ve_{p_t}$. Then if for an appropriate constant,
\[T = \Omega\Par{\frac{|\calN|}{\eps^2}\log\Par{\frac{4|\calN|}{\delta}}},\]
we have with probability $\ge 1 - \frac \delta 2$, 
\[\norm{\vc\Par{\va_{[T]}, y_{[T]}} - \vc\Par{\vp_{[T]}, y_{[T]}}}_1 \le \frac \eps 2.\]
\end{lemma}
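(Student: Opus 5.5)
The plan is to decompose coordinatewise and apply a martingale concentration bound to each coordinate $s \in \calN$, then take a union bound and sum. For each $s\in\calN$, the difference of the $s$-th coordinates is
\[
\Brack{\vc(\va_{[T]}, y_{[T]})}_s - \Brack{\vc(\vp_{[T]}, y_{[T]})}_s = \frac 1 T \sum_{t\in[T]} X_{t,s},\qquad X_{t,s}\defeq \Par{[\va_t]_s - [\vp_t]_s}(s-y_t).
\]
Since $\va_t$ and $y_t$ are $\calG_t$-measurable and $\E[\vp_t\mid\calG_t]=\va_t$, we have $\E[X_{t,s}\mid\calG_t]=0$. Moreover $X_{t,s}$ is measurable with respect to $\calG_{t+1}$, because $p_t$ has been sampled by then. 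Hence $\{X_{t,s}\}_t$ is a martingale difference sequence adapted to the filtration $\{\calG_{t+1}\}$, with $|X_{t,s}|\le 1$.

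A plain Azuma bound gives $|\sum_t X_{t,s}|\lesssim \sqrt{T\log(|\calN|/\delta)}$ per coordinate. Summing over $|\calN|$ coordinates then yields $\norm{\cdot}_1 \lesssim |\calN|\sqrt{\log(|\calN|/\delta)/T}$. This would require $T\gtrsim |\calN|^2\log/\eps^2$, which is off by a factor of $|\calN|$ from the stated horizon. So I would instead exploit the conditional variance. We have $\E[X_{t,s}^2\mid\calG_t]\le [\va_t]_s(1-[\va_t]_s)\le [\va_t]_s$, and these sum over $s$ to at most $1$ per round. I would apply Freedman's inequality to each coordinate, with a union bound over the $|\calN|$ coordinates and failure probability $\delta/(2|\calN|)$ each. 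Writing $\iota\defeq\log(4|\calN|/\delta)$ and $V_s\defeq\sum_t [\va_t]_s$, Freedman bounds $|\sum_t X_{t,s}|$ by roughly $\sqrt{2 V_s \iota}+\iota$.

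The subtlety is that $V_s$ is random, so I would use a version of Freedman that is uniform over the realized variance. Concretely, I would peel over dyadic scales of $V_s\in[0,T]$, which costs an extra $\log T$ inside $\iota$. Alternatively, I would use the self-normalized form with $\lambda$ tuned against a fixed upper bound. The simplest fallback is to use the deterministic bound $V_s\le T$ for the variance parameter in a crude form, but that loses the gain, so I would do the peeling step. This is the main obstacle: handling the random variance while keeping only the $\log(4|\calN|/\delta)$ dependence. If the extra $\log T$ factor must be avoided, it can be absorbed, since $T$ is polynomial in $|\calN|/\eps$ in the regime of interest, up to adjusting the constant.

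On the good event, I would sum over coordinates and apply Cauchy--Schwarz:
\[
\norm{\vc(\va_{[T]},y_{[T]})-\vc(\vp_{[T]},y_{[T]})}_1\le \frac 1 T\sum_{s\in\calN}\Par{\sqrt{2V_s\iota}+\iota}\le \frac{\sqrt{2|\calN|\iota\sum_s V_s}}{T}+\frac{|\calN|\iota}{T}=\sqrt{\frac{2|\calN|\iota}{T}}+\frac{|\calN|\iota}{T},
\]
using $\sum_s V_s = T$. Both terms are at most $\eps/4$ once $T\ge C|\calN|\iota/\eps^2$ for a suitable constant $C$. This gives the claim with probability at least $1-\delta/2$.
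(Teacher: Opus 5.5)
Your argument is correct, but it takes a different and heavier route than the paper.

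\textbf{The paper's route.} The paper never uses conditional variances. It writes $[\vx_t]_s=(\ind(p_t=s)-[\va_t]_s)(s-y_t)$ and uses the identity $\norm{\sum_t\vx_t}_1=\max_{\boldsymbol{\xi}\in\{-1,1\}^{\calN}}\inprod{\boldsymbol{\xi}}{\sum_t\vx_t}$. Each scalar increment $\inprod{\boldsymbol{\xi}}{\vx_t}$ is a martingale difference bounded by $2$, since $\norm{\ve_{p_t}-\va_t}_1\le 2$. Plain Azuma--Hoeffding with a union bound over the $2^{|\calN|}$ sign vectors then gives $\sqrt{8T(|\calN|\log 2+\log(2/\delta))}$.

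\textbf{Comparison.} You are right that coordinatewise Azuma loses a factor of $|\calN|$. The paper's fix is to union-bound over the vertices of the dual $\ell_\infty$ ball rather than over coordinates. This makes $|\calN|$ enter additively with $\log(2/\delta)$, slightly better than the stated horizon, and it avoids random variances entirely. Your fix instead uses that the conditional variances $[\va_t]_s$ sum to at most $1$ per round, and recombines the coordinates by Cauchy--Schwarz. It is more delicate but variance-adaptive: bins that receive little mass cost little.

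\textbf{On the peeling step.} Peeling adds a $\log\log T$ term to $\iota$. This is indeed absorbable here, because $\calN$ is an $\eps$-net, so $|\calN|\gtrsim 1/\eps$ and $\log\log T=O(\iota)$ at the threshold horizon. But peeling is unnecessary. The alternative you mention works cleanly if you use one deterministic $\lambda=\sqrt{|\calN|\iota/T}\le 1$ for all coordinates in the supermartingales $\exp\Par{\pm\lambda\sum_t X_{t,s}-\lambda^2\sum_t\E[X_{t,s}^2\mid\calG_t]}$. Union-bound over the $2|\calN|$ coordinate-sign pairs and sum. This gives $\sum_s\Abs{\sum_t X_{t,s}}\le |\calN|\iota/\lambda+\lambda\sum_s V_s=2\sqrt{|\calN|\iota T}$, because $\sum_s V_s=T$ is deterministic.
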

\begin{proof}
For each $t\in[T]$, define a vector $\vx_t\in\R^{\calN}$ by
\[[\vx_t]_s\defeq\Par{\ind\Par{p_t=s}-[\va_t]_s}(s-y_t),\qquad s\in\calN.\]
For every sign vector $\boldsymbol{\xi}\in\{-1,1\}^{\calN}$, the scalar random variables $Z_t(\boldsymbol{\xi})\defeq\inprod{\boldsymbol{\xi}}{\vx_t}$ form a martingale difference sequence with respect to the sampling history. Indeed, conditional on $\calG_t$,
\[
Z_t(\boldsymbol{\xi})
=\xi_{p_t}(p_t-y_t)-\sum_{s\in\calN}[\va_t]_s\xi_s(s-y_t),
\]
which has conditional mean zero and absolute value at most $2$. Therefore, Azuma--Hoeffding's inequality and a union bound over the $2^{|\calN|}$ sign vectors imply that, with probability at least $1-\delta/2$,
\[
\norm{\sum_{t\in[T]}\vx_t}_1
=\max_{\boldsymbol{\xi}\in\{-1,1\}^{\calN}}\sum_{t\in[T]}Z_t(\boldsymbol{\xi})
\le \sqrt{8T\Par{|\calN|\log 2+\log\Par{\frac{2}{\delta}}}}.
\]
Since
\[
\norm{\vc\Par{\va_{[T]},y_{[T]}}-\vc\Par{\vp_{[T]},y_{[T]}}}_1
=\frac1T\norm{\sum_{t\in[T]}\vx_t}_1,
\]
the conclusion follows from the stated lower bound on $T$.
\end{proof}

\begin{lemma}\label{lem:loss_subsample}
Let $\delta \in (0, \half)$, and suppose the conditional sampling setup of Lemma~\ref{lem:cal_subsample} holds. Also, suppose that Assumption~\ref{assume:lip} holds, and that each $\va_t$ is supported on at most two strategies $s, s'$, with $|s - s'| \le \frac \eps 2$. Then if for an appropriate constant,
\[T = \Omega\Par{\frac{L^2\log\Par{\frac 1 {\delta\eps}}}{\eps^2}},\]
we have with probability $\ge 1 - \frac \delta 2$, 
\[\frac 1 T \sum_{t \in [T]} \Par{\ell\Par{p_t, y_t} - \ell\Par{\va_t, y_t}} \le \frac {\eps^2} 2,\]
where $\ell(\va_t,y_t)\defeq \sum_{s\in\calN}[\va_t]_s\ell(s,y_t)$.
\end{lemma}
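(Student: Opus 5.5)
The plan is a martingale concentration argument, as in Lemma~\ref{lem:cal_subsample}, except that the increments are now scalars whose range shrinks with $\eps$. Define
\[
Z_t \defeq \ell(p_t,y_t)-\ell(\va_t,y_t).
\]
Since $\va_t$ and $y_t$ are $\calG_t$-measurable and $p_t\sim\va_t$ conditionally on $\calG_t$, we get $\E[Z_t\mid\calG_t]=\sum_s[\va_t]_s\ell(s,y_t)-\ell(\va_t,y_t)=0$. Moreover $Z_t$ is measurable with respect to $\calG_{t+1}$, because $\calG_{t+1}$ contains the realized $p_t$. So $\{Z_t\}$ is a martingale difference sequence with respect to the sampling history.

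The key step is a range bound. The support of $\va_t$ is contained in $\{s,s'\}$ with $|s-s'|\le\frac\eps2$. Both $\ell(p_t,y_t)$ and the convex combination $\ell(\va_t,y_t)$ lie between $\ell(s,y_t)$ and $\ell(s',y_t)$. By Assumption~\ref{assume:lip}, these two values differ by at most $\frac{L\eps}{2}$. Hence $|Z_t|\le \frac{L\eps}{2}$ deterministically, and conditionally on $\calG_t$ the variable $Z_t$ lies in an interval of length at most $\frac{L\eps}2$.

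Azuma--Hoeffding then gives, with probability at least $1-\frac\delta2$,
\[
\frac1T\sum_{t\in[T]}Z_t\le \frac{L\eps}{2}\sqrt{\frac{2\log(2/\delta)}{T}}.
\]
We want the right-hand side to be at most $\frac{\eps^2}{2}$, which requires $T\ge \frac{2L^2\log(2/\delta)}{\eps^2}$. This is implied by the hypothesis $T=\Omega\Par{\frac{L^2\log(\frac1{\delta\eps})}{\eps^2}}$ with a suitable constant, since $\eps<1$ gives $\log\frac2\delta\le\log\frac{2}{\delta\eps}$.

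No step is a serious obstacle. The only points needing care are that the adaptive measurability is handled by the conditional sampling setup, and that the two-point support with gap $\frac\eps2$ is what makes the range scale as $L\eps$ rather than $L$. Without that support condition, the range would be of order $L$, and the horizon would degrade to order $\eps^{-4}$.
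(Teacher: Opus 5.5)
Your proposal is correct and is the paper's proof. It uses the same martingale difference $\ell(p_t,y_t)-\ell(\va_t,y_t)$, the same deterministic range bound $\frac{L\eps}{2}$ from Lipschitzness and the two-point support, and the same Azuma--Hoeffding step; your explicit horizon $T \ge 2L^2\log(2/\delta)/\eps^2$ also shows the $\eps$ inside the logarithm in the stated bound is not needed.
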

\begin{proof}
For all $t \in [T]$, define a random variable 
\[Y_t \defeq \ell(p_t, y_t) - \ell(\va_t, y_t).\]
Then $\E[Y_t\mid\calG_t]=0$, so $Y_t$ is a martingale difference sequence satisfying
\[\Abs{Y_t} \le \frac{L\eps}{2} \text{ with probability } 1.\]
Here, we used Lipschitzness of $\ell$, and the support assumption on $\va_t$. Thus, for $T$ taken as stated, the Azuma-Hoeffding inequality shows the desired claim holds with probability $\ge 1 - \frac \delta 2$:
\[\frac 1 T \sum_{t \in [T]} Y_t \le \frac {\eps^2} 2.\]
\end{proof}

We conclude with our high-probability recalibration result, by combining Lemmas~\ref{lem:det_sba},~\ref{lem:cal_subsample}, and~\ref{lem:loss_subsample}.

\begin{corollary}\label{cor:recal_whp}
Let $\ell: [0, 1] \times \{0, 1\} \to \R$ be proper such that Assumption~\ref{assume:lip} holds, and let $(\eps, \delta) \in (0, \half)^2$. In the setting of Model~\ref{model:hsp}, there is an algorithm that outputs $p_{[T]} \in [0, 1]^T$ such that the following conditions hold with probability $\ge 1 - \delta$:
\begin{align*}
\calK_1\Par{p_{[T]}, y_{[T]}} \le \eps,\quad \ell\Par{p_{[T]}, y_{[T]}} - \ell\Par{q_{[T]}, y_{[T]}} \le \eps^2,
\end{align*}
if for an appropriate constant,
\[T = \Omega\Par{\frac{L^2}{\eps^3} + \frac{\sqrt L\log\Par{\frac{\sqrt L}{\delta\eps}}}{\eps^3} + \frac{L^2\log\Par{\frac 1 {\delta\eps}}}{\eps^2}}.\]
\end{corollary}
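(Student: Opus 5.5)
The plan is to run the algorithm of Lemma~\ref{lem:det_sba}, whose internal approachability updates use the mixed actions $\va_t$, and to output a fresh sample $p_t \sim \va_t$ on every round. Each of the two target inequalities then splits into a deterministic part, handled by \eqref{eq:det_sba_conds}, and a sampling part, handled by Lemmas~\ref{lem:cal_subsample} and~\ref{lem:loss_subsample}. A union bound over the two sampling failure events, each of probability at most $\frac\delta2$, gives overall success probability at least $1-\delta$.

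\textbf{Calibration.} With $\vp_t = \ve_{p_t}$, the definition \eqref{eq:miscal_def} gives $\norm{\vc(\vp_{[T]}, y_{[T]})}_1 = \calK_1(p_{[T]}, y_{[T]})$, since all $p_t$ lie in $\calN$. By the triangle inequality,
\[\calK_1\Par{p_{[T]},y_{[T]}} \le \norm{\vc\Par{\va_{[T]},y_{[T]}}}_1 + \norm{\vc\Par{\va_{[T]},y_{[T]}}-\vc\Par{\vp_{[T]},y_{[T]}}}_1 \le \frac\eps2+\frac\eps2 .\]
The first bound is Lemma~\ref{lem:det_sba}. The second is Lemma~\ref{lem:cal_subsample}, whose measurability hypotheses hold because $\va_t$ is computed from the history and $q_t$, while $y_t$ is fixed by nature before $p_t$ is sampled.

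\textbf{Loss.} We write
\[\ell\Par{p_{[T]},y_{[T]}}-\ell\Par{q_{[T]},y_{[T]}} = \frac1T\sum_{t\in[T]}\Par{\ell(p_t,y_t)-\ell(\va_t,y_t)} + \frac1T\sum_{t\in[T]}\Par{\ell(\va_t,y_t)-\ell(q_t,y_t)} .\]
Lemma~\ref{lem:loss_subsample} bounds the first sum by $\frac{\eps^2}{2}$, and \eqref{eq:det_sba_conds} bounds the second by $\frac{\eps^2}{2}$. Lemma~\ref{lem:loss_subsample} needs each $\va_t$ to be supported on two points at distance at most $\frac\eps2$, and this is supplied by the last claim of Lemma~\ref{lem:det_sba}.

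\textbf{Horizon.} The main point to check is that the three horizon requirements combine into the stated bound. Lemma~\ref{lem:det_sba} needs $T=\Omega(L^2/\eps^3)$. Lemma~\ref{lem:cal_subsample} needs $T=\Omega(|\calN|\eps^{-2}\log(4|\calN|/\delta))$, and with $|\calN| = N+1 = \Theta(\sqrt L/\eps)$ this becomes $\Omega(\sqrt L\,\eps^{-3}\log(\sqrt L/(\delta\eps)))$, after absorbing constants inside the logarithm using $\eps,\delta<\frac12$ and $L\ge1$. Lemma~\ref{lem:loss_subsample} needs $T=\Omega(L^2\eps^{-2}\log(1/(\delta\eps)))$. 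Taking $T$ at least a large constant times the sum of these three terms satisfies all three lemmas at once, which is exactly the stated horizon.
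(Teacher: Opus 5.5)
Your proposal is correct and matches the paper's proof: you run the mixed-action procedure of Lemma~\ref{lem:det_sba} with internal updates on $\va_t$ while outputting fresh samples $p_t \sim \va_t$, combine \eqref{eq:det_sba_conds} with Lemmas~\ref{lem:cal_subsample} and~\ref{lem:loss_subsample} via a union bound, and match the three horizon terms to the three lemmas using $|\calN| = \Theta(\sqrt L/\eps)$. Your explicit triangle-inequality and loss-splitting steps spell out the ``combining'' that the paper leaves implicit.
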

\begin{proof}
Run the mixed-action procedure from Lemma~\ref{lem:det_sba} with the even choice $N=\Theta(\sqrt L/\eps)$ specified there. On each round, conditionally sample and output $p_t\sim\va_t$ before $y_t$ is revealed, while using $\va_t$ itself in the internal approachability update after the reveal. The pathwise guarantees \eqref{eq:det_sba_conds} therefore hold for the realized adaptive sequence. Since $|\calN|=N+1=O(\sqrt L/\eps)$, the three terms in the stated horizon respectively imply the conditions of Lemmas~\ref{lem:det_sba}, \ref{lem:cal_subsample}, and \ref{lem:loss_subsample}. The two sampling conclusions hold simultaneously with probability at least $1-\delta$ by a union bound, and combining them with \eqref{eq:det_sba_conds} gives the claim.
\end{proof}

\section{Calibeating and Calibration}\label{sec:calibeat}

In this section, we apply Theorems~\ref{thm:recalibration} and~\ref{thm:l2recalibration} to improve existing calibeating algorithms, to simultaneously ensure calibration (Definition~\ref{def:calerr}) and $\eps^2$-calibeating (Definition~\ref{def:calibeating}) in $T \approx \eps^{-3}$ iterations. Previously, these guarantees were known in isolation \cite{FosterV98, FosterH23}, but not simultaneously.

While natural in its own right, this problem is well-motivated from the perspective of the \emph{refinement score} \eqref{eq:refinement}, the loss benchmark used in the literature on calibeating \cite{LeeNPR22, FosterH23, ChenHJL26}. Recall that the refinement score is the loss achieved by using the hindsight average-label predictor $\by_T(v)$, in all iterations where $q_t = v$. This hindsight predictor is infeasible to implement without knowledge of $y_{[T]}$, so calibeating aims to compete with its loss in an online setting. 

Beyond achieving a strong loss guarantee, however, the hindsight predictor $v \to \by_T(v)$ is clearly calibrated. On the other hand, existing calibeating algorithms do not ensure calibration at the same rate as \cite{FosterV98}; requiring this additional property has thus far resulted in worse asymptotics. The conclusion of \cite{ChenHJL26} asks whether this loss is necessary, i.e., whether one can match the best separate rates for calibeating and calibration simultaneously for the Brier loss, and whether analogous guarantees hold beyond the Brier loss. We answer this question affirmatively for the $\calK_1$ calibration error and all Lipschitz proper losses, including the Brier loss as a special case.

We begin by providing a calibeating result (Proposition~\ref{prop:online_offline_gap}) that extends special cases in \cite{FosterH23} to all smooth proper losses. Particularly, \cite{FosterH23} proved variants of Proposition~\ref{prop:online_offline_gap} for the squared loss and (truncated) logistic loss \eqref{eq:special_loss}, which we extend to hold for a more general family.

To state our result, we require a standard reformulation of proper losses as a representative univariate function, often called the \emph{Savage representation} \cite{Savage71, GneitingR07}.

\begin{lemma}[Theorem 1, \cite{GneitingR07}]\label{lem:savage}
Let $\ell: [0, 1] \times \{0, 1\} \to \R$ be proper, and define
\begin{equation}\label{eq:entropy_def}H(p) \defeq (1 - p)\ell(p, 0) + p\ell(p, 1) = \E_{y \sim \Bern(p)}[\ell(p, y)] \text{ for all } p \in [0, 1].\end{equation}
Then $H$ is concave. Moreover, if $H$ is differentiable, we have the equivalence
\[\ell(p, y) = H(p) + H'(p)(y - p),\text{ for all } y \in \{0, 1\}.\]
\end{lemma}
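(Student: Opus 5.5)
The plan is to obtain concavity of $H$ directly from properness, and then get the representation by comparing first-order conditions.

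\emph{Concavity.} By properness (Definition~\ref{def:proper}), for every $p$,
\[H(p) = \min_{q \in [0,1]} \Brace{(1-p)\ell(q,0) + p\ell(q,1)}.\]
For each fixed $q$ the bracketed expression is affine in $p$. Hence $H$ is a pointwise infimum of affine functions of $p$, and so it is concave.

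\emph{Representation.} Fix $q$ and define the affine function
\[A_q(p) \defeq (1-p)\ell(q,0) + p\ell(q,1).\]
Properness gives $A_q(p) \ge H(p)$ for all $p$, with equality at $p = q$. So $A_q$ is an affine majorant of $H$ that touches $H$ at $q$.

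For interior $q \in (0,1)$ with $H$ differentiable, the function $A_q - H$ has a minimum at the interior point $q$. Its derivative must therefore vanish there, which gives
\[H'(q) = \ell(q,1) - \ell(q,0).\]
Substituting into $H(q) + H'(q)(y-q)$:
\begin{itemize}
\item at $y = 1$ we get $(1-q)\ell(q,0) + q\ell(q,1) + (1-q)\bigl(\ell(q,1) - \ell(q,0)\bigr) = \ell(q,1)$;
\item at $y = 0$ we get $(1-q)\ell(q,0) + q\ell(q,1) - q\bigl(\ell(q,1) - \ell(q,0)\bigr) = \ell(q,0)$.
\end{itemize}
This is the claimed identity.

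\emph{Endpoints (main obstacle).} At $q \in \{0,1\}$, only a one-sided derivative is available. The first-order condition then yields only an inequality, for example $H'(0^+) \ge \ell(0,1) - \ell(0,0)$. To get equality I would pass to the limit from the interior, using continuity of $H'$ and of $\ell(\cdot,y)$. Continuity of $\ell(\cdot,y)$ holds under Assumption~\ref{assume:lip}, and a concave differentiable function has continuous derivative. Alternatively, I would interpret $H'$ at the endpoints as the slope of the supporting line $A_q$, which is how ``differentiable'' should be read there.

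Apart from this endpoint issue, the argument is a direct envelope/supporting-line argument.
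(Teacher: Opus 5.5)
Your argument is correct. It is the standard supporting-line proof behind the cited result. The paper gives no proof of its own beyond the citation to \cite{GneitingR07}, and the same inequality $A_q\ge H$ with equality at $q$ is what the paper uses in the proof of Lemma~\ref{lem:lip_implies_smooth}.

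Two corrections to your endpoint discussion.

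First, at $q=0$ the one-sided condition is $H'(0^+)\le \ell(0,1)-\ell(0,0)$, not $\ge$. A supporting line of a concave function at its left endpoint has slope at least the right derivative. The slip is harmless, since you pass to the limit anyway.

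Second, you are right that the endpoints need an extra hypothesis, and this is more than a technicality. Take $\lsq$ but redefine $\ell(0,1)\defeq 5$. The loss stays proper, since for $p>0$ the prediction $0$ only becomes worse. Also $H(p)=p(1-p)$ is unchanged and differentiable. Yet $H(0)+H'(0)\cdot(1-0)=1\neq 5=\ell(0,1)$.

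So at $p\in\{0,1\}$ the identity genuinely needs continuity of $\ell(\cdot,y)$, for example from Assumption~\ref{assume:lip}. Under that assumption your limiting argument is complete: $\ell(\cdot,y)$ and $H$ are continuous, and $H'$ is continuous on $[0,1]$ for a concave function differentiable there. Reinterpreting $H'$ at the endpoints as the slope of $A_q$ would change the statement rather than prove it.

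This endpoint case also matters downstream: Proposition~\ref{prop:online_offline_gap} applies the identity at $\by_{t-1}\in\{0,1\}$, for instance $\by_0=0$.
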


For the rest of the section, we fix a proper loss $\ell$ and identify it with its univariate Savage representation $H$. We make the following assumption on $H$ to parameterize our results.

\begin{assumption}[$G$-smoothness]\label{ass:gamma}
$H$ is differentiable, and there exists $G \ge 1$ such that for $\omega \defeq -H$,
\begin{equation}\label{eq:bregman_smooth}
  D_\omega(p \| q) \leq \frac{G}{2}(p-q)^2
 \text{ for all } (p,q) \in [0,1]^2.
\end{equation}
\end{assumption}

\begin{remark}
Assumption~\ref{ass:gamma} is only used in Proposition~\ref{prop:online_offline_gap}, and not in the recalibration step afterwards. We believe that Proposition~\ref{prop:online_offline_gap} likely extends to a broader range of losses than those satisfying Assumption~\ref{ass:gamma}: for example, \cite{FosterH23} give a somewhat more ad hoc argument that handles the cross entropy loss $\llog$, for which Assumption~\ref{ass:gamma} does not hold with $G < \infty$. We give Proposition~\ref{prop:online_offline_gap} under a smoothness bound for simplicity, as this covers the squared loss $\lsq$ as a special case.
\end{remark}

It is helpful to note that Assumption~\ref{ass:gamma} is implied by Assumption~\ref{assume:lip} with a slight blowup in parameters. However, we choose to isolate Assumption~\ref{ass:gamma} because it lets us state Proposition~\ref{prop:online_offline_gap} in a convenient way, which may be of independent future interest.

\begin{lemma}\label{lem:lip_implies_smooth}
Assumption~\ref{assume:lip} implies Assumption~\ref{ass:gamma} with $G \gets 2L$.
\end{lemma}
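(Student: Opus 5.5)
The plan is to identify an explicit supergradient of $H$, show it is Lipschitz, and then integrate. Define $g(p) \defeq \ell(p,1) - \ell(p,0)$. The key observation is that, by the affine-in-$y$ form of the expected loss and properness of $\ell$,
\[
H(p') = (1-p')\ell(p',0) + p'\ell(p',1) \le (1-p')\ell(p,0) + p'\ell(p,1) = H(p) + g(p)(p'-p)
\]
holds for all $p, p' \in [0,1]$. Hence $g(p)$ is a supergradient of the concave function $H$ at every $p$. This also reproves concavity as in Lemma~\ref{lem:savage}. Under Assumption~\ref{assume:lip}, $g$ is $2L$-Lipschitz, since it is a difference of two $L$-Lipschitz functions.

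Next I will show that $H$ is differentiable with $H' = g$; this is the only delicate step. For concave $H$ on the interior $(0,1)$, the one-sided derivatives satisfy
\[
H'_-(p) \ge g(p) \ge H'_+(p),
\]
and both are monotone nonincreasing. Moreover, $H'_-(p) = \lim_{s\uparrow p} H'_+(s)$ and $H'_+(p) = \lim_{s \downarrow p} H'_-(s)$. Sandwiching gives $g(s) \ge H'_+(s) \ge H'_-(p) \ge g(p)$ for $s < p$. Continuity of $g$ then forces $H'_-(p) = g(p)$, and symmetrically $H'_+(p) = g(p)$. So $H$ is differentiable on $(0,1)$ with continuous derivative $g$. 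At the endpoints $0$ and $1$, I interpret $H'$ as the one-sided derivative, which equals $g$ by the same limiting argument. This is consistent with how Assumption~\ref{ass:gamma} is used, since $H$ is then $C^1$ on $[0,1]$.

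Finally, with $\omega = -H$ and $\omega' = -g$, the fundamental theorem of calculus gives
\[
D_\omega(p\|q) = \omega(p) - \omega(q) - \omega'(q)(p-q) = \int_q^p \Par{g(q) - g(s)}\,ds.
\]
For either ordering of $p$ and $q$, the $2L$-Lipschitz bound on $g$ yields
\[
D_\omega(p\|q) \le \Abs{\int_q^p 2L\,|s-q|\,ds} = L(p-q)^2 = \frac{2L}{2}(p-q)^2,
\]
which is \eqref{eq:bregman_smooth} with $G = 2L$. The condition $G \ge 1$ holds because $L \ge 1$.
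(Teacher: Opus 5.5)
Your proposal is correct and follows essentially the same route as the paper. Both arguments use properness to show that $\ell(p,1)-\ell(p,0)$ is a supergradient of $H$ (the paper equivalently uses $\psi=\ell(p,0)-\ell(p,1)$ as a subgradient of $\omega$). Both then note this selection is $2L$-Lipschitz, deduce differentiability from continuity of the selection, and conclude the quadratic Bregman bound. The only difference is that you verify the differentiability step and the Lipschitz-gradient-to-quadratic-upper-bound step by hand, via one-sided derivatives and integration, where the paper cites Theorem 2.1.5 of \cite{Nesterov03}.
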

\begin{proof}
Let $\omega\defeq -H$ and $\psi(p)\defeq \ell(p,0)-\ell(p,1)$. For any $p,r\in[0,1]$, properness gives
\[
H(r)\le (1-r)\ell(p,0)+r\ell(p,1)
=H(p)+(r-p)(\ell(p,1)-\ell(p,0)).
\]
Equivalently,
\[
\omega(r)\ge \omega(p)+\psi(p)(r-p),
\]
so $\psi(p)\in\partial\omega(p)$ for every $p\in[0,1]$. Assumption~\ref{assume:lip} and the triangle inequality imply
\[
|\psi(p)-\psi(q)|
\le |\ell(p,0)-\ell(q,0)|+|\ell(p,1)-\ell(q,1)|
\le 2L|p-q|.
\]
Thus $\psi$ is a continuous subgradient selection of the one-dimensional convex function $\omega$, and hence $\omega$ is differentiable with $\omega'=\psi$ (using one-sided derivatives at the endpoints). By the standard equivalence between Lipschitz gradients and quadratic upper bounds for Bregman divergence (Theorem 2.1.5, \cite{Nesterov03}), $\omega$ is $2L$-smooth on $[0,1]$, which is Assumption~\ref{ass:gamma} with $G\gets 2L$.
\end{proof}

Next, to state our main helper result, analogously to the (offline) refinement score \eqref{eq:refinement}, we additionally define the \emph{online refinement score} where we use the available prefix average $\by_{t - 1}(q_t)$ at every iteration, rather than the hindsight average $\by_T(q_t)$:
\[\hcalR_\ell\Par{q_{[T]}, y_{[T]}} \defeq \frac 1 T \sum_{t \in [T]} \ell\Par{\by_{t -1 }(q_t), y_t}.\]
We now bound the gap between the online and offline refinement scores.
\begin{proposition}\label{prop:online_offline_gap}
In the setting of Model~\ref{model:hsp}, if $q_{[T]}$ is $s$-sparse and Assumption~\ref{ass:gamma} holds,
\[
  0 \leq \hcalR_\ell\Par{q_{[T]},y_{[T]}} - \calR_\ell\Par{q_{[T]},y_{[T]}}
  \leq \frac{G s\left(\log T + 1\right)}{2T}.
\]
\end{proposition}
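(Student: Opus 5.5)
The plan is to decompose the gap by hint value and reduce each bin to a one-dimensional follow-the-leader computation, where the loss structure from Lemma~\ref{lem:savage} makes the per-bin regret an exact sum of Bregman divergences. Fix a value $v$ taken by $q_{[T]}$. Let the rounds with $q_t = v$ carry labels $z_1, \dots, z_n$ in chronological order, and write $m_k \defeq \frac 1 k \sum_{j \le k} z_j$ for their running averages, with $m_0 \defeq 0$ matching the convention $\by_0(v) = 0$. On the $k^{\text{th}}$ such round, the online refinement score predicts $\by_{t-1}(v) = m_{k-1}$, and the offline score predicts $m_n$. Hence $T(\hcalR_\ell - \calR_\ell)$ is a sum over the at most $s$ distinct values $v$ of
\[
\sum_{k \le n} \ell(m_{k-1}, z_k) - F_n(m_n), \qquad \text{where } F_k(c) \defeq \sum_{j \le k} \ell(c, z_j).
\]
It therefore suffices to show, for each bin, that this quantity lies between $0$ and $\frac G2(\log n + 1)$, and then use $n \le T$.

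\textbf{Key identity.} Let $\omega = -H$. By Lemma~\ref{lem:savage}, for binary $y$ we have $\ell(c,y) = H(c) + H'(c)(y - c) = D_\omega(y \| c) - \omega(y)$. Summing over $j \le k$ and using $\frac 1k \sum_{j \le k} z_j = m_k$, the linear terms collapse to give
\[
F_k(c) - F_k(m_k) = k\Par{H(c) + H'(c)(m_k - c) - H(m_k)} = k\, D_\omega(m_k \| c).
\]

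\textbf{Lower bound.} The identity shows $F_n(c) \ge F_n(m_n)$ for every $c$. Hence the offline score uses the best constant prediction in each bin, and the lower bound follows by comparing termwise against this minimizer. More directly, each term in the exact regret formula below is nonnegative.

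\textbf{Upper bound.} I will use the follow-the-leader telescoping, with $F_0 \equiv 0$. Since $F_k(c) = F_{k-1}(c) + \ell(c, z_k)$, we have $\ell(m_{k-1}, z_k) = F_k(m_{k-1}) - F_{k-1}(m_{k-1})$. Adding and subtracting $F_k(m_k)$ and telescoping the $F_k(m_k) - F_{k-1}(m_{k-1})$ terms gives the exact formula
\[
\sum_{k\le n}\ell(m_{k-1},z_k) - F_n(m_n) = \sum_{k \le n}\Par{F_k(m_{k-1}) - F_k(m_k)} = \sum_{k \le n} k\, D_\omega(m_k \| m_{k-1}).
\]
Next, $m_k - m_{k-1} = \frac{z_k - m_{k-1}}{k}$, so $|m_k - m_{k-1}| \le \frac 1k$; this also holds at $k = 1$ because $m_0 = 0$ and $z_1 \in \{0,1\}$. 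Assumption~\ref{ass:gamma} then bounds each term by $k \cdot \frac G2 \cdot \frac 1{k^2} = \frac{G}{2k}$. Summing the harmonic series gives $\frac G2(\log n + 1) \le \frac G2(\log T + 1)$. Summing over at most $s$ bins and dividing by $T$ yields the claim.

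The main obstacle is getting the constant $\frac G2$ rather than $G$. A naive be-the-leader bound on $\ell(m_{k-1}, z_k) - \ell(m_k, z_k)$ via the three-point identity produces $k D_\omega(m_k \| m_{k-1}) + (k-1) D_\omega(m_{k-1} \| m_k)$, which loses a factor of $2$. The fix is to telescope through $F_k(m_{k-1}) - F_k(m_k)$ exactly, as above, rather than going through the intermediate be-the-leader inequality. I will also check that the endpoint conventions ($m_0 = 0$, and one-sided derivatives of $H$ at $0$ and $1$) do not break the identity $\ell(c,y) = D_\omega(y\|c) - \omega(y)$.
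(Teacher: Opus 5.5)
Your proposal is correct and follows the paper's proof. Both arguments reduce to a single bin and derive the exact identity that the gap equals $\sum_k k\,D_\omega(m_k\|m_{k-1})$. Your telescoping through $F_k(m_{k-1})-F_k(m_k)$, using $F_k(m_k)=kH(m_k)$, is the same telescoping as the paper's identity $\ell(\by_{t-1},y_t)=tH(\by_t)-(t-1)H(\by_{t-1})+tD_\omega(\by_t\|\by_{t-1})$. From there both apply Assumption~\ref{ass:gamma} with $|m_k-m_{k-1}|\le 1/k$ and sum the harmonic series over at most $s$ bins.

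One small slip: your first lower-bound sentence, comparing against the best constant $m_n$, does not by itself give nonnegativity, since the online predictions $m_{k-1}$ vary. Your ``more directly'' argument, that each Bregman term is nonnegative, is correct and is the one the paper uses.
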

\begin{proof}
Throughout denote $\omega \defeq -H$. We first handle the special case where $q_t=v$ for all $t\in[T]$ (i.e., $s = 1$). In this case, denoting $\by_t \defeq \by_t(v)$ for all $t \in [T]$ for simplicity,
\[
\calR_\ell(q_{[T]},y_{[T]})
=
\frac1T\sum_{t\in [T]} \ell(\bar y_T,y_t) = (1-\by_T) \ell(\by_T, 0) + \by_T\ell(\by_T, 1)
=
H(\bar y_T).
\]
Hence,
\begin{equation}\label{eq:gap_def}
\hcalR_\ell(q_{[T]},y_{[T]})-\calR_\ell(q_{[T]},y_{[T]})
=
\frac1T\sum_{t\in[T]} \ell(\bar y_{t-1},y_t)-H(\bar y_T).
\end{equation}
Moreover, for every $t\in[T]$, $\bar y_t=\bar y_{t-1}+\frac1t(y_t-\bar y_{t-1})$, 
so Lemma~\ref{lem:savage} gives
\begin{align*}
\ell(\bar y_{t-1},y_t)
&=
H(\bar y_{t-1})
+
tH'(\bar y_{t-1})(\bar y_t-\bar y_{t-1}) = tH(\by_t) - (t - 1)H(\by_{t- 1}) + tD_\omega\Par{\by_t \| \by_{t - 1}}.
\end{align*}
Telescoping the above identity over all $t \in [T]$, and plugging this into \eqref{eq:gap_def}, gives
\[
\hcalR_\ell(q_{[T]},y_{[T]})-\calR_\ell(q_{[T]},y_{[T]})
=
\frac1T\sum_{t\in[T]} tD_\omega(\bar y_t\|\bar y_{t-1}).
\]
Next, applying Assumption~\ref{ass:gamma} gives
\[
tD_\omega(\bar y_t\|\bar y_{t-1})
\le
\frac{G t}{2}(\bar y_t-\bar y_{t-1})^2
=
\frac{G}{2t}(y_t-\bar y_{t-1})^2
\le
\frac{G}{2t},
\]
because $y_t,\bar y_{t-1}\in[0,1]$. Therefore,
\[
0
\le
\hcalR_\ell(q_{[T]},y_{[T]})-\calR_\ell(q_{[T]},y_{[T]})
\le
\frac{G}{2T}\sum_{t\in[T]}\frac1t
\le
\frac{G}{2}\cdot\frac{\log T+1}{T}.
\]

For the general case, we apply the preceding argument separately to each ``bin'' $v \in [0, 1]$ such that $q_t = v$ at least once. Let $T_v:=|\{t\in[T] \mid q_t=v\}|$. Both $\calR_\ell$ and $\hcalR_\ell$ decompose over bins, so
\[
0
\le
\hcalR_\ell(q_{[T]},y_{[T]})-\calR_\ell(q_{[T]},y_{[T]})
\le
\sum_v \frac{T_v}{T}\cdot
\frac{G}{2}\cdot \frac{\log T_v+1}{T_v}.
\]
Since there are at most $s$ nonempty bins and each $T_v\le T$, the claim follows.
\end{proof}

We are now ready to prove our main result on simultaneous calibeating and calibration.

\begin{theorem}[Simultaneous calibeating and calibration]\label{thm:proper_loss_main}
Let $\ell: [0, 1] \times \{0, 1\} \to \R$ be proper such that Assumption~\ref{assume:lip} holds, and let $\eps\in(0,\half)$. In the setting of Model~\ref{model:hsp}, if $q_{[T]}$ is $s$-sparse, there is an algorithm that outputs $p_{[T]} \in [0, 1]^T$ that is $\eps$-calibrated against $y_{[T]}$ and $\eps^2$-calibeats $(q_{[T]}, y_{[T]})$ under $\ell$, if for an appropriate constant, 
\[T = \Omega\Par{\frac{L^2}{\eps^3}+ \frac{L s\log\Par{\frac{L s}{\eps}}}{\eps^2}}.\]
\end{theorem}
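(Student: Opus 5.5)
The plan is to compose the online refinement predictor of \cite{FosterH23} with the recalibration algorithm of Theorem~\ref{thm:recalibration}. Define the intermediate hint sequence $\tilde q_t \defeq \by_{t-1}(q_t)$. This is computable online: $q_t$ is revealed before $p_t$ is chosen, and $\by_{t-1}(q_t)$ depends only on $q_{[t]}$ and $y_{[t-1]}$. So $(\tilde q_t, y_t)$ is a valid hint–label pair in the sense of Model~\ref{model:hsp}. We feed $\tilde q_{[T]}$ as the hint sequence into the algorithm of Theorem~\ref{thm:recalibration}, run with accuracy parameter $\eps$ but with targets halved. Concretely, we ask for $(\eps, \eps^2/2)$-recalibration, which only changes constants in the horizon $\Omega(L^2/\eps^3)$. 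The output is $p_{[T]}$.

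Calibration then comes for free: the first guarantee of Theorem~\ref{thm:recalibration} gives $\E[\calK_1(p_{[T]},y_{[T]})]\le\eps$ regardless of the hint. For calibeating, we chain three inequalities:
\[
\E\Brack{\ell(p_{[T]},y_{[T]})} \le \E\Brack{\ell(\tilde q_{[T]},y_{[T]})} + \frac{\eps^2}{2} = \E\Brack{\hcalR_\ell(q_{[T]},y_{[T]})} + \frac{\eps^2}{2} \le \E\Brack{\calR_\ell(q_{[T]},y_{[T]})} + \frac{Gs(\log T+1)}{2T} + \frac{\eps^2}{2}.
\]
The equality holds by the definition of $\hcalR_\ell$. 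The last step applies Proposition~\ref{prop:online_offline_gap} pathwise and takes expectations; this is legitimate since $q_{[T]}$ is $s$-sparse on every realization. By Lemma~\ref{lem:lip_implies_smooth}, Assumption~\ref{ass:gamma} holds with $G=2L$, so the middle error term is at most $Ls(\log T+1)/T$.

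The remaining step, which I expect to be the only real obstacle, is checking that $Ls(\log T+1)/T\le \eps^2/2$ under the stated horizon. The difficulty is that $T$ appears on both sides, inside the logarithm. A bound of the form $T\ge C\,\frac{Ls}{\eps^2}\log\frac{Ls}{\eps}$ suffices whenever the $L^2/\eps^3$ term is not the dominant one. The reason is the standard fact that $T\ge 2a\log(2a)$ implies $T\ge a\log T$, applied here with $a\approx Ls/\eps^2$; note that $\log(Ls/\eps^2)\le 2\log(Ls/\eps)$ since $\eps<1$. When instead $T$ is forced larger by the $L^2/\eps^3$ term, the map $T\mapsto (\log T+1)/T$ is decreasing for $T\ge 1$, so the inequality persists for every $T$ above the threshold. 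The constant $C$ is fixed to absorb the factor $2$ and the $+1$.

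A minor point is that $\tilde q_t\in[0,1]$ may be adaptively chosen and may depend on past realized predictions. Theorem~\ref{thm:recalibration} holds against arbitrary adaptive hint sequences under Model~\ref{model:hsp}, so this causes no problem.
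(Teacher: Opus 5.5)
Your proposal is correct and follows the paper's proof: define the online refinement $\by_{t-1}(q_t)$, bound its excess over the refinement score by Proposition~\ref{prop:online_offline_gap} with $G = 2L$ from Lemma~\ref{lem:lip_implies_smooth}, and feed it as the hint to Theorem~\ref{thm:recalibration} to obtain $(\eps, \eps^2/2)$-recalibration. You also spell out the $\log T$ horizon bookkeeping, which the paper leaves implicit, and your error term $Ls(\log T+1)/T$ is slightly tighter than the $2Ls(\log T+1)/T$ written in the paper.
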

\begin{proof}
Define a sequence $r_t \defeq \by_{t - 1}(q_t)$, which can be computed online under Model~\ref{model:hsp}. We first observe that if $T$ is taken as stated, $r_t$ $\frac{\eps^2} 2$-calibeats $(q_{[T]}, y_{[T]})$ under $\ell$, because Proposition~\ref{prop:online_offline_gap} shows
\[\frac 1 T \sum_{t \in [T]} \ell(r_t, y_t) \le \calR_\ell\Par{q_{[T]}, y_{[T]}} + \frac{2L s(\log T + 1)}{T} \le \calR_\ell\Par{q_{[T]}, y_{[T]}} + \frac{\eps^2}{2},\]
where we took $G = 2L$ (Lemma~\ref{lem:lip_implies_smooth}).
Now to achieve the result, it is enough to ensure that $p_{[T]}$ is $(\eps, \frac{\eps^2}{2})$-recalibrated against $(r_{[T]}, y_{[T]})$, for which Theorem~\ref{thm:recalibration} suffices when $T$ is taken as stated.
\end{proof}

\begin{remark}[Sparsity in calibeating]\label{rem:sparsity}
As in prior works on calibeating \cite{FosterH23, LeeNPR22, ChenHJL26}, we assume that the hint sequence $q_{[T]}$ is $s$-sparse for some $s$ independent of $T$ (we focus on the regime $s \approx \frac 1 \eps$). We remark that some such sparsity assumption is essentially necessary. More precisely, if each $q_t$ is distinct, then each $\by_T(q_t) = y_t$ and thus the refinement score $\calR_\ell(q_{[T]}, y_{[T]}) = 0$ for any $\ell$ with $\ell(y, y) = 0$ (e.g., those in \eqref{eq:special_loss}). On the other hand, in many simple settings (e.g., $\ell = \lsq$ and each $y_t \sim \Bern(\half)$), it is impossible to obtain $\ell(p_{[T]}, y_{[T]}) = o(1)$, and thus $\alpha$-calibeating cannot occur as we take $\alpha \to 0$. On the other hand, $\alpha \to 0$ is the interesting regime to study asymptotic (re)calibration, because if $\alpha = \Omega(1)$, then writing $T$ asymptotically in $\alpha$ is ill-posed.
\end{remark}

\section{Extensions to Multiple Sequences}\label{sec:multi}

In this section, we generalize our results to the more challenging setting with multiple hint sequences, where multi-recalibration and multi-calibeating are required against \emph{every} hint sequence. We formally define the problems below. Following ideas from \cite{ChenHJL26}, we solve them using a reduction to the single-hint sequence problems we addressed in previous sections. We describe the reduction in \Cref{sec:reduction} and present our multi-sequence results in \Cref{sec:multi-recalibration,sec:multi-calibeating}.

\begin{model}[Multi-hinted sequential prediction]\label{model:mhsp}
In the \emph{multi-hinted sequential prediction} setting, for some $m \in \N$, a learner observes $m$ different \emph{hint} sequences $q_{[T]}^i \in [0, 1]^T$ for all $i \in [m]$, and outputs a \emph{prediction} sequence $p_{[T]} \in [0, 1]^T$ before observing a \emph{label} sequence $y_{[T]} \in \{0, 1\}^T$. We assume $(q_t^{[m]} \defeq \{q_t^i\}_{i \in [m]}, y_t)$, can depend jointly on all previous $(q_{[t - 1]}^{[m]}, p_{[t - 1]}, y_{[t - 1]})$, but not $p_t$; similarly, $p_t$ can depend on $(q_{[t - 1]}^{[m]}, p_{[t - 1]}, y_{[t - 1]})$ and $q_t^{[m]}$, but not $y_t$.
\end{model}

\begin{definition}[Multi-recalibration]\label{def:mrecal}
In the setting of Model~\ref{model:mhsp}, let $\alpha, \beta \ge 0$, and let $\ell: [0, 1] \times \{0, 1\} \to \R$ be a loss. We say $p_{[T]}$ is $(\alpha, \beta)$-recalibrated against $(q_{[T]}^{[m]}, y_{[T]})$ under $\ell$ if for all $i \in [m]$, $p_{[T]}$ is $(\alpha, \beta)$-recalibrated (Definition~\ref{def:recal}) against $q_{[T]}^i$ under $\ell$.
\end{definition}

\begin{definition}[Multi-calibeating]\label{def:mcalibeating}
In the setting of Model~\ref{model:mhsp}, let $\alpha \ge 0$, and let $\ell: [0, 1] \times \{0, 1\} \to \R$ be a loss. We say that $p_{[T]}$ \emph{$\alpha$-multi-calibeats} $(q_{[T]}^{[m]}, y_{[T]})$ under $\ell$ if for all $i\in [m]$, $p_{[T]}$ $\alpha$-calibeats (\Cref{def:calibeating}) $(q_{[T]}^i,y_{[T]})$ under $\ell$.
\end{definition}

\subsection{Hint compression}
\label{sec:reduction}

Following \cite{ChenHJL26}, we handle multiple hint sequences by a reduction to the single-hint sequence problems studied in previous sections. The reduction is achieved via standard results for online agnostic learning: indeed, the key component of the reduction, which we term \emph{hint compression} and formally describe in \Cref{problem:hint} below, can be viewed as equivalent to online agnostic learning.
\begin{definition}[Hint compression]
\label{problem:hint}
    In the setting of \Cref{model:mhsp}, let  $\eps_{\mathrm{hc}} \ge 0$ and let $\ell:[0,1]\times \{0,1\}\to \R$ be a loss. We say $p_{[T]}$ achieves $\eps_{\mathrm{hc}}$-hint compression against $(q_{[T]}^{[m]}, y_{[T]})$ under $\ell$ if for all $i\in [m]$,
    \[
    \E\left[\ell\left(p_{[T]}, y_{[T]}\right) - \ell \left(q_{[T]}^i, y_{[T]}\right)\right] \le \eps_{\mathrm{hc}}.
    \]
\end{definition}
\begin{proposition}[Proposition 3.2, \cite{cesa2006prediction}]
\label{prop:compression-proper}
In the setting of \Cref{model:mhsp}, let $\ell:[0,1]\times \{0,1\}\to \R$ be a proper loss satisfying \Cref{assume:lip}. There is a deterministic algorithm that outputs $p_{[T]}$ achieving $O(\frac{L\log m}{T})$-hint compression.
\end{proposition}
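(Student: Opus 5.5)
The plan is to run an aggregating forecaster over the $m$ hints, treating them as $m$ experts, and to show that a Lipschitz proper loss is \emph{mixable} with mixability constant of order $1/L$. Vovk's aggregating algorithm, which is deterministic, then has cumulative regret at most $\frac{\log m}{\eta}$ against every expert, uniformly over adaptive label sequences. With $\eta = \Theta(1/L)$ and division by $T$, this gives $O(\frac{L\log m}{T})$-hint compression in the sense of Definition~\ref{problem:hint}.

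\textbf{The algorithm.} Keep weights $w_t^i \propto \exp\Par{-\eta\sum_{s<t}\ell(q_s^i,y_s)}$ for $i \in [m]$. Given the current hints $q_t^{[m]}$, output any $p_t\in[0,1]$ satisfying, for both $y\in\{0,1\}$,
\begin{equation*}
\ell(p_t,y)\le -\frac1\eta\log\sum_{i\in[m]} w_t^i\exp\Par{-\eta\,\ell(q_t^i,y)}.
\end{equation*}
If such a $p_t$ exists every round, the standard potential argument finishes the proof. The potential $\Phi_t=-\frac1\eta\log\sum_i \exp\Par{-\eta\sum_{s\le t}\ell(q_s^i,y_s)}$ satisfies $\ell(p_t,y_t)\le \Phi_t-\Phi_{t-1}$, so $\sum_t\ell(p_t,y_t)\le \Phi_T-\Phi_0\le \sum_t\ell(q_t^i,y_t)+\frac{\log m}\eta$ for every $i$. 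This bound is pathwise, so it holds in expectation as well.

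\textbf{Mixability (the main obstacle).} The real work is showing that such a $p_t$ exists, i.e.\ that $\ell$ is $\eta$-mixable for $\eta=\Theta(1/L)$. My first choice is the characterization of mixability for binary proper losses via their weight functions, due to van Erven, Reid and Williamson. Writing $\omega=-H$, the loss is $\eta$-mixable iff $\eta\,\omega''(p)\le \frac{1}{p(1-p)}$ for almost every $p$, since $\frac1{p(1-p)}$ is the weight function of $\llog$, which is $1$-mixable. The proof of Lemma~\ref{lem:lip_implies_smooth} shows that $\omega'=\psi$ with $\psi$ being $2L$-Lipschitz, so $\omega''\le 2L$ almost everywhere. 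Combined with $\frac1{p(1-p)}\ge 4$, this makes $\eta=\frac{2}{L}$ admissible.

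If I wanted to avoid citing that characterization, I would prove mixability directly. Map each prediction to the curve $\vd(p)=(\ell(p,0),\ell(p,1))$, and show that the image of this curve under $\vx\mapsto(e^{-\eta x_1},e^{-\eta x_2})$ bounds a convex region. Then the mixture point $\sum_i w^i(e^{-\eta\ell(q^i,0)},e^{-\eta\ell(q^i,1)})$ is dominated by some point on the curve, and that point gives the required $p_t$. This reduces to a one-dimensional second-derivative sign check, which uses the Savage representation $\ell(p,y)=H(p)+H'(p)(y-p)$ (Lemma~\ref{lem:savage}) together with the curvature bound $\omega''\le 2L$. Monotonicity from Lemma~\ref{lem:monotonicity} and the vertical and horizontal supporting lines at the endpoints $p\in\{0,1\}$ handle the boundary cases. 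Existence of $p_t$ then follows by continuity of $\ell(\cdot,y)$, and $p_t$ can be computed by one-dimensional search, so the algorithm is deterministic.
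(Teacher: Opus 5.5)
Your proposal is correct and follows essentially the same route as the paper. Both run the deterministic aggregating algorithm behind Proposition 3.2 of \cite{cesa2006prediction} with $\eta = \Theta(1/L)$, justified by the van Erven--Reid--Williamson characterization of mixability, $\eta\, w(p) \le \frac{1}{p(1-p)}$; the only difference is how the weight function is bounded: you use $w = \omega'' = \psi' \le 2L$ with $p(1-p)\le \frac14$, while the paper uses $\ell'(p,0) = p\,w(p)$ and $-\ell'(p,1) = (1-p)\,w(p)$ to get $p(1-p)w(p) \le L\min\{p,1-p\}$, and both give mixability constant of order $1/L$.
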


More precisely, Proposition 3.2 of \cite{cesa2006prediction} shows that every $\eta$-mixable loss achieves $\frac{\log m}{\eta T}$-hint compression. To justify the mixability parameter under Assumption~\ref{assume:lip}, recall from Theorem 1 of \cite{ReidW10} that a proper binary loss has a nonnegative weight function $w$ satisfying, for almost every $p\in(0,1)$,
\[\ell'(p,0)=p w(p),\qquad -\ell'(p,1)=(1-p)w(p).\]
The $L$-Lipschitz assumption therefore gives $p(1-p)w(p)\le L\min\{p,1-p\}\le L$. The curvature characterization in Section 3.1 of \cite{ERW12} then implies that the loss is $\eta$-mixable whenever $\eta p(1-p)w(p)\le 1$, so in particular it is $\eta=1/L$-mixable. We also note that Proposition~\ref{prop:compression-proper} is minimax optimal up to constant factors assuming $T = \Omega(\log m)$, by Theorem 3.6 of \cite{cesa2006prediction}.

\subsection{Multi-recalibration}
\label{sec:multi-recalibration}

In this section, we state our generalization of Theorem~\ref{thm:recalibration} to multiple hint sequences.

\begin{theorem}\label{thm:mrecalibration}
Let $\ell: [0, 1] \times \{0, 1\} \to \R$ be proper such that Assumption~\ref{assume:lip} holds, and let $\eps\in(0,\half)$. In the setting of Model~\ref{model:mhsp}, there is an algorithm that outputs $p_{[T]} \in [0, 1]^T$ that is $(\eps, \eps^2)$-multi-recalibrated against $(q_{[T]}^{[m]}, y_{[T]})$ under $\ell$, if for an appropriate constant,
\[T = \Omega\Par{\frac{L^2}{\eps^3} + \frac{L\log m}{\eps^2}}.\]
\end{theorem}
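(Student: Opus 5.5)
The plan is a two-stage reduction: first compress the $m$ hints into a single hint sequence, then recalibrate against that sequence using Theorem~\ref{thm:recalibration}.

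\textbf{Stage 1 (hint compression).} Run the deterministic algorithm of Proposition~\ref{prop:compression-proper} on $(q^{[m]}_{[T]}, y_{[T]})$. It produces an online sequence $r_{[T]}$. Each $r_t$ depends only on $q^{[m]}_{[t]}$ and $y_{[t-1]}$, so $r_t$ is available before $y_t$ is revealed. Because the algorithm is deterministic, the bound holds pathwise: for every $i\in[m]$,
\[
\ell\Par{r_{[T]}, y_{[T]}} - \ell\Par{q^i_{[T]}, y_{[T]}} \le \frac{C L\log m}{T}.
\]
Choosing the constant in $T = \Omega(\frac{L\log m}{\eps^2})$ large enough makes the right-hand side at most $\frac{\eps^2}{2}$.

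\textbf{Stage 2 (recalibration against $r_{[T]}$).} Feed $r_{[T]}$ as the single hint sequence to the algorithm of Theorem~\ref{thm:recalibration}, with target accuracies $(\eps, \frac{\eps^2}{2})$. Rescaling $\eps$ by a constant only changes constants, so $T = \Omega(\frac{L^2}{\eps^3})$ suffices. We must check that the pair $(r_t, y_t)$ is a valid instance of Model~\ref{model:hsp}.
\begin{itemize}
\item $r_t$ is a deterministic function of $q^{[m]}_{[t]}$ and $y_{[t-1]}$.
\item Nature's choice of $(q^{[m]}_t, y_t)$ may depend on the past predictions $p_{[t-1]}$ but not on $p_t$.
\end{itemize}
Hence the induced pair $(r_t, y_t)$ depends on the history only as Model~\ref{model:hsp} permits. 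Since Theorem~\ref{thm:recalibration} holds against arbitrary adaptive hint-label sequences, it yields $\E[\calK_1(p_{[T]},y_{[T]})]\le\eps$ and $\E[\ell(p_{[T]},y_{[T]})-\ell(r_{[T]},y_{[T]})]\le\frac{\eps^2}{2}$.

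\textbf{Combining.} For each $i\in[m]$, add the Stage 2 expected excess loss to the pathwise Stage 1 bound (take expectations of the latter). This gives excess loss at most $\eps^2$ against every $q^i_{[T]}$. The calibration guarantee does not depend on the hint, so it holds uniformly. Summing the two horizon requirements gives $T=\Omega(\frac{L^2}{\eps^3}+\frac{L\log m}{\eps^2})$.

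The main point needing care is the adaptivity check in Stage 2: the compressed hint is itself generated online, and we must confirm it respects the measurability conditions so that Theorem~\ref{thm:recalibration} applies. Determinism of the compression algorithm makes this immediate. The remaining steps are routine.
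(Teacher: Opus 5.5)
Your proposal is correct and matches the paper's proof: compress the $m$ hints into a single sequence via Proposition~\ref{prop:compression-proper} at cost $\frac{\eps^2}{2}$ once $T = \Omega(\frac{L\log m}{\eps^2})$, then apply Theorem~\ref{thm:recalibration} to that compressed sequence with the remaining $\frac{\eps^2}{2}$ excess-loss budget once $T = \Omega(\frac{L^2}{\eps^3})$. Your explicit check that the compressed hint is a valid online hint for Model~\ref{model:hsp} is left implicit in the paper, and it is sound: the recalibration guarantee only needs the learner's round-$t$ sampling to use fresh randomness independent of $(r_t, y_t)$ and of nature's richer multi-hint history.
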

\begin{proof}
    The theorem follows from combining \Cref{prop:compression-proper} and \Cref{thm:recalibration}. 
    Concretely, we apply \Cref{prop:compression-proper} to compress the $m$ hint sequences into a single sequence, incurring $\frac{\eps^2} 2$ additional loss when $T = \Omega(L \log(m) \cdot \eps^{-2})$. We then apply \Cref{thm:recalibration} to achieve recalibration w.r.t.\ the single sequence, incurring additional loss $\frac{\eps^2} 2$ when $T = \Omega(\frac{L^2}{\eps^3})$.
\end{proof}

We make some remarks regarding the optimality of Theorem~\ref{thm:mrecalibration}. For the family of $O(1)$-Lipschitz proper losses, both terms in the iteration complexity of Theorem~\ref{thm:mrecalibration} are necessary up to constant factors. Indeed, $(\eps,\eps^2)$-multi-recalibration implies $(\eps,\eps^2)$-recalibration for a single hint sequence (by taking $m = 1$), as well as $\eps^2$-hint compression (by using the output solely for its excess loss guarantee). Thus, the minimax optimality of Theorem~\ref{thm:mrecalibration} for this family follows from our recalibration lower bound in Theorem~\ref{thm:lb} and Theorem 3.6 of \cite{cesa2006prediction}.

Interestingly, a Lipschitz assumption as in Assumption~\ref{assume:lip}, or another mixability criterion, is necessary for $(\eps, \eps^2)$-multi-recalibration in $\approx \eps^{-3}$ iterations. This is because even $\eps^2$-hint compression against arbitrary bounded proper losses requires $T = \Omega(\log m \cdot \eps^{-4})$ iterations (Theorem 3.7, \cite{cesa2006prediction}).

\subsection{Multi-calibeating and calibration}
\label{sec:multi-calibeating}

We conclude by stating an analog of Theorem~\ref{thm:proper_loss_main} for multiple hint sequences.

\begin{theorem}
\label{thm:multi-proper-loss-main}
    Let $\ell: [0, 1] \times \{0, 1\} \to \R$ be proper such that Assumption~\ref{assume:lip} holds, and let $\eps\in(0,\half)$. In the setting of Model~\ref{model:mhsp}, if $q_{[T]}^{i}$ is $s$-sparse for every $i$, there is an algorithm that outputs $p_{[T]} \in [0, 1]^T$ that is $\eps$-calibrated against $y_{[T]}$ and $\eps^2$-multi-calibeats $(q_{[T]}^{[m]}, y_{[T]})$ under $\ell$, if for an appropriate constant,
\[T = \Omega\Par{\frac{L^2}{\eps^3} + \frac{L s\log\Par{\frac{L s}{\eps}}}{\eps^2} + \frac{L\log m}{\eps^2}}.\]
\end{theorem}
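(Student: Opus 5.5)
The plan is to chain three reductions: per-sequence online refinement, then hint compression, then single-hint recalibration. Each stage will be given an excess-loss budget of $\frac{\eps^2}{3}$; the budgets then add up to $\eps^2$.

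First, for each $i\in[m]$, I will form the online-refined sequence $r^i_t \defeq \by_{t-1}(q^i_t)$. This is the prefix average of labels on the bin of $q^i_t$, and it is computable online under Model~\ref{model:mhsp}. Since each $q^i_{[T]}$ is $s$-sparse, Proposition~\ref{prop:online_offline_gap} together with Lemma~\ref{lem:lip_implies_smooth} (taking $G=2L$) gives
\[\ell\Par{r^i_{[T]},y_{[T]}}\le \calR_\ell\Par{q^i_{[T]},y_{[T]}}+\frac{Ls(\log T+1)}{T}\]
deterministically and for every $i$. The horizon term $\frac{Ls\log(Ls/\eps)}{\eps^2}$ makes the right-hand excess at most $\frac{\eps^2}{3}$. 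The one thing to check here is the standard fact that $T\gtrsim \frac{Ls}{\eps^2}\log\frac{Ls}{\eps}$ implies $\frac{Ls\log T}{T}\lesssim \eps^2$.

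Second, I will run the hint-compression algorithm of Proposition~\ref{prop:compression-proper} on the $m$ refined sequences $r^{[m]}_{[T]}$. This is a valid instance of Model~\ref{model:mhsp}, since each $r^i_t$ depends only on past labels and the current hint. The output is a single deterministic sequence $\tilde q_{[T]}$ satisfying
\[\ell\Par{\tilde q_{[T]},y_{[T]}}\le \ell\Par{r^i_{[T]},y_{[T]}}+O\Par{\frac{L\log m}{T}}\quad\text{for all } i.\]
The term $\frac{L\log m}{\eps^2}$ in the horizon makes this excess at most $\frac{\eps^2}{3}$.

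Third, I will feed $\tilde q_{[T]}$ as the single hint to Theorem~\ref{thm:recalibration}. This is legal because $\tilde q_t$ is computed from information available before $y_t$, so $(\tilde q_t,y_t)$ respects Model~\ref{model:hsp}. I will invoke the theorem with parameters $(\eps,\frac{\eps^2}{3})$; it yields these by rescaling $\eps$ by a constant, at the cost $\frac{L^2}{\eps^3}$ only up to constants. The output $p_{[T]}$ then satisfies $\E[\calK_1(p_{[T]},y_{[T]})]\le\eps$ and
\[\E\Brack{\ell\Par{p_{[T]},y_{[T]}}-\ell\Par{\tilde q_{[T]},y_{[T]}}}\le\frac{\eps^2}{3}.\]

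Chaining the three inequalities for each $i$ and taking expectations gives $\eps^2$-calibeating against every $q^i_{[T]}$, while calibration comes directly from the third stage. The main thing to verify carefully is the adaptivity in the third stage: $\tilde q_{[T]}$ and $y_{[T]}$ may depend on past randomized $p_{[T]}$. Theorem~\ref{thm:recalibration} (via Theorem~\ref{thm:sba}) already allows nature to adapt to the history, so this is covered. Beyond that, the remaining work is the constant bookkeeping in the horizon.
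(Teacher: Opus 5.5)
Your proposal is correct and follows essentially the same route as the paper. The paper refines each hint online (via Proposition~\ref{prop:online_offline_gap} with $G=2L$), compresses the $m$ refined sequences with Proposition~\ref{prop:compression-proper}, and then recalibrates the compressed sequence with Theorem~\ref{thm:recalibration}. Your $\frac{\eps^2}{3}$ budget split and chaining of inequalities match the written-out proof of the $\calK_2$ analog, Theorem~\ref{thm:l2_multi_calibeating}.
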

\begin{proof}
    This theorem follows from combining \Cref{thm:proper_loss_main}, \Cref{prop:compression-proper} and \Cref{thm:recalibration}. Concretely, we apply \Cref{thm:proper_loss_main} to each hint sequence separately to get $m$ refined sequences that each calibeats the corresponding original sequence. We then apply \Cref{prop:compression-proper} to combine the $m$ refined sequences into a single sequence that multi-calibeats the $m$ original hint sequences. Finally, we apply \Cref{thm:recalibration} to obtain calibration while maintaining the multi-calibeating property.
\end{proof}
\section{$\calK_2$ Recalibration}\label{sec:l2recal}

In this section, we prove a root-mean-square analogue of \Cref{thm:recalibration}, and discuss its implications. The proof does not use the simultaneous Blackwell framework from \Cref{sec:blackwell}; instead, it reduces both squared $\calK_2$-calibration error and proper loss regret to the same family of quadratic tests.

We use the same notation as in \eqref{eq:net_notation}, i.e., $\calN$ is a $\frac 1 N$-separated net of $[0, 1]$ for an integer $N \in \N$ to be specified. 
For a net-valued sequence $p_{[T]}\in \calN^T$ clear from context, we define for all $z \in \calN$,
\[
n_z \defeq \Abs{\Brace{t\in[T]\mid p_t=z}},\qquad
S_z \defeq \sum_{t:p_t=z}(z-y_t),
\]
with the convention that $S_z=0$ when $n_z=0$. Then by comparing to \eqref{eq:l2cal},
\[
\calK_2^2\Par{p_{[T]},y_{[T]}}
=\frac 1 T \sum_{z\in\calN}\frac{S_z^2}{n_z},
\]
where zero-count terms are interpreted as zero.

\subsection{Quadratic identities}

In this section we derive two helpful quadratic bounds that are used to bound both the $\calK_2$-calibration error, and the proper loss regret. We begin with the former, an exact characterization of $\calK_2$.

\begin{lemma}[Quadratic representation of $\calK_2^2$]\label{lem:l2_fenchel}
For any $p_{[T]} \in\calN^T$, $y_{[T]} \in \{0, 1\}^T$,
\[
T\calK_2^2\Par{p_{[T]},y_{[T]}}
=
\sup_{\boldsymbol{\lambda}\in[-1,1]^{\calN}}
\sum_{t\in[T]}\Par{2\vlam_{p_t}(p_t-y_t)-\vlam_{p_t}^2}.
\]
\end{lemma}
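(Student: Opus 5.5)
The plan is to decompose the right-hand side by bins and then solve a one-dimensional concave maximization in each bin. Since $\vlam_{p_t}$ depends on $t$ only through the value $p_t = z$, the sum can be grouped over $z \in \calN$:
\[
\sum_{t\in[T]}\Par{2\vlam_{p_t}(p_t-y_t)-\vlam_{p_t}^2} = \sum_{z\in\calN}\Par{2\vlam_z S_z - n_z\vlam_z^2}.
\]
The coordinates $\vlam_z$ range independently over $[-1,1]$, so the supremum separates into independent scalar suprema, one for each $z$.

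Bins with $n_z = 0$ contribute $0$ regardless of $\vlam_z$, since $S_z = 0$ by convention. This matches the convention that zero-count terms vanish in $\calK_2^2$.

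For $n_z \ge 1$, the map $\lambda \mapsto 2\lambda S_z - n_z \lambda^2$ is a concave parabola. Its unconstrained maximizer is $\lambda^\star = S_z/n_z$, and the maximum value is $S_z^2/n_z$. The only point needing care is feasibility of $\lambda^\star$. Here $S_z/n_z = z - \by_T(z)$ is an average of the terms $z - y_t$, each lying in $[-1,1]$ because $z \in [0,1]$ and $y_t \in \{0,1\}$. Hence $\lambda^\star \in [-1,1]$, so the box constraint does not bind and the scalar supremum equals $S_z^2/n_z$ exactly.

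Summing over $z$ gives $\sum_z S_z^2/n_z$. By the identity stated just before the lemma, this equals $T\calK_2^2(p_{[T]},y_{[T]})$, which proves the claim. I see no real obstacle; the feasibility check $|S_z|/n_z \le 1$ is the only step that needs attention.
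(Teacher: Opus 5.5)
Your proposal is correct and follows essentially the same argument as the paper: both separate the supremum over bins and maximize the concave quadratic $2\lambda S_z - n_z\lambda^2$ in each bin. Both note that the unconstrained maximizer $S_z/n_z$, an average of terms in $[-1,1]$, is feasible, giving $S_z^2/n_z$ per bin, and then sum.
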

\begin{proof}
For a fixed bin $z \in \calN$, the contribution to the right-hand side is
\[
\sup_{\lambda_z\in[-1,1]}\Par{2\lambda_z S_z-n_z\lambda_z^2}.
\]
If $n_z=0$, this value is zero. If $n_z>0$, the unconstrained maximizer is $\lambda_z=S_z/n_z$. Since every summand $z-y_t$ lies in $[-1,1]$, the constraint $\lambda_z\in[-1,1]$ is inactive, and the value is $S_z^2/n_z$. Summing over $z\in\calN$ proves the claim.
\end{proof}

We next derive a similar bound for the proper loss regret with respect to $\ell$. To state this bound, we follow notation from Section~\ref{sec:calibeat} (particularly, Lemma~\ref{lem:savage}, Assumption~\ref{ass:gamma}, and Lemma~\ref{lem:lip_implies_smooth}), and define
\begin{equation}\label{eq:savage_notation}
H(p) \defeq (1-p)\ell(p,0)+p\ell(p,1),\quad
\omega(p)\defeq -H(p),\quad
\psi(p)\defeq \omega'(p) = \ell(p,0)-\ell(p,1).
\end{equation}
For the rest of the section, we parameterize our proper loss regret using $G$, the Lipschitz constant of $\psi \defeq \omega'$ (Assumption~\ref{ass:gamma}). Recall that $G \le 2L$ under Assumption~\ref{assume:lip}, by Lemma~\ref{lem:lip_implies_smooth}.

\begin{corollary}[Proper loss regret is dominated by a quadratic audit]\label{cor:l2_regret_audit}
Under Assumption~\ref{ass:gamma}, define
\[
f_z(q)\defeq\frac{\psi(z)-\psi(q)}{G}\text{ for all } z,q \in [0, 1].
\]
Then $f_z(q)\in[-1,1]$, and for all $y\in\{0,1\}$,
\[
\ell(z,y)-\ell(q,y)
\le
\frac G 2\Par{2f_z(q)(z-y)-f_z(q)^2}.
\]
\end{corollary}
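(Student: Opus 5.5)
The plan is to expand both losses using the Savage representation and reduce the claim to a one-dimensional monotonicity inequality for $\psi$.

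\textbf{Step 1: $f_z(q)\in[-1,1]$.} Under Assumption~\ref{ass:gamma}, $\omega=-H$ is convex and differentiable with $\omega'=\psi$. The quadratic upper bound \eqref{eq:bregman_smooth} on $D_\omega$ is equivalent, for a one-dimensional convex function, to $\psi$ being $G$-Lipschitz on $[0,1]$. This is the same equivalence (Theorem 2.1.5, \cite{Nesterov03}) invoked in Lemma~\ref{lem:lip_implies_smooth}, which is the sense in which the section parameterizes $G$. Hence
\[
|\psi(z)-\psi(q)|\le G|z-q|\le G,
\]
so $f_z(q)\in[-1,1]$.

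\textbf{Step 2: an exact identity for the regret.} By Lemma~\ref{lem:savage},
\[
\ell(p,y)=-\omega(p)-\psi(p)(y-p).
\]
Subtracting the cases $p=q$ and $p=z$ and regrouping gives
\[
\ell(z,y)-\ell(q,y)=\omega(q)-\omega(z)+\psi(z)(z-y)-\psi(q)(q-y).
\]
Now write $\psi(z)(z-y)-\psi(q)(q-y)=\psi(q)(z-q)+(\psi(z)-\psi(q))(z-y)$ and collect the $\omega$ terms. This yields
\[
\ell(z,y)-\ell(q,y)=G f_z(q)(z-y)-D_\omega(z\|q).
\]

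\textbf{Step 3: reduce to a lower bound on $D_\omega$.} Since $\frac G2\Par{2f_z(q)(z-y)-f_z(q)^2}=Gf_z(q)(z-y)-\frac G2 f_z(q)^2$, the claim is equivalent to
\[
D_\omega(z\|q)\ge \frac{(\psi(z)-\psi(q))^2}{2G}.
\]

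\textbf{Step 4: co-coercivity in one dimension.} This is the only substantive step. In one dimension we can argue directly rather than quoting co-coercivity on a bounded domain, which is delicate. First,
\[
D_\omega(z\|q)=\int_q^z\Par{\psi(s)-\psi(q)}\,ds.
\]
By convexity $\psi$ is nondecreasing, and by Step 1 it is $G$-Lipschitz. Without loss of generality take $z>q$, and set $\Delta\defeq\psi(z)-\psi(q)\ge 0$. Lipschitzness applied backward from $z$ gives $\psi(s)\ge \psi(z)-G(z-s)$, so
\[
\psi(s)-\psi(q)\ge \max\{0,\ \Delta-G(z-s)\}.
\]
Integrating this lower bound over $s\in[q,z]$ gives the area of a triangle with height $\Delta$ and base $\Delta/G$, namely $\Delta^2/(2G)$. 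The base fits inside $[q,z]$ because $\Delta/G\le z-q$. The case $z<q$ is symmetric, with $\psi(s)-\psi(q)\le 0$ on $[z,q]$.

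Combining Steps 2 through 4 proves the inequality for both $y\in\{0,1\}$.
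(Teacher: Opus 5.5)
Your proof is correct and follows the paper's argument: the same exact identity $\ell(z,y)-\ell(q,y)=(\psi(z)-\psi(q))(z-y)-D_\omega(z\|q)$, followed by the co-coercivity bound $D_\omega(z\|q)\ge(\psi(z)-\psi(q))^2/(2G)$. The only difference is that you prove that bound directly by the triangle-area integral, where the paper cites Nesterov's Eq.\ (2.1.7) (stated for functions on all of $\R^n$); in the same self-contained spirit, your Step 1 appeal to Theorem 2.1.5 could be replaced by summing $D_\omega(p\|q)+D_\omega(q\|p)=(\psi(p)-\psi(q))(p-q)\le G(p-q)^2$ and using monotonicity of $\psi$.
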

\begin{proof}
The bound $|f_z(q)|\le 1$ follows from $G$-Lipschitzness and $|z-q|\le 1$. Next,
\[\ell(z, y) - \ell(q, y) = \Par{\psi(z) - \psi(q)}\Par{z - y}- D_\omega\Par{z \| q}\]
follows by expanding definitions. The claimed inequality follows by using co-coercivity of the gradient when $\psi$ is Lipschitz (see Eq.\ (2.1.7), \cite{Nesterov03}), which gives
\[
\ell(z,y)-\ell(q,y)
\le
\Par{\psi(z)-\psi(q)}(z-y)
-\frac{\Par{\psi(z)-\psi(q)}^2}{2G}.
\]
\end{proof}

\subsection{Quadratic audit primitive}

Our earlier Lemma~\ref{lem:l2_fenchel} and Corollary~\ref{cor:l2_regret_audit} show how to control $\calK_2$-calibration error and proper loss regret via upper bounds of the form
\[
\Phi(u,e)\defeq 2ue-u^2,
\]
where eventually, $e \gets z - y$ and $u \gets \lam f_z(q)$ for a prediction $z \in \calN$ and hint $q \in [0, 1]$. In particular, Lemma~\ref{lem:l2_fenchel} uses  $f_z(q) \equiv 1$ in its upper bound, and similarly Corollary~\ref{cor:l2_regret_audit} uses $\lam \equiv 1$.

Our main result in this section (Lemma~\ref{lem:l2_contextual_audit}) therefore shows how to control an aggregated $\Phi(\lam f_z(q), z -y)$ for $f_z$ belonging to a specified family of functions, and $\lam \in [-1, 1]$, to simultaneously achieve $\calK_2$-calibration error and proper loss regret bounds.
Our main helper tool in obtaining this result is the exponentially weighted forecaster for exp-concave losses \cite{Vovk90,cesa2006prediction}.
\begin{lemma}[Aggregation for exp-concave losses]\label{lem:l2_expconcave}
Fix a horizon $S\in\N$ and let $\calE$ be a finite set of experts. On each round $s\in[S]$, expert $e\in\calE$ predicts $x_{e,s}$ in a convex set $\calX$, the learner predicts a convex combination $\hat x_s=\sum_{e\in\calE}w_{e,s}x_{e,s}$, and then a loss function $L_s:\calX\to\R$ is revealed. Suppose every $L_s$ is $\eta$-exp-concave, meaning $x\mapsto \exp(-\eta L_s(x))$ is concave on $\calX$. If the weights are initialized uniformly and updated by
\[
w_{e,s+1}
\gets
\frac{w_{e,s}\exp(-\eta L_s(x_{e,s}))}{\sum_{e'\in\calE}w_{e',s}\exp(-\eta L_s(x_{e',s}))},
\]
then, for every $e^\star\in\calE$,
\[
\sum_{s\in[S]} L_s(\hat x_s)
\le
\sum_{s\in[S]} L_s(x_{e^\star,s})
+\frac 1\eta\log|\calE|.
\]
\end{lemma}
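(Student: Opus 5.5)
The plan is the standard potential-function proof for the exponentially weighted average forecaster under exp-concave losses. Define the unnormalized weights $W_{e,s} \defeq \exp(-\eta \sum_{r<s} L_r(x_{e,r}))$ and the potential $\Phi_s \defeq \sum_{e\in\calE} W_{e,s}$, so that $\Phi_1 = |\calE|$. By induction on $s$, the normalized update in the statement gives $w_{e,s} = W_{e,s}/\Phi_s$: the normalization constants telescope away because each step multiplies by $\exp(-\eta L_s(x_{e,s}))$ and renormalizes.

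The key per-round inequality is
\[
\frac{\Phi_{s+1}}{\Phi_s} = \sum_{e\in\calE} w_{e,s}\exp\Par{-\eta L_s(x_{e,s})} \le \exp\Par{-\eta L_s(\hat x_s)}.
\]
This is Jensen's inequality applied to the concave function $x\mapsto \exp(-\eta L_s(x))$ on the convex set $\calX$, at the convex combination $\hat x_s=\sum_e w_{e,s}x_{e,s}$, which lies in $\calX$. Concavity gives $\exp(-\eta L_s(\hat x_s)) \ge \sum_e w_{e,s}\exp(-\eta L_s(x_{e,s}))$, which is exactly the displayed bound.

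Multiplying over $s\in[S]$ gives
\[
\Phi_{S+1} \le |\calE|\exp\Par{-\eta\sum_{s\in[S]} L_s(\hat x_s)}.
\]
On the other hand, every weight is positive, so for any $e^\star$,
\[
\Phi_{S+1} \ge W_{e^\star,S+1} = \exp\Par{-\eta\sum_{s\in[S]} L_s(x_{e^\star,s})}.
\]
Combining the two bounds, taking logarithms and dividing by $-\eta$ yields the claim.

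There is no real obstacle. The only point needing care is the identification $w_{e,s}=W_{e,s}/\Phi_s$, because the statement gives the update in normalized form; once that holds, Jensen's step is immediate from the exp-concavity hypothesis.
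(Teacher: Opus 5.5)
Your proposal is correct and follows the same route as the paper: exp-concavity plus Jensen's inequality bound the per-round ratio of unnormalized total weights by $\exp(-\eta L_s(\hat x_s))$, and multiplying over rounds and comparing the final total weight against the single weight of $e^\star$ gives the claim. Your explicit induction showing $w_{e,s}=W_{e,s}/\Phi_s$ just fills in a detail that the paper leaves implicit by deferring to Theorem 3.2 of \cite{cesa2006prediction}.
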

\begin{proof}
We briefly recall the standard proof strategy, deferring a more detailed exposition to Theorem 3.2, \cite{cesa2006prediction}.
By exp-concavity and Jensen's inequality,
\[
\exp(-\eta L_s(\hat x_s))
\ge
\sum_{e\in\calE}w_{e,s}\exp(-\eta L_s(x_{e,s})).
\]
Multiplying this inequality over $s$ and comparing the final unnormalized total weight to the weight of a fixed expert $e^\star$ gives the stated bound.
\end{proof}

For our application, $\calX=[-1,1]$ and $L_s(u)=(u-e_s)^2$ with $e_s\in[-1,1]$. This loss is $\frac 1 8 $-exp-concave on $[-1,1]$, because the second derivative of $\exp(-\eta(u-e_s)^2)$ is nonpositive whenever $\eta\le 1/8$ and $|u-e_s|\le 2$. Thus \Cref{lem:l2_expconcave} gives square-loss regret at most $8\log|\calE|$. As we next show, this lets us achieve our desired bounds on aggregations of $\Phi$, using the identity
\begin{equation}\label{eq:l2_phi_square}
\Phi(u,e)=e^2-(u-e)^2.
\end{equation}

\begin{lemma}[Contextual quadratic audit]\label{lem:l2_contextual_audit}
Fix $T\ge 1$ and a finite class $\calF\subseteq\{f:[0,1]\to[-1,1]\}$, and follow the notation \eqref{eq:net_notation}. There is an online procedure that, after seeing $q_t$, outputs $u_{z,t}\in[-1,1]$ for every $z\in\calN$, and has the pathwise guarantee for every $(q_{[T]}, p_{[T]}, y_{[T]}) \in [0, 1]^T \times \calN^T \times \{0, 1\}^T$,
\[
\sum_{z\in\calN}
\sup_{\substack{f\in\calF \\ \lambda\in[-1,1]}}
\sum_{t:p_t=z}\Phi(\lambda f(q_t),z-y_t)
\le
\sum_{t\in[T]}\Phi(u_{p_t,t},p_t-y_t)
+16N\log(9T|\calF|).
\]
\end{lemma}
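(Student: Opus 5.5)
The plan is to run an independent exponentially weighted aggregator (Lemma~\ref{lem:l2_expconcave}) in each bin $z\in\calN$, with the square loss as the per-round loss. The identity \eqref{eq:l2_phi_square} then turns square-loss regret into $\Phi$-regret.

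\textbf{Step 1: experts.} Discretize $\lambda$ to the grid $\Lambda\defeq\{k/T : -T\le k\le T\}\subset[-1,1]$. Let the expert class be $\calE\defeq\calF\times\Lambda$, so that $|\calE|=(2T+1)|\calF|\le 3T|\calF|$. Expert $(f,\lambda)$ predicts $x_{(f,\lambda),t}\defeq\lambda f(q_t)\in[-1,1]$ on round $t$; this is computable after seeing $q_t$.

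\textbf{Step 2: the procedure.} For each $z\in\calN$, maintain weights $w^{z}$, initialized uniformly, and output
\[
u_{z,t}\defeq\sum_{e\in\calE}w^{z}_{e}\,x_{e,t}\in[-1,1].
\]
The outputs $u_{z,t}$ are produced for every $z$ before $p_t$ and $y_t$ are known, as the statement requires. Once $p_t$ and $y_t$ are revealed, only the aggregator of bin $z=p_t$ is updated. Its update uses the loss $L_t(u)=(u-e_t)^2$ with $e_t\defeq z-y_t\in[-1,1]$ and $\eta=\frac18$; this loss is $\frac18$-exp-concave on $[-1,1]$, as noted after Lemma~\ref{lem:l2_expconcave}. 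Thus each bin's aggregator sees exactly the subsequence of rounds $\{t:p_t=z\}$. The fact that $p_t$ may depend on all the $u_{z,t}$ is harmless, because Lemma~\ref{lem:l2_expconcave} is a pathwise bound against arbitrary, adaptively chosen loss sequences.

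\textbf{Step 3: per-bin bound.} Fix $z$ and a comparator expert $(f,\lambda')\in\calE$. Lemma~\ref{lem:l2_expconcave} applied to bin $z$'s subsequence gives
\[
\sum_{t:p_t=z}(u_{z,t}-e_t)^2\le\sum_{t:p_t=z}(\lambda' f(q_t)-e_t)^2+8\log(3T|\calF|).
\]
Subtracting both sides from $\sum_t e_t^2$ and using \eqref{eq:l2_phi_square}, we get
\[
\sum_{t:p_t=z}\Phi(\lambda' f(q_t),z-y_t)\le\sum_{t:p_t=z}\Phi(u_{z,t},z-y_t)+8\log(3T|\calF|).
\]
To pass from grid values $\lambda'$ to arbitrary $\lambda\in[-1,1]$, note that $|\partial_u\Phi(u,e)|=|2e-2u|\le 4$ on $[-1,1]^2$. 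Choosing $\lambda'$ with $|\lambda-\lambda'|\le 1/T$ changes $\lambda f(q_t)$ by at most $1/T$ per round, so the total discretization cost in bin $z$ is at most $4n_z/T\le 4$. Taking the supremum over $(f,\lambda)$ then gives
\[
\sup_{f\in\calF,\,\lambda\in[-1,1]}\sum_{t:p_t=z}\Phi(\lambda f(q_t),z-y_t)\le\sum_{t:p_t=z}\Phi(u_{z,t},z-y_t)+8\log(3T|\calF|)+4.
\]

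\textbf{Step 4: summing over bins.} Summing over the $N+1\le 2N$ bins, the right-hand side becomes
\[
\sum_{t\in[T]}\Phi(u_{p_t,t},p_t-y_t)+16N\log(3T|\calF|)+4(N+1).
\]
This is within the claimed bound because $4(N+1)\le 8N\le 16N\log 3$, and $\log(3T|\calF|)+\log 3=\log(9T|\calF|)$.

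The only delicate step is Step 2. One must check that updating only the realized bin is legitimate even though $p_t$ depends on the outputs $u_{z,t}$. This holds because the exponential-weights guarantee is pathwise, so conditioning on which bin is selected costs nothing. The remaining work is constant bookkeeping.
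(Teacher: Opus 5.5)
Your proposal is correct and takes essentially the same approach as the paper. Both arguments run an independent exponential-weights aggregator per bin over $\calF \times \Lambda_T$ with $\eta = \frac18$, convert square-loss regret to $\Phi$-regret via $\Phi(u,e) = e^2 - (u-e)^2$, discretize $\lambda$ using a Lipschitz bound, and sum over $N+1 \le 2N$ bins. The differences are only in constant bookkeeping: you absorb the $O(1)$ per-bin discretization cost into the $16N\log 3$ slack after summing, whereas the paper absorbs it per bin.
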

\begin{proof}
We run one independent copy of Lemma~\ref{lem:l2_expconcave} for each $z \in \calN$, using the same expert set $\calE$:
\[
\calE\defeq \calF\times\Lambda_T,\qquad
\Lambda_T\defeq\Brace{-1+\frac k T\mid k=0,1,\ldots,2T}.
\]
Each independent copy maintains its own independent iteration clock $s$, and we only increment this clock on iterations where the realized prediction is $p_t = z$. For the $z^{\text{th}}$ copy, we define $L_s(x) \defeq (x - (z - y_t))^2$, where $y_t$ is the label on the $s^{\text{th}}$ iteration where the prediction $p_t = z$ is made. Each expert $e = (f, \lam) \in \calE$ makes the prediction $x_{e, s} \defeq \lam f(q_t) \in [-1, 1]$, without knowledge of $p_t$, $y_t$. We let $u_{z, t}$ be the aggregated prediction made by Lemma~\ref{lem:l2_expconcave} based on these predictions.

Applying \Cref{lem:l2_expconcave} to the active rounds of bin $z$, with $\eta = \frac 1 8$, gives for every $(f,\lambda)\in\calF\times\Lambda_T$,
\begin{align*}
\sum_{t:p_t=z}(u_{z,t}-(z-y_t))^2
&\le
\sum_{t:p_t=z}(\lambda f(q_t)-(z-y_t))^2
+8\log\Par{|\calF|(2T+1)} \\
&\le \sum_{t:p_t=z}(\lambda f(q_t)-(z-y_t))^2
+8\log\Par{3T|\calF|}.
\end{align*}
Using \eqref{eq:l2_phi_square}, this is equivalent to
\[
\sum_{t:p_t=z}\Phi(\lambda f(q_t),z-y_t)
\le
\sum_{t:p_t=z}\Phi(u_{z,t},z-y_t)
+8\log\Par{3T|\calF|}.
\]
It remains to pass from $\Lambda_T$ to all $\lambda\in[-1,1]$. The derivative of $\Phi(\lambda f(q),e)$ with respect to $\lambda$ is
\[
2f(q)e-2\lambda f(q)^2,
\]
whose absolute value is at most $4$. Every $\lambda^\star\in[-1,1]$ is within distance at most $\frac 1 {2T}$ of a grid point in $\Lambda_T$, so each term changes by at most $\frac 2 T$, and the total cost over at most $T$ active rounds is at most $2$, and $8\log(3T|\calF|) + 2 \le 8\log(9T|\calF|)$. Summing over all $N+1 \le 2N$ bins proves the claim.
\end{proof}

Finally, we use the following oracle to define our predictions $p_t$, made after observing the bin aggregated predictions $u_{z,t}$ in Lemma~\ref{lem:l2_contextual_audit}, which in turn only depend on the hint $q_t$.

\begin{lemma}\label{lem:l2_oracle}
Fix $\{u_z \in[-1,1]\}_{z \in \calN}$. We can compute $\va\in\Delta^{\calN}$ in $O(|\calN|)$ time such that
\[
\sum_{z\in\calN}\va_z\Par{2u_z(z-y)-u_z^2}\le \frac{1}{4N^2} \text{ for all } y \in \{0, 1\}.
\]
\end{lemma}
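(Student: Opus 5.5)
The plan is to reduce to a one-dimensional sign-crossing argument, in the spirit of the classical Foster--Vohra construction. Throughout, write $g_z(y) \defeq 2u_z(z-y)-u_z^2$, and index the grid as $z_i = \frac i N$ for $0 \le i \le N$.

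\textbf{Step 1: point masses.} I first look for a point mass that already works. If some $u_z = 0$, then $g_z(0)=g_z(1)=0$, so $\va=\ve_z$ suffices. At $z=0$ we have $g_0(0) = -u_0^2 \le 0$ and $g_0(1) = -2u_0 - u_0^2$, and the latter is $\le 0$ whenever $u_0 \ge 0$. Symmetrically, at $z=1$ we have $g_1(1) = -u_1^2 \le 0$ and $g_1(0)=2u_1-u_1^2$, which is $\le 0$ whenever $u_1 \le 0$. So if $u_0\ge 0$ or $u_1 \le 0$, a point mass works.

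\textbf{Step 2: sign crossing.} Otherwise $u_{z_0}<0<u_{z_N}$. A single linear scan then finds an index $j$ with $u_{z_j}<0<u_{z_{j+1}}$; if a zero entry is encountered along the way, I return to the trivial point-mass case.

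\textbf{Step 3: the mixture.} I would choose weights on $\{z_j,z_{j+1}\}$ that cancel the $y$-dependence. Setting $a \defeq |u_{z_j}|$ and $b \defeq u_{z_{j+1}}$, put
\[\va_{z_j} = \frac{b}{a+b},\qquad \va_{z_{j+1}} = \frac{a}{a+b},\]
so that $\sum_z \va_z u_z = 0$. The term $-2y\sum_z\va_z u_z$ then vanishes, and both labels give the same value. Define
\[w \defeq \va_{z_{j+1}}u_{z_{j+1}} = \frac{ab}{a+b} = -\va_{z_j}u_{z_j}.\]
Then $2\sum_z \va_z u_z z = 2w(z_{j+1}-z_j) = \frac{2w}{N}$. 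Likewise $\sum_z \va_z u_z^2 = \frac{ba^2+ab^2}{a+b} = w(a+b)$. Hence the objective equals $w\Par{\frac 2N - (a+b)}$ for both $y\in\{0,1\}$.

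\textbf{Step 4: the final inequality.} This is the only real computation, and it is elementary. Write $s\defeq a+b$. By AM--GM, $w = \frac{ab}{s}\le \frac s4$. If $s \ge \frac 2N$, the value is nonpositive. Otherwise it is at most $\frac s4\Par{\frac 2N - s}$, a concave quadratic in $s$ maximized at $s=\frac1N$ with value $\frac1{4N^2}$.

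The running time is $O(|\calN|)$ for the scan plus constant work for the weights. I do not anticipate any real obstacle; the main care needed is in choosing the cancelling weights in Step 3 and in handling the boundary cases of Step 1.
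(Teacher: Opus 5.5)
Your proof is correct, and it takes a genuinely different route from the paper.

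The paper's argument is nonconstructive. It treats the inequality as a finite zero-sum game between grid predictions and labels and invokes von Neumann's minimax theorem. It then observes that against any label mean $\mu$, the grid point $z$ nearest to $\mu$ has expected payoff $2u_z(z-\mu)-u_z^2\le (z-\mu)^2\le\frac{1}{4N^2}$. The $O(|\calN|)$ running time comes from citing Megiddo's linear-time algorithm for linear programs in fixed dimension.

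You instead construct the distribution explicitly, in the Foster--Vohra style. Either a point mass already works (a zero entry, $u_0\ge 0$, or $u_1\le 0$), or a scan finds a neighboring sign change $u_{z_j}<0<u_{z_{j+1}}$. On that pair, the weights that force $\sum_z \va_z u_z=0$ remove all dependence on $y$, leaving the value $w\bigl(\frac 2N-s\bigr)\le \frac s4\bigl(\frac 2N - s\bigr)\le\frac{1}{4N^2}$. Your boundary cases at $z\in\{0,1\}$, the identity $\sum_z\va_z u_z^2=ab=w(a+b)$, and the final one-variable maximization are all right.

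What your route buys:
\begin{itemize}
\item A self-contained algorithm that visibly runs in $O(|\calN|)$ time, with no appeal to a minimax theorem or to Megiddo's LP result.
\item An output that is at most $2$-sparse on neighboring grid points and achieves the same payoff for both labels. This mirrors the sign-crossing structure of Lemma~\ref{lem:oracle_helper} used for the $\calK_1$ oracle.
\end{itemize}

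What the paper's route buys is brevity and generality. It only needs a good pure best response to each label distribution, which here is a one-line completing-the-square bound. It would therefore apply unchanged to payoffs for which no explicit equalizing two-point mixture is available.
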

\begin{proof}
This is a finite zero-sum game. By von Neumann's minimax theorem, it suffices to show that for every mixed label distribution, there is a pure grid prediction with small expected payoff. Let the distribution satisfy $\Pr[y=1]=\mu$, and choose the closest $z\in\calN$ to $\mu$. Then $|z-\mu|\le \frac 1 {2N}$, and
\[
\E_y\Brack{2u_z(z-y)-u_z^2}
=2u_z(z-\mu)-u_z^2
\le (z-\mu)^2
\le \frac{1}{4N^2},
\]
where we used $2ab-b^2\le a^2$ with $a=z-\mu$ and $b=u_z$. The runtime comes from solving a linear program with dimensions $2$ and $O(|\calN|)$, and applying the result of \cite{Megiddo84}.
\end{proof}

\subsection{Main theorem}

We are now ready to complete the proof of our main result on $\calK_2$ recalibration.

\begin{theorem}\label{thm:l2recalibration}
Let $\ell: [0, 1] \times \{0, 1\} \to \R$ be proper such that Assumption~\ref{assume:lip} holds, and let $\eps \in (0, \half)$. In the setting of Model~\ref{model:hsp}, there is an algorithm that outputs $p_{[T]} \in [0, 1]^T$ that is $(\eps, \eps^2)$-$\calK_2$-recalibrated against $(q_{[T]}, y_{[T]})$ under $\ell$, if for an appropriate constant,
\[T = \Omega\Par{\frac{L^{\frac 3 2}}{\eps^3}\log\Par{\frac L \eps}}.\]
\end{theorem}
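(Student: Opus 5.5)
The plan is to combine the three ingredients of this section into one online procedure. Squared $\calK_2$ error (via Lemma~\ref{lem:l2_fenchel}) and excess loss (via Corollary~\ref{cor:l2_regret_audit}) are both upper bounded by aggregated quadratic tests $\Phi$. Lemma~\ref{lem:l2_contextual_audit} bounds the aggregated tests by the learner's own payoff $\sum_t \Phi(u_{p_t,t},p_t-y_t)$, and Lemma~\ref{lem:l2_oracle} makes that payoff small in expectation.

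\textbf{Algorithm.} Let $G\le 2L$ be as in Lemma~\ref{lem:lip_implies_smooth}, and take $N=\Theta(\sqrt{G}/\eps)$. Use the finite audit class
\[\calF\defeq\{q\mapsto 1\}\cup\{f_{z'}:z'\in\calN\},\qquad |\calF|=N+2,\]
with $f_{z'}$ as in Corollary~\ref{cor:l2_regret_audit}. On round $t$:
\begin{enumerate}
\item After seeing $q_t$, run the procedure of Lemma~\ref{lem:l2_contextual_audit} to obtain $\{u_{z,t}\}_{z\in\calN}$.
\item Feed $\{u_{z,t}\}_{z\in\calN}$ to Lemma~\ref{lem:l2_oracle} to get $\va_t\in\Delta^{\calN}$.
\item Sample and output $p_t\sim\va_t$ with fresh randomness.
\item After $y_t$ is revealed, update only the exponential-weights copy for bin $p_t$.
\end{enumerate}
The $u_{z,t}$ and $\va_t$ are determined by the history and $q_t$, and $y_t$ is fixed before $p_t$ is sampled. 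Conditioning on this, the oracle guarantee gives $\E[\Phi(u_{p_t,t},p_t-y_t)\mid\text{history},q_t,y_t]\le \frac{1}{4N^2}$. Hence
\[\E\Brack{\textstyle\sum_t\Phi(u_{p_t,t},p_t-y_t)}\le \frac{T}{4N^2}.\]

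\textbf{Calibration bound.} Lemma~\ref{lem:l2_fenchel} writes $T\calK_2^2$ as a supremum over $\vlam\in[-1,1]^{\calN}$. This supremum decomposes bin by bin, and in each bin it is one of the terms inside the supremum of Lemma~\ref{lem:l2_contextual_audit}, using the constant test $f\equiv 1$. Therefore, pathwise,
\[T\calK_2^2\le \sum_t\Phi(u_{p_t,t},p_t-y_t)+16N\log(9T(N+2)).\]
Taking expectations and applying Jensen's inequality ($\E\calK_2\le\sqrt{\E\calK_2^2}$) gives
\[\E[\calK_2]\le\Par{\frac{1}{4N^2}+\frac{16N\log(9T(N+2))}{T}}^{1/2}.\]

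\textbf{Loss bound.} Corollary~\ref{cor:l2_regret_audit} gives, for each $t$, $\ell(p_t,y_t)-\ell(q_t,y_t)\le \frac G2\Phi(f_{p_t}(q_t),p_t-y_t)$. Group the rounds by bin $z$; within bin $z$, take the test $f=f_z\in\calF$ with $\lambda=1$. Lemma~\ref{lem:l2_contextual_audit} then yields
\[\E\Brack{T\Par{\ell(p_{[T]},y_{[T]})-\ell(q_{[T]},y_{[T]})}}\le \frac G2\Par{\frac{T}{4N^2}+16N\log(9T(N+2))}.\]

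\textbf{Parameter choices.} Choose $N$ so that $\frac{G}{8N^2}\le \frac{\eps^2}{4}$ and $\frac{1}{4N^2}\le \frac{\eps^2}{4}$; $N=\Theta(\sqrt G/\eps)$ works since $G\ge 1$. The remaining terms are $8GN\log(\cdot)/T$ and $16N\log(\cdot)/T$, each of which must be at most $\frac{\eps^2}{4}$. This requires
\[T\gtrsim \frac{GN\log(TN)}{\eps^2}\approx \frac{G^{3/2}}{\eps^3}\log(TN).\]
With $G\le 2L$ and $T$ polynomial in $L/\eps$, we have $\log(TN)=O(\log(L/\eps))$, matching the stated horizon.

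\textbf{Main obstacle.} I expect the delicate step to be the in-expectation use of the oracle. The bins' audit updates depend on the realized $p_t$, and the pathwise Lemma~\ref{lem:l2_contextual_audit} is applied to that random sequence. The key check is that the $u_{z,t}$ depend only on the past and $q_t$, and that $y_t$ is fixed before sampling $p_t$. Then a tower-rule argument over rounds gives the $\frac{T}{4N^2}$ bound even against adaptive nature. The remaining work is a routine check of the constants and of the self-referential logarithm in $T$.
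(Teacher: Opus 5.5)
Your proposal is correct and matches the paper's proof essentially step for step. It uses the same audit class $\{\iota\}\cup\{f_z\}_{z\in\calN}$, the same per-bin exponential-weights audit paired with the minimax oracle of Lemma~\ref{lem:l2_oracle}, and the same conditional-expectation argument giving $\E[\sum_t \Phi(u_{p_t,t},p_t-y_t)]\le T/(4N^2)$. It also uses the constant test for $\calK_2^2$ via Lemma~\ref{lem:l2_fenchel} plus Jensen, the test $f_z$ with $\lambda=1$ for the excess loss via Corollary~\ref{cor:l2_regret_audit}, and $N=\Theta(\sqrt{L}/\eps)$. The differences are cosmetic: you keep $|\calF|=N+2$ inside the logarithm rather than bounding it by $3N$, and you spell out the self-referential $\log T$ that the paper leaves implicit.
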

\begin{proof}
Following \eqref{eq:savage_notation} and Lemma~\ref{lem:lip_implies_smooth}, we let $G \defeq 2L$. We define the finite audit class
\[
\calF_\ell\defeq \Brace{\iota}\cup\Brace{f_z:z\in\calN},
\quad
\iota(q)\defeq 1,\quad
f_z(q)\defeq\frac{\psi(z)-\psi(q)}{G}.
\]
Every member of $\calF_\ell$ maps $[0,1]$ to $[-1,1]$, and $|\calF_\ell|\le N+2 \le 3N$.
Instantiate the procedure in \Cref{lem:l2_contextual_audit} for $\calN$ and $\calF_\ell$. On round $t$, after observing $q_t$, obtain $u_{z,t}\in[-1,1]$ for all $z\in\calN$. Apply \Cref{lem:l2_oracle} to the vector $(u_{z,t})_{z\in\calN}$, sample $p_t$ from the resulting distribution, observe $y_t$, and update the audit aggregator corresponding to the realized bin $p_t$ with outcome $p_t-y_t$.

Because Model~\ref{model:hsp} does not observe the fresh draw $p_t$ before fixing $(q_t,y_t)$, the oracle gives
\[
\E\Brack{\Phi(u_{p_t,t},p_t-y_t)\mid\text{past},q_t,y_t}
\le \frac{1}{4N^2}.
\]
Summing over time,
\begin{equation}\label{eq:l2_own_payoff}
\E\Brack{\sum_{t\in[T]}\Phi(u_{p_t,t},p_t-y_t)}
\le \frac{T}{4N^2}.
\end{equation}
Combining \Cref{lem:l2_contextual_audit} with \eqref{eq:l2_own_payoff} and $|\calF_\ell|=N+2$ gives the master audit bound
\begin{equation}\label{eq:l2_master_bound}
\E\Brack{
\sum_{z\in\calN}
\sup_{f\in\calF_\ell,\lambda\in[-1,1]}
\sum_{t:p_t=z}\Phi(\lambda f(q_t),z-y_t)}
\le
\frac{T}{4N^2}+16N\log\Par{27NT}.
\end{equation}
By choosing the constant $\iota$ function in each bin $z$, \eqref{eq:l2_master_bound} and \Cref{lem:l2_fenchel} imply
\[
T\E\Brack{\calK_2^2\Par{p_{[T]},y_{[T]}}}
\le
\frac{T}{4N^2}+16N\log(27NT),
\]
which gives an expected $\calK_2^2$ of $\le \eps^2$ by taking $N \ge \frac 1 \eps$ and $T = \Omega(\frac 1 {\eps^3}\log(\frac 1 \eps))$. Jensen's inequality then shows this also implies the expected $\calK_2$ is $\le \eps$ as needed for $\calK_2$-recalibration.

For the proper loss regret, in each realized bin $z$ we choose $f_z$ and $\lambda=1$. Then 
\begin{align*}
\E\Brack{\frac 1 T \sum_{t \in [T]} \ell(p_t, y_t) - \ell(q_t, y_t)} &\le \E\Brack{\frac{G}{2T} \sum_{t \in [T]} \Phi\Par{f_{p_t}(q_t), p_t - y_t}} \\
&\le \frac{G}{8N^2} + \frac{8GN\log(27NT)}{T},
\end{align*}
where we used Corollary~\ref{cor:l2_regret_audit} in the first line. By taking $N = \Omega(\sqrt{L}/\eps)$ and taking $T$ as in the theorem statement, both of the terms are at most $\frac{\eps^2}{2}$, and thus the proper loss regret is $\le \eps^2$ as desired.\end{proof}

We note that by appropriately combining Theorem~\ref{thm:l2recalibration} (in place of Theorem~\ref{thm:recalibration}) with the reductions in Sections~\ref{sec:calibeat} and~\ref{sec:multi}, it is straightforward to derive $\calK_2$ variants of our calibeating and multi-sequence results. As an example, we state a $\calK_2$ calibeating analog of Theorem~\ref{thm:proper_loss_main}.

\begin{theorem}[Simultaneous calibeating and $\calK_2$-calibration]\label{thm:l2_calibeating}
Let $\ell: [0,1]\times\{0,1\}\to\R$ be proper such that Assumption~\ref{assume:lip} holds, and let $\eps\in(0,\half)$. In the setting of Model~\ref{model:hsp}, if $q_{[T]}$ is $s$-sparse, there is an algorithm that outputs $p_{[T]}\in[0,1]^T$ that is $\eps$-$\calK_2$-calibrated against $y_{[T]}$ and $\eps^2$-calibeats $(q_{[T]}, y_{[T]})$ under $\ell$,
if for an appropriate constant,
\[
T=\Omega\Par{\frac{L^{\frac 3 2}}{\eps^3}\log\Par{\frac{L}{\eps}}+\frac{L s\log\Par{\frac{L s}{\eps}}}{\eps^2}}.
\]
\end{theorem}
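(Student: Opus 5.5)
The plan mirrors the proof of Theorem~\ref{thm:proper_loss_main}, with Theorem~\ref{thm:l2recalibration} used in place of Theorem~\ref{thm:recalibration}.

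\textbf{Step 1: the calibeating sequence.} Define the online sequence $r_t \defeq \by_{t-1}(q_t)$. It is computable under Model~\ref{model:hsp}, since it depends only on $q_{[t]}$ and $y_{[t-1]}$. By Lemma~\ref{lem:lip_implies_smooth}, Assumption~\ref{ass:gamma} holds with $G = 2L$. Proposition~\ref{prop:online_offline_gap} then gives, pathwise,
\[\frac 1 T \sum_{t\in[T]} \ell(r_t,y_t) \le \calR_\ell\Par{q_{[T]},y_{[T]}} + \frac{L s(\log T+1)}{T}.\]
The second term is at most $\frac{\eps^2}{2}$ once $T \ge C\frac{Ls\log(Ls/\eps)}{\eps^2}$. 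To see this, note that $\frac{\log T}{T}$ is decreasing, and plugging in $T \approx \frac{Ls}{\eps^2}\log\frac{Ls}{\eps}$ gives $\log T = O(\log\frac{Ls}{\eps})$. Hence $r_{[T]}$ $\frac{\eps^2}{2}$-calibeats $(q_{[T]},y_{[T]})$.

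\textbf{Step 2: recalibrate against $r_{[T]}$.} Feed $r_{[T]}$ as the hint sequence into the algorithm of Theorem~\ref{thm:l2recalibration}, with accuracy parameters $(\eps, \frac{\eps^2}{2})$. The sequence $(r_t, y_t)$ still satisfies the measurability requirements of Model~\ref{model:hsp}: $r_t$ is a deterministic function of the past together with $q_t$, so nature's adaptivity is preserved.

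To get excess loss $\frac{\eps^2}{2}$ rather than $\eps^2$, I will rerun the final step of the proof of Theorem~\ref{thm:l2recalibration} with halved targets. Concretely, take $N = \Theta(\sqrt L/\eps)$ with a slightly larger constant. Both terms $\frac{G}{8N^2}$ and $\frac{8GN\log(27NT)}{T}$ are then at most $\frac{\eps^2}{4}$ once $T = \Omega(\frac{L^{3/2}}{\eps^3}\log\frac L\eps)$, using $G = 2L$ and $N \approx \sqrt L/\eps$. The $\calK_2$ bound $\E[\calK_2] \le \eps$ follows unchanged from the master audit bound and Jensen's inequality.

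\textbf{Step 3: combine.} With expectations over the learner's randomness,
\[\E\Brack{\ell\Par{p_{[T]},y_{[T]}}} \le \E\Brack{\ell\Par{r_{[T]},y_{[T]}}} + \frac{\eps^2}{2} \le \E\Brack{\calR_\ell\Par{q_{[T]},y_{[T]}}} + \eps^2,\]
together with $\E[\calK_2(p_{[T]},y_{[T]})] \le \eps$.

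The only step needing real care is the horizon arithmetic in Step 1, namely checking that the $\log T$ appearing in Proposition~\ref{prop:online_offline_gap} is absorbed into the stated $\log\frac{Ls}{\eps}$ term. A subtle point is that $T$ may be dominated by the first term, $\frac{L^{3/2}}{\eps^3}\log\frac L\eps$, in which case $\log T$ grows like $\log\frac L\eps$ rather than $\log\frac{Ls}{\eps}$. This is harmless. Because $\frac{\log T}{T}$ is decreasing, a larger $T$ only decreases $\frac{Ls(\log T + 1)}{T}$, so the Step 1 bound remains valid whenever $T$ exceeds the second term. No other obstacle is expected, since both ingredients are invoked as black boxes.
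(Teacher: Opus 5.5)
Your proposal is correct and matches the paper's intended argument: the paper proves this as in Theorem~\ref{thm:proper_loss_main} (and spells out the same chain in Theorem~\ref{thm:l2_multi_calibeating}). You form the online refinement $r_t = \by_{t-1}(q_t)$, bound its gap via Proposition~\ref{prop:online_offline_gap} with $G = 2L$, and apply Theorem~\ref{thm:l2recalibration} to $r_{[T]}$ with constants adjusted for excess loss $\frac{\eps^2}{2}$. Your horizon bookkeeping is fine, since $\frac{\log T+1}{T}$ is decreasing, and the same monotonicity also covers the $\log(27NT)$ terms in Step 2 when the sparsity term dominates $T$.
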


\subsection{Multi-hinted extensions}\label{sec:l2-multi}

For this subsection, we say that $p_{[T]}$ is $(\alpha,\beta)$-multi-$\calK_2$-recalibrated against $(q_{[T]}^{[m]},y_{[T]})$ under $\ell$ if, for every $i\in[m]$, it is $(\alpha,\beta)$-$\calK_2$-recalibrated against $(q_{[T]}^i,y_{[T]})$ under $\ell$ in the sense of Definition~\ref{def:recal}.

\begin{theorem}[Multi-recalibration with $\calK_2$ calibration]\label{thm:l2_mrecalibration}
Let $\ell: [0,1]\times\{0,1\}\to\R$ be proper such that Assumption~\ref{assume:lip} holds, and let $\eps\in(0,\half)$. In the setting of Model~\ref{model:mhsp}, there is an algorithm that outputs $p_{[T]}\in[0,1]^T$ that is $(\eps,\eps^2)$-multi-$\calK_2$-recalibrated against $(q_{[T]}^{[m]},y_{[T]})$ under $\ell$, if for an appropriate constant,
\[
T=\Omega\Par{\frac{L^{\frac 3 2}}{\eps^3}\log\Par{\frac{L}{\eps}}+\frac{L\log m}{\eps^2}}.
\]
\end{theorem}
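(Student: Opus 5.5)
The plan mirrors the proof of Theorem~\ref{thm:mrecalibration}, with Theorem~\ref{thm:l2recalibration} used in place of Theorem~\ref{thm:recalibration}. The argument has two stages, run online in sequence on each round.

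\textbf{Stage 1 (hint compression).} After observing $q_t^{[m]}$, we run the deterministic algorithm of Proposition~\ref{prop:compression-proper} on the $m$ hint sequences. This produces a compressed hint $r_t\in[0,1]$ that is computable online under Model~\ref{model:mhsp}, since it depends only on $q_{[t]}^{[m]}$ and $y_{[t-1]}$. The guarantee is
\[
\ell\Par{r_{[T]},y_{[T]}}-\ell\Par{q^i_{[T]},y_{[T]}}\le O\Par{\frac{L\log m}{T}} \text{ for every } i\in[m].
\]
This bound holds pathwise, because the compression algorithm is deterministic and its regret bound holds for adaptive sequences. Taking $T=\Omega(L\log m/\eps^2)$ with a suitable constant makes the right-hand side at most $\frac{\eps^2}{2}$.

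\textbf{Stage 2 (single-hint $\calK_2$ recalibration).} We feed the sequence $(r_{[T]},y_{[T]})$ into the algorithm of Theorem~\ref{thm:l2recalibration}, run with target $\eps$ and excess-loss target $\frac{\eps^2}{2}$. Halving the loss target only changes constants, so the horizon requirement is $T=\Omega(L^{3/2}\eps^{-3}\log(L/\eps))$. The pair $(r_t,y_t)$ is a valid instance of Model~\ref{model:hsp}: $r_t$ is fixed before $p_t$ is drawn and depends only on past data and $q_t^{[m]}$, and $y_t$ is chosen by nature without seeing $p_t$. Theorem~\ref{thm:l2recalibration} therefore gives
\[
\E\Brack{\calK_2(p_{[T]},y_{[T]})}\le\eps, \qquad \E\Brack{\ell(p_{[T]},y_{[T]})-\ell(r_{[T]},y_{[T]})}\le\frac{\eps^2}{2}.
\]

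\textbf{Combining.} For each $i\in[m]$, we take expectations in the Stage 1 bound and add it to the Stage 2 loss bound. This gives excess loss at most $\eps^2$ against $q^i_{[T]}$. The calibration bound does not involve the hint at all, so it holds for every $i$ simultaneously. Together these give $(\eps,\eps^2)$-multi-$\calK_2$-recalibration whenever $T$ exceeds the sum of the two horizon requirements.

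The only point requiring care is the adaptivity check in Stage 2: we must confirm that the derived hint sequence $r_{[T]}$ satisfies the measurability conditions of Model~\ref{model:hsp}, so that Theorem~\ref{thm:l2recalibration} applies to it. This follows because $r_t$ is a deterministic function of $q^{[m]}_{[t]}$ and $y_{[t-1]}$. The rest of the argument is routine bookkeeping of constants.
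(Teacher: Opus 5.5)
Your proposal is correct and follows the paper's proof essentially verbatim: compress the $m$ hints via Proposition~\ref{prop:compression-proper} into a single predictable sequence with excess loss at most $\frac{\eps^2}{2}$, then apply Theorem~\ref{thm:l2recalibration} to that sequence with the loss target halved, and add the two loss bounds. Your explicit check that the compressed hint satisfies the measurability requirements of Model~\ref{model:hsp} is a detail the paper leaves implicit, and it does not change the argument.
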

\begin{proof}
This is the same reduction as in \Cref{thm:mrecalibration}, with \Cref{thm:l2recalibration} replacing \Cref{thm:recalibration}. First run the hint-compression algorithm of \Cref{prop:compression-proper} on the $m$ hint sequences to produce a single predictable sequence $r_{[T]}$ such that, for every $i\in[m]$,
\[
\E\Brack{\ell\Par{r_{[T]},y_{[T]}}-\ell\Par{q^i_{[T]},y_{[T]}}}\le \frac{\eps^2}{2},
\]
provided $T=\Omega(L\log(m)/\eps^2)$. Then apply \Cref{thm:l2recalibration} to $r_{[T]}$, with constants adjusted so that
\[
\E\Brack{\calK_2\Par{p_{[T]},y_{[T]}}}\le \eps,\qquad
\E\Brack{\ell\Par{p_{[T]},y_{[T]}}-\ell\Par{r_{[T]},y_{[T]}}}\le \frac{\eps^2}{2},
\]
which requires $T=\Omega(L^{3/2}\eps^{-3}\log(L/\eps))$. Combining the two inequalities gives the desired excess-loss guarantee against every $q^i_{[T]}$, and the $\calK_2$ calibration guarantee is common to all $i$.
\end{proof}

\begin{theorem}[Multi-calibeating and $\calK_2$ calibration]\label{thm:l2_multi_calibeating}
Let $\ell: [0,1]\times\{0,1\}\to\R$ be proper such that Assumption~\ref{assume:lip} holds, and let $\eps\in(0,\half)$. In the setting of Model~\ref{model:mhsp}, if $q_{[T]}^i$ is $s$-sparse for every $i\in[m]$, there is an algorithm that outputs $p_{[T]}\in[0,1]^T$ that is $\eps$-$\calK_2$-calibrated against $y_{[T]}$ and $\eps^2$-multi-calibeats $(q_{[T]}^{[m]},y_{[T]})$ under $\ell$, if for an appropriate constant,
\[
T=\Omega\Par{\frac{L^{\frac 3 2}}{\eps^3}\log\Par{\frac{L}{\eps}}+\frac{L s\log\Par{\frac{L s}{\eps}}}{\eps^2}+\frac{L\log m}{\eps^2}}.
\]
\end{theorem}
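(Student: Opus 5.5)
The proof composes three earlier ingredients in the same order as the proof of Theorem~\ref{thm:multi-proper-loss-main}, using Theorem~\ref{thm:l2recalibration} in place of Theorem~\ref{thm:recalibration} for the final calibration step. We split the total error budget $\eps^2$ into three parts of $\frac{\eps^2}{3}$ each. Every intermediate sequence must be computable online under Model~\ref{model:mhsp}, and we will check this at each stage.

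\emph{Step 1 (per-hint calibeating).} For each $i\in[m]$, define $r^i_t \defeq \by_{t-1}(q^i_t)$, where the average is taken over past rounds with $q^i_s = q^i_t$. This quantity is computable before $y_t$ is revealed. By Proposition~\ref{prop:online_offline_gap} with $G=2L$ (Lemma~\ref{lem:lip_implies_smooth}) and $s$-sparsity,
\[
\ell\Par{r^i_{[T]},y_{[T]}} \le \calR_\ell\Par{q^i_{[T]},y_{[T]}} + \frac{2Ls(\log T+1)}{T}
\]
holds deterministically. Once $T=\Omega\Par{\frac{Ls\log(Ls/\eps)}{\eps^2}}$, the additive term is at most $\frac{\eps^2}{3}$. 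Here we use the standard fact that $T \ge C\frac{a}{\eps^2}\log\frac{a}{\eps}$ implies $\frac{a\log T}{T}\lesssim \eps^2$, with $a = Ls$.

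\emph{Step 2 (hint compression).} We feed the $m$ sequences $r^1_{[T]},\dots,r^m_{[T]}$ as hints to the deterministic algorithm of Proposition~\ref{prop:compression-proper}. This produces a single online sequence $r_{[T]}$ satisfying
\[
\ell\Par{r_{[T]},y_{[T]}} - \ell\Par{r^i_{[T]},y_{[T]}} \le O\Par{\frac{L\log m}{T}} \le \frac{\eps^2}{3} \quad \text{for all } i\in[m],
\]
provided $T=\Omega(L\log m/\eps^2)$. The sequence $r_t$ depends only on $r^{[m]}_{[t]}$ and $y_{[t-1]}$, so $r_{[T]}$ is a legitimate hint sequence for Model~\ref{model:hsp}.

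\emph{Step 3 (calibration).} We apply Theorem~\ref{thm:l2recalibration} with hint sequence $r_{[T]}$, adjusting constants so that the guarantees are $\E[\calK_2(p_{[T]},y_{[T]})]\le\eps$ and excess loss at most $\frac{\eps^2}{3}$. This requires $T=\Omega\Par{L^{3/2}\eps^{-3}\log(L/\eps)}$. Chaining the three inequalities and taking expectations shows that for every $i$, $\E[\ell(p_{[T]},y_{[T]})]\le \E[\calR_\ell(q^i_{[T]},y_{[T]})]+\eps^2$. The $\calK_2$ bound is shared across all $i$.

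\emph{Main obstacle.} The only delicate point is adaptivity. Theorem~\ref{thm:l2recalibration} must hold when the hint $r_t$ and label $y_t$ are chosen adaptively in response to past realized $p$'s. This is permitted by Model~\ref{model:hsp}, since $r_t$ is a deterministic function of quantities that nature fixes before $p_t$ is drawn. The remaining work is to verify that the three horizon conditions are jointly implied by the stated $T$. This is routine: the stated $T$ is the sum of the three requirements, so it dominates each one individually.
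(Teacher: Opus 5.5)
Your proposal is correct and follows the paper's proof essentially step for step. The paper also forms the online refinements $r^i_t$ via Proposition~\ref{prop:online_offline_gap} with $G = 2L$, compresses the $m$ refined sequences with Proposition~\ref{prop:compression-proper}, applies Theorem~\ref{thm:l2recalibration} to the compressed hint, and splits the $\eps^2$ budget into thirds; your checks on online computability and adaptivity are sound additions.
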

\begin{proof}
For each $i\in[m]$, let $\by^i_{t-1}(v)$ denote the empirical label average over rounds $\tau<t$ with $q^i_\tau=v$, using the same zero-count convention as in Definition~\ref{def:calerr}, and define the online refinement $r_t^i\defeq \by^i_{t-1}(q_t^i)$. By \Cref{prop:online_offline_gap} and Lemma~\ref{lem:lip_implies_smooth}, the sparsity term in the theorem statement ensures that, for every $i\in[m]$,
\[
\E\Brack{\ell\Par{r^i_{[T]},y_{[T]}}}\le
\E\Brack{\calR_\ell\Par{q^i_{[T]},y_{[T]}}}+\frac{\eps^2}{3}.
\]
Next apply \Cref{prop:compression-proper} to the $m$ refined sequences $r^1_{[T]},\ldots,r^m_{[T]}$ to obtain a single predictable sequence $r_{[T]}$ satisfying, for every $i\in[m]$,
\[
\E\Brack{\ell\Par{r_{[T]},y_{[T]}}-\ell\Par{r^i_{[T]},y_{[T]}}}\le \frac{\eps^2}{3},
\]
which accounts for the $L\log(m)/\eps^2$ term. Finally, apply \Cref{thm:l2recalibration} to $r_{[T]}$, with constants adjusted so that $p_{[T]}$ is $\eps$-$\calK_2$-calibrated and
\[
\E\Brack{\ell\Par{p_{[T]},y_{[T]}}-\ell\Par{r_{[T]},y_{[T]}}}\le \frac{\eps^2}{3}.
\]
Adding the three inequalities shows that, for every $i\in[m]$,
\[
\E\Brack{\ell\Par{p_{[T]},y_{[T]}}}\le
\E\Brack{\calR_\ell\Par{q^i_{[T]},y_{[T]}}}+\eps^2,
\]
which is exactly $\eps^2$-multi-calibeating. The $\calK_2$ calibration conclusion follows from the final recalibration step.
\end{proof}

\section{Lower Bound}\label{sec:lower}

In this section, we give a hard instance that demonstrates the recalibration rates achieved by our algorithm in Section~\ref{sec:recal} are optimal up to constant factors. Throughout, we fix a target $\eps \in (0, \half)$ and lower bound the number of iterations required for an online algorithm to achieve $(\eps, \eps^2)$-recalibration (against the squared loss). For notational convenience, in this section we define
\begin{equation}\label{eq:net}\calN \defeq \Brace{i\eps}_{i \in [\lfloor \eps^{-1}\rfloor]}\end{equation}
to be an $\eps$-net of $[0, 1]$. We also define the \emph{Brier score} (average squared loss)
\begin{equation}\label{eq:brier}\calB_2\Par{p_{[T]}, y_{[T]}} \defeq \frac 1 T \sum_{t \in [T]} (p_t - y_t)^2\end{equation}
for any $T$-length sequences $p_{[T]} \in [0, 1]^T$, $y_{[T]} \in \{0, 1\}^T$.
We now outline our hard instance.

\begin{construction}\label{construction:lb}
Let $\eps \in (0, \half)$ and let $T \in \N$. Following notation \eqref{eq:net}, let $\calM \defeq \calN \cap [\frac 1 4, \frac 3 4] = \{v_i\}_{i \in [m]}$ where $m \defeq |\calM|$. Our hard instance is as follows: for $t \in [T]$, we let
\[q_t = v_i \text{ where } i \equiv t \pmod m,\quad y_t \sim \Bern(q_t).\]
The labels are drawn independently over $t$ and independently of the learner's internal randomness.
\end{construction}

In other words, for $L \defeq \lceil\frac T m\rceil$, we loop through all elements of $\calM$ each for $L - 1$ or $L$ times in total, set $q_t = v_i$ for an appropriate index $i \in [m]$, and sample $y_t \sim \Bern(q_t)$. This is the same hard instance as is described in Section 3.1.1, \cite{CollinaLNR26}.

Observe that the initial predictions $\{q_t\}_{t \in [T]}$ truthfully capture the generation process for the labels $\{y_t\}_{t \in [T]}$. Moreover, their expected calibration error satisfies
\[\E\Brack{\calK_1(q_{[T]}, y_{[T]})} = O\Par{\frac 1 {\sqrt L}}\]
because the expected absolute fluctuation in each bin is $O(\sqrt L)$, and each $v_i$ is set to $q_t$ for either $L - 1$ or $L$ times. Thus, taking $L=\Theta(\eps^{-2})$, equivalently $T=\Theta(\eps^{-3})$, makes $q_{[T]}$ $\eps$-calibrated in expectation, so setting $p_{[T]}=q_{[T]}$ is an $(\eps,0)$-recalibration in expectation.

We show that there is (asymptotically) no better strategy.
In Section~\ref{ssec:rounding} we first show that up to negligible loss, we may assume that all predictions are in $\calN$. We then lower bound the rounded miscalibration error in Section~\ref{ssec:miscal}, with our main result stated as Theorem~\ref{thm:lb}.

\subsection{Rounding}\label{ssec:rounding}

Throughout this section, let $\calF_t$ be the filtration generated by $\{q_{[t]},p_{[t-1]},y_{[t-1]}\}$ together with the learner's internal randomness used up to and including the draw of $p_t$. We assume this internal randomness is independent of the label draws. Thus, the possibly randomized prediction $p_t$ is $\calF_t$-measurable and $\E[y_t\mid\calF_t]=q_t$. We similarly let $\calF'_t$ augment $\calF_t$ with $y_t$, i.e., it contains the history through the $t^{\text{th}}$ prediction and label.
We also define
\begin{equation}\label{eq:round_def}r_t \defeq \arg\min_{v \in \calN} |v - p_t|\end{equation}
to be the deterministic rounding of $p_t$ to our net $\calN$, breaking ties arbitrarily but consistently. 

We next relate the calibration error and squared loss regret of the rounded predictions $r_{[T]}$ to those achieved by the predictions $p_{[T]}$, allowing us to focus on the former sequence.

\begin{lemma}\label{lem:round_cal_err}
For any $p_{[T]} \in [0, 1]^T$, $y_{[T]} \in \{0, 1\}^T$, following notation \eqref{eq:round_def}, 
\[\calK_1\Par{r_{[T]}, y_{[T]}} \le \calK_1\Par{p_{[T]}, y_{[T]}} + \eps.\]
\end{lemma}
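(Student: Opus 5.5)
We plan to compare the two calibration errors bin by bin, regrouping the $p$-bins according to where they round. Each net point $v \in \calN$ collects all values $w$ with $r(w) = v$, and every such $w$ satisfies $|w - v| \le \eps$.

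First, write $T\calK_1(r_{[T]}, y_{[T]}) = \sum_{v \in \calN} \Abs{\sum_{t : r_t = v} (v - y_t)}$. This uses the fact that $n_T(v)|v - \by_T(v)|$ equals the absolute value of the summed residuals in bin $v$.

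Second, for each $v$, split the residual as $v - y_t = (v - p_t) + (p_t - y_t)$ and apply the triangle inequality:
\[
\Abs{\sum_{t : r_t = v}(v - y_t)} \le \sum_{t : r_t = v}|v - p_t| + \sum_{w : r(w) = v}\Abs{\sum_{t : p_t = w}(w - y_t)}.
\]
Here we grouped the rounds with $r_t = v$ by their exact value $p_t = w$. This works because the rounding in \eqref{eq:round_def} is a deterministic function of $p_t$, with consistent tie-breaking.

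Third, sum over $v$. The second terms recombine exactly into $\sum_{w} \Abs{\sum_{t:p_t = w}(w - y_t)} = T\calK_1(p_{[T]}, y_{[T]})$, since each $w$ rounds to exactly one $v$. For the first terms, we need $|p_t - r_t| \le \eps$ for every $t$. We check this from \eqref{eq:net}: consecutive net points are $\eps$ apart, the smallest net point is $\eps$, and the largest is $\lfloor \eps^{-1}\rfloor\eps \ge 1 - \eps$. So every point of $[0,1]$ lies within $\eps$ of $\calN$, and the first terms sum to at most $T\eps$. Dividing by $T$ gives the claim.

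There is no real obstacle here. The only points needing care are the grouping by exact value $w$, which relies on the rounding being a function of $p_t$, and confirming that the net covers the endpoints $0$ and $1$ within $\eps$.
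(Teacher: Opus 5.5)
Your proposal is correct and takes essentially the same approach as the paper: within each rounded bin you split $v - y_t = (v - p_t) + (p_t - y_t)$, bound the rounding part by $\eps$ per round, and regroup the residual part by the exact values $p_t = w$ via the triangle inequality. Your explicit check that $\calN$ covers $[0,1]$ within $\eps$, including the endpoints $0$ and $1$, fills in a detail the paper only asserts.
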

\begin{proof}
Throughout this proof, for each $v \in \calN$, let $I_v \subseteq [0, 1]$ denote the set of possible $p$ such that $r_t = v$ if $p_t = p$ (i.e., the set of $p$ that would be rounded to $v$ using \eqref{eq:round_def}). We have
\begin{align*}
\calK_1\Par{r_{[T]}, y_{[T]}} &= \frac 1 T \sum_{v \in \calN} \Abs{\sum_{t \in [T]: r_t = v} (v - y_t)} = \frac 1 T \sum_{v \in \calN} \Abs{\sum_{t \in [T]: p_t \in I_v} (v - y_t)} \\
&\le \frac 1 T \sum_{v \in \calN} \Par{\Abs{\sum_{t \in [T]: p_t \in I_v} (p_t - y_t)} + \sum_{t \in [T]: p_t \in I_v} |p_t - v|} \\
&\le \frac 1 T \sum_{v \in \calN} \Abs{\sum_{t \in [T]: p_t \in I_v} (p_t - y_t)} + \eps.
\end{align*}
Here, the last inequality used that every $p_t \in I_v$ has $|p_t - v| \le \eps$. Next, for each $v \in \calN$,
\[\Abs{\sum_{t \in [T]: p_t \in I_v} (p_t - y_t)} \le \sum_{p \in I_v} \Abs{\sum_{t \in [T]: p_t = p} (p - y_t)}\]
by the triangle inequality. Finally, combining the above two displays gives the claim.
\end{proof}

In the remainder of the section, any unconditional expectations are always with respect to $\calF'_T$.

\begin{lemma}\label{lem:round_loss_err}
For any $p_{[T]} \in [0, 1]^T$, following Construction~\ref{construction:lb} and notation \eqref{eq:round_def},
\[\E\Brack{\calB_2\Par{r_{[T]}, y_{[T]}} - \calB_2\Par{q_{[T]}, y_{[T]}}} \le 2\E\Brack{\calB_2\Par{p_{[T]}, y_{[T]}} - \calB_2\Par{q_{[T]}, y_{[T]}}} + 2\eps^2.\]
\end{lemma}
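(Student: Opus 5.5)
We reduce everything to conditional expectations of single terms. Because $\E[y_t \mid \calF_t] = q_t$ and $p_t, r_t$ are $\calF_t$-measurable, for any $\calF_t$-measurable $x$ we have the bias-variance identity
\[\E\Brack{(x - y_t)^2 - (q_t - y_t)^2 \mid \calF_t} = (x - q_t)^2 .\]
Applying it with $x = r_t$ and $x = p_t$ and using the tower property, the claim becomes
\[\frac 1 T\sum_{t \in [T]} \E\Brack{(r_t - q_t)^2} \le \frac 2 T \sum_{t \in [T]} \E\Brack{(p_t - q_t)^2} + 2\eps^2 .\]

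The pointwise step is the elementary inequality $(a+b)^2 \le 2a^2 + 2b^2$. Writing $r_t - q_t = (r_t - p_t) + (p_t - q_t)$ gives
\[(r_t - q_t)^2 \le 2(p_t - q_t)^2 + 2(r_t - p_t)^2 .\]
It remains to show $|r_t - p_t| \le \eps$ for every $p_t \in [0,1]$. Points of $[\eps, \lfloor \eps^{-1}\rfloor \eps]$ lie within $\eps/2$ of the net $\calN$. A point in $[0,\eps)$ is within $\eps$ of $\eps \in \calN$. The right endpoint gap is $1 - \lfloor \eps^{-1}\rfloor\eps < \eps$. So $(r_t - p_t)^2 \le \eps^2$ in every case.

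Averaging over $t$ and taking expectations yields the displayed inequality, and hence the lemma. The only point needing care is the boundary of the net $\calN$, since it starts at $\eps$ rather than $0$. I handled this above using the crude bound $\eps$ rather than $\eps/2$, which the constant $2\eps^2$ accommodates. The conditional-unbiasedness step uses that the learner's randomness is independent of the label draws, as assumed in the filtration setup.
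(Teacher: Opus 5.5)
Your proof is correct and follows the paper's argument: the conditional bias-variance identity (using $\E[y_t \mid \calF_t] = q_t$) reduces the claim to the per-round bound $(r_t - q_t)^2 \le 2(p_t - q_t)^2 + 2(r_t - p_t)^2$ together with $|r_t - p_t| \le \eps$. Your explicit check at the boundary of $\calN$, which starts at $\eps$ rather than $0$, fills in a detail the paper leaves implicit, and you handle it correctly.
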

\begin{proof}
Because of the bias-variance decomposition
\begin{equation}\label{eq:bias_var_decomp}\E\Brack{\Par{p_t - y_t}^2 \mid \calF_t} = \E\Brack{\Par{q_t - y_t}^2  \mid \calF_t} + \E\Brack{\Par{p_t - q_t}^2  \mid \calF_t},\end{equation}
and a similar decomposition for $\E[(r_t - y_t)^2\mid \calF_t]$, it is enough to show that for every $t \in [T]$,
\begin{align*}
\E\Brack{(r_t - q_t)^2 \mid \calF_t} \le 2\E\Brack{(p_t - q_t)^2 \mid \calF_t} + 2\eps^2.
\end{align*}
This follows deterministically from $(a + b)^2 \le 2a^2 + 2b^2$, and $|p_t - r_t| \le \eps$.
\end{proof}

Combining Lemmas~\ref{lem:round_cal_err} and~\ref{lem:round_loss_err}, we have shown that any $(\eps, \eps^2)$-recalibration algorithm playing predictions $p_{[T]} \in [0, 1]^T$ implies a $(2\eps, 4\eps^2)$-recalibration algorithm playing predictions $r_{[T]} \in \calN^T$. Henceforth we focus on arguing the latter goal is impossible if $T = o(\eps^{-3})$.

\subsection{Lower bounding miscalibration}\label{ssec:miscal}

In this section, let $r_t \in \calN$ be a $\calF_t$-measurable prediction restricted to being on the net $\calN$, for all $t \in [T]$. Our goal is to lower bound $\calK_1(r_{[T]}, y_{[T]})$ assuming that $r_{[T]}$ achieves good Brier score. Our starting point is a bias-miscalibration decomposition (Lemma~\ref{lem:lower_k1_r}), where we define
\begin{equation}\label{eq:nvdef}\begin{aligned}\nu_v \defeq \sum_{t \in [T]} \ind_{r_t = v}, \quad
\mu_v \defeq \sum_{t \in [T]} \ind_{r_t = v} (q_t - y_t), \end{aligned}\end{equation}
for all $v \in \calN$. Note that $\nu_v$ and $\mu_v$ are $\calF'_T$-measurable random variables, that respectively correspond to the times $r_t = v$ was chosen, and the total miscalibration of $q_{[T]}, y_{[T]}$ on those rounds. 

\begin{lemma}\label{lem:lower_k1_r}
Following Construction~\ref{construction:lb} and notation \eqref{eq:round_def}, \eqref{eq:nvdef}, 
\[\E\Brack{\calK_1\Par{r_{[T]}, y_{[T]}}} \ge \frac 1 T \sum_{v \in \calN} \E\Brack{|\mu_v|} - \sqrt{\E[\calB_2(r_{[T]}, y_{[T]}) - \calB_2(q_{[T]}, y_{[T]})]}.\]
\end{lemma}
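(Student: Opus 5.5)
Since every $r_t$ lies in $\calN$, the calibration error of the rounded sequence can be written bin by bin as
\[
\calK_1\Par{r_{[T]},y_{[T]}} = \frac 1 T\sum_{v\in\calN}\Abs{\sum_{t:r_t=v}(v-y_t)}.
\]
Inside each bin we split $v-y_t = (q_t-y_t) + (v-q_t) = (q_t-y_t)+(r_t-q_t)$. The reverse triangle inequality then gives
\[
\Abs{\sum_{t:r_t=v}(v-y_t)} \ge |\mu_v| - \sum_{t:r_t=v}|r_t-q_t|.
\]
Summing over $v$ and dividing by $T$, we obtain the pathwise bound
\[
\calK_1\Par{r_{[T]},y_{[T]}}\ge \frac 1T\sum_{v\in\calN}|\mu_v| - \frac1T\sum_{t\in[T]}|r_t-q_t|.
\]
Taking expectations, it remains to show
\[
\E\Brack{\frac1T\sum_{t}|r_t-q_t|} \le \sqrt{\E\Brack{\calB_2(r_{[T]},y_{[T]})-\calB_2(q_{[T]},y_{[T]})}}.
\]

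For the bias term, I first apply Cauchy--Schwarz (or Jensen) over both the time average and the expectation:
\[
\E\Brack{\frac1T\sum_t|r_t-q_t|}\le\sqrt{\E\Brack{\frac1T\sum_t(r_t-q_t)^2}}.
\]
Next I use the bias-variance decomposition \eqref{eq:bias_var_decomp} with $r_t$ in place of $p_t$. This is valid because $r_t$ and $q_t$ are $\calF_t$-measurable and $\E[y_t\mid\calF_t]=q_t$ under Construction~\ref{construction:lb}, so the cross term $2\E[(r_t-q_t)(q_t-y_t)\mid\calF_t]$ vanishes. The result is
\[
\E\Brack{(r_t-y_t)^2-(q_t-y_t)^2\mid\calF_t}=\E\Brack{(r_t-q_t)^2\mid\calF_t}.
\]
Summing over $t$ and taking full expectations with the tower rule, $\E[\frac1T\sum_t(r_t-q_t)^2]$ equals exactly the expected Brier excess. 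This also shows the excess is nonnegative, so the square root is well defined.

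No step here is a real obstacle. The only point needing care is measurability. The learner's internal randomness must be independent of the labels, and $q_t$ is determined by $t$ alone, so conditioning on $\calF_t$ leaves $y_t\sim\Bern(q_t)$ and the cross term indeed vanishes. Once that is checked, the rest is the triangle inequality followed by Jensen.
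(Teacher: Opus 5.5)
Your proof is correct and follows the paper's argument. Both split $v-y_t$ into the fluctuation $\mu_v$ and the per-bin bias $\sum_{t:r_t=v}(r_t-q_t)$, apply a reverse triangle inequality, use Cauchy--Schwarz/Jensen, and finish with the bias-variance decomposition \eqref{eq:bias_var_decomp}. The only cosmetic difference is in bounding the bias: you use $\sum_{t:r_t=v}|r_t-q_t|$ followed by one Cauchy--Schwarz step, whereas the paper applies Cauchy--Schwarz within each bin and then across bins. Both routes give the same bound $\sqrt{T\sum_t(r_t-q_t)^2}$.
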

\begin{proof}
In addition to \eqref{eq:nvdef} we also denote the per-prediction bias by
\[\beta_v \defeq \sum_{t \in [T]} \ind_{r_t = v}(r_t - q_t),\]
for all $v \in \calN$. Then, 
\begin{align*}
\calK_1\Par{r_{[T]}, y_{[T]}} = \frac 1 T \sum_{v \in \calN} \Abs{\beta_v + \mu_v} \ge \frac 1 T \sum_{v \in \calN} |\mu_v| - \frac 1 T \sum_{v \in \calN} |\beta_v|.
\end{align*}
By taking expectations, and bounding
\begin{align*}\E\Brack{\frac 1 T \sum_{v \in \calN} |\beta_v|} &\le \frac 1 T \E\Brack{\sum_{v \in \calN} \sqrt{\nu_v \cdot \sum_{t \in [T]} \ind_{r_t = v} (r_t - q_t)^2}} \\
&\le \frac 1 T \E\Brack{\sqrt{\Par{\sum_{v \in \calN} \nu_v}\Par{\sum_{t \in [T]} (r_t - q_t)^2}}} \\
&= \E\Brack{\sqrt{\frac 1 T \sum_{t \in [T]} (r_t - q_t)^2}} \le \sqrt{\E\Brack{\frac 1 T \sum_{t \in [T]}(r_t - q_t)^2}},
\end{align*}
we have the claim upon applying the decomposition \eqref{eq:bias_var_decomp}. The first and second inequalities above were the Cauchy-Schwarz inequality, and the last applied Jensen's inequality.
\end{proof}

Lemma~\ref{lem:lower_k1_r} shows that to rule out a rounded predictor sequence $r_{[T]} \in \calN^T$ achieving $(\eps, \eps^2)$-recalibration (in expectation) where $T = o(\eps^{-3})$, it is enough to show 
\[\sum_{v \in \calN} \E[|\mu_v|] = \Omega(\eps^{-2}).\]
We next develop tools for lower bounding $\sum_{v \in \calN} \E[|\mu_v|]$. We begin with a technical helper result.

\begin{restatable}{lemma}{restatefourthmoment}\label{lem:fourthmoment}
Following Construction~\ref{construction:lb} and notation \eqref{eq:round_def}, \eqref{eq:nvdef}, for all $v \in \calN$,
\[\E\Brack{\mu_v^4} \le 66(\E\Brack{\nu_v^2} + 1).\]
\end{restatable}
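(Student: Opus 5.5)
We view $\mu_v$ as the terminal value of a martingale and apply a fourth-moment Burkholder/Rosenthal-type bound, which we derive by hand via an Itô-style expansion. Fix $v \in \calN$ and define the increments
\[
X_t \defeq \ind_{r_t = v}(q_t - y_t), \qquad M_k \defeq \sum_{t \le k} X_t ,
\]
so that $\mu_v = M_T$ and $\nu_v = \sum_{t \in [T]} \ind_{r_t = v}$. Since $r_t$ and $q_t$ are $\calF_t$-measurable and $\E[y_t \mid \calF_t] = q_t$, we have $\E[X_t \mid \calF_t] = 0$. We also have $|X_t| \le \ind_{r_t = v}$. Moreover, since $q_t \in [\frac 1 4, \frac 3 4]$, the conditional variance satisfies
\[
\E[X_t^2 \mid \calF_t] = \ind_{r_t = v}\, q_t(1 - q_t) \le \tfrac 1 4 \ind_{r_t = v}.
\]
Higher conditional moments $\E[|X_t|^j \mid \calF_t]$ are likewise at most $\ind_{r_t = v}$.

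\textbf{Step 1: expand the fourth power.} Write
\[
M_k^4 - M_{k-1}^4 = 4M_{k-1}^3 X_k + 6M_{k-1}^2 X_k^2 + 4M_{k-1} X_k^3 + X_k^4 .
\]
Take conditional expectations given $\calF_k$. The first term vanishes because $\E[X_k \mid \calF_k] = 0$ and $M_{k-1}$ is $\calF_k$-measurable. Bounding $|M_{k-1}| \le M_{k-1}^2 + 1$ (or using AM-GM) on the cubic term, every surviving term is at most a constant times $(M_{k-1}^2 + 1)\ind_{r_k = v}$. Summing over $k$ gives
\[
\E[M_T^4] \le C\, \E\Big[\sum_{k} (M_{k-1}^2 + 1)\ind_{r_k = v}\Big] \le C\Big(\E\big[\max_{k} M_k^2 \cdot \nu_v\big] + \E[\nu_v]\Big).
\]

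\textbf{Step 2: close the bound.} By Cauchy--Schwarz and Doob's $L^4$ maximal inequality, $\E[\max_k M_k^4] \le (4/3)^4\, \E[M_T^4]$. Hence
\[
\E\big[\max_k M_k^2 \, \nu_v\big] \le \sqrt{(4/3)^4\, \E[M_T^4]\; \E[\nu_v^2]} .
\]
Setting $x = \E[M_T^4]$, the bound from Step 1 has the form $x \le a\sqrt{x\, \E[\nu_v^2]} + b\, \E[\nu_v]$. Solving this quadratic inequality gives $x \le a^2 \E[\nu_v^2] + 2b\, \E[\nu_v]$. Finally, $\E[\nu_v] \le \E[\nu_v^2] + 1$ (since $\nu_v \le \nu_v^2$ for integers, or via AM-GM), which yields the form $C'(\E[\nu_v^2] + 1)$.

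\textbf{Main obstacle: the constant.} The step most likely to need care is matching the specific constant $66$; Doob's constant $(4/3)^4 \approx 3.16$ may push past it. If it does, I would avoid the maximal inequality. Instead, I would bound the summed cross term $6\,\E[\sum_k M_{k-1}^2 \ind_{r_k = v}]$ directly by Cauchy--Schwarz, using $\E[M_{k-1}^4] \le \E[M_T^4]$, which holds because $M_k^4$ is a submartingale. I would also exploit the variance factor $q_k(1 - q_k) \le \frac 1 4$ in the $6M_{k-1}^2 X_k^2$ term. That factor shrinks $a$ to roughly $\frac 3 2$ before squaring, which should fit comfortably within $66$ together with the lower-order terms.
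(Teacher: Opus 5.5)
Your proposal is correct and is essentially the paper's proof: expand $M_k^4-M_{k-1}^4$, kill the linear term, absorb the cubic term by AM--GM, bound $\sum_k M_{k-1}^2\ind_{r_k=v}\le \max_k M_k^2\cdot\nu_v$, and close with Cauchy--Schwarz, Doob's $L^4$ inequality and a quadratic inequality in $\sqrt{\E[\mu_v^4]}$. The paper differs only cosmetically: it keeps the conditional variance $a_t\le\frac14\ind_{r_t=v}$ in every term and uses Young's inequality, landing at $\frac{1024}{81}+\frac34$.

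The constant is not a real obstacle with Doob, and you do not need the fallback. Use $\frac14\ind_{r_k=v}$ on the $6M_{k-1}^2X_k^2$ term and $4|M_{k-1}|\le 2M_{k-1}^2+2$ on the cubic term. Even with your crude moment bounds this gives $C=\frac72$, $a=\frac{56}{9}$, and $\E[\mu_v^4]\le\frac{3136}{81}\E[\nu_v^2]+7\E[\nu_v]\le 46\,\E[\nu_v^2]$. Avoid the branch $|M_{k-1}|\le M_{k-1}^2+1$: it gives $a^2=\frac{7744}{81}>66$. Drop the fallback as well, because as written it would fail: per-round Cauchy--Schwarz with $\E[M_{k-1}^4]\le\E[M_T^4]$ leaves $\sum_k\Pr[r_k=v]^{1/2}$ rather than $\sqrt{\E[\nu_v^2]}$. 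The correct Doob-free step is $\E[M_{k-1}^2\ind_{r_k=v}]\le\E[M_T^2\ind_{r_k=v}]$, which holds because $\E[M_T^2\mid\calF_k]\ge M_{k-1}^2$.
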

\begin{proof}
To ease notation, for all $t \in [T]$ let
\[I_t \defeq \ind_{r_t = v},\quad \Delta_t \defeq I_t(q_t - y_t),\quad a_t\defeq \E[\Delta_t^2\mid\calF_t].\]
Then $\E[\Delta_t \mid \calF_t] = 0$, $|\Delta_t|\le 1$, and, as $q_t \in [\frac 1 4, \frac 3 4]$ always,
\begin{equation}\label{eq:23_moment_bounds}a_t = q_t(1 - q_t)I_t \le \frac 1 4 I_t,\quad \Abs{\E[\Delta_t^3 \mid \calF_t]} = q_t(1 - q_t)\Abs{2q_t - 1}I_t \le a_t.\end{equation}
Define $\sigma_t \defeq \sum_{s \in [t]}\Delta_s$, $V_t\defeq\sum_{s\in[t]}a_s$, and $\sigma_T^*\defeq\max_{0\le t\le T}|\sigma_t|$, so that $\mu_v = \sigma_T$. The process $\{\sigma_t\}_{t=0}^T$ is a martingale with respect to $\{\calF'_t\}_{t=0}^T$. Since $\E[\Delta_t^4\mid\calF_t]\le a_t$ and $4|x|\le 2x^2+2$, we have
\begin{align*}
\E\Brack{\sigma_t^4 - \sigma_{t - 1}^4 \mid \calF_t}
&= 6\sigma_{t-1}^2 a_t + 4\sigma_{t - 1}\E\Brack{\Delta_t^3 \mid \calF_t} + \E\Brack{\Delta_t^4 \mid \calF_t} \\
&\le 8\sigma_{t-1}^2a_t+3a_t.
\end{align*}
Writing $X\defeq\E[\sigma_T^4]$ and summing over $t$ therefore gives
\begin{equation}\label{eq:fourth_recursion}
X\le 8\E\Brack{\sum_{t\in[T]}\sigma_{t-1}^2a_t}+3\E[V_T]
\le 8\E\Brack{(\sigma_T^*)^2V_T}+3\E[V_T].
\end{equation}
By Cauchy--Schwarz and Doob's $L^4$ maximal inequality,
\begin{align*}
\E\Brack{(\sigma_T^*)^2V_T}
&\le \sqrt{\E[(\sigma_T^*)^4]\E[V_T^2]}
\le \frac{16}{9}\sqrt{X\E[V_T^2]}
\le \frac49\sqrt{X\E[\nu_v^2]},
\end{align*}
where the last inequality uses $V_T\le \nu_v/4$, which follows from \eqref{eq:23_moment_bounds}. Combining this bound with \eqref{eq:fourth_recursion}, using $\E[V_T]\le\E[\nu_v]/4$, and then applying Young's inequality yields
\begin{align*}
X
&\le \frac{32}{9}\sqrt{X\E[\nu_v^2]}+\frac34\E[\nu_v] \\
&\le \frac12X+\frac{512}{81}\E[\nu_v^2]+\frac38\Par{\E[\nu_v^2]+1}.
\end{align*}
Rearranging proves the stated bound (in fact, with a smaller constant):
\[\E[\mu_v^4]=X\le \Par{\frac{1024}{81}+\frac34}\E[\nu_v^2]+\frac34\le 66\Par{\E[\nu_v^2]+1}.\]
\end{proof}

We next give our main miscalibration bound, inspired by the proof of Proposition 1, \cite{CollinaLNR26}.

\begin{lemma}\label{lem:124_bound}
Following Construction~\ref{construction:lb} and notation \eqref{eq:round_def}, \eqref{eq:nvdef}, 
\[\sum_{v \in \calN}\E\Brack{|\mu_v|} \ge \frac{T^{\frac 3 2}}{48\sqrt{\sum_{v \in \calN} (\E[\nu_v^2] + 1)}}.\]
\end{lemma}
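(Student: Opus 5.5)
Our plan is to lower bound each $\E[|\mu_v|]$ through a moment-interpolation (Paley--Zygmund style) inequality relating first, second, and fourth moments, and then to aggregate over $v$ with Hölder's inequality.

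\textbf{Step 1: per-bin second-to-first moment comparison.} For any random variable $X$, Hölder's inequality applied to $X^2 = |X|^{2/3}\cdot|X|^{4/3}$ with exponents $(3/2, 3)$ gives
\[\E[X^2] \le \E[|X|]^{2/3}\,\E[X^4]^{1/3},\]
that is,
\[\E[|X|] \ge \frac{\E[X^2]^{3/2}}{\E[X^4]^{1/2}}.\]
We apply this with $X = \mu_v$. The fourth moment is controlled by Lemma~\ref{lem:fourthmoment}: $\E[\mu_v^4]\le 66(\E[\nu_v^2]+1)$.

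\textbf{Step 2: the second moment via martingale orthogonality.} Since $\mu_v=\sum_t \Delta_t$ is a sum of martingale differences with $\E[\Delta_t\mid\calF_t]=0$,
\[\E[\mu_v^2]=\E\Brack{\sum_t \ind_{r_t=v}\,q_t(1-q_t)}\ge \tfrac{3}{16}\E[\nu_v],\]
because $q_t\in[\frac14,\frac34]$ in Construction~\ref{construction:lb}. Combining with Step 1,
\[\E[|\mu_v|]\ge c\,\frac{\E[\nu_v]^{3/2}}{\sqrt{\E[\nu_v^2]+1}}\]
for an absolute constant $c$.

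\textbf{Step 3: aggregation over bins.} Set $a_v=\E[\nu_v]$ and $b_v=\E[\nu_v^2]+1$. We need a lower bound on $\sum_v a_v^{3/2}b_v^{-1/2}$ in terms of $\sum_v a_v = T$ and $\sum_v b_v$. Write
\[a_v = \bigl(a_v^{3/2}b_v^{-1/2}\bigr)^{2/3}\, b_v^{1/3}.\]
Hölder's inequality with exponents $(3/2,3)$ then gives
\[T=\sum_v a_v\le \Par{\sum_v a_v^{3/2}b_v^{-1/2}}^{2/3}\Par{\sum_v b_v}^{1/3}.\]
Rearranging yields
\[\sum_v a_v^{3/2}b_v^{-1/2}\ge \frac{T^{3/2}}{\sqrt{\sum_v b_v}},\]
which is exactly the claimed shape.

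\textbf{Main obstacle.} The only delicate part is tracking constants to reach $\frac1{48}$. The constant from Steps 1--2 is $(3/16)^{3/2}/\sqrt{66}\approx 0.01$, which is roughly half of $\frac1{48}\approx 0.0208$. To recover the factor we would tighten Lemma~\ref{lem:fourthmoment} using the sharper constant $\frac{1024}{81}+\frac34$ from its proof, and use $q_t(1-q_t)\ge \frac3{16}$ exactly. If that still falls short, we would compare $\E[\mu_v^2]$ directly against the variance-weighted count $V_T$ used in the proof of Lemma~\ref{lem:fourthmoment}, rather than against $\nu_v$, which avoids losing the factor $\frac{3}{16}$ versus $\frac14$. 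The structure of the argument does not depend on the constant, so the lemma holds up to this constant in any case.
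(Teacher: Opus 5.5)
Your proposal is correct and follows the paper's proof step for step. The shared steps are the Hölder interpolation $\E[|\mu_v|]\ge \E[\mu_v^2]^{3/2}/\E[\mu_v^4]^{1/2}$ combined with Lemma~\ref{lem:fourthmoment}, the martingale-orthogonality bound $\E[\mu_v^2]\ge\frac{3}{16}\E[\nu_v]$, and the same $(\frac32,3)$ Hölder aggregation using $\sum_v \E[\nu_v]=T$. Your constant concern is well founded, since the paper's own combination gives $(3/16)^{3/2}/9\approx\frac1{111}$ rather than $\frac1{48}$ (apparently from dropping the $\frac32$ power in $9\cdot\frac{16}{3}=48$), but your first fix resolves it: the sharper bound $\E[\mu_v^4]\le(\frac{1024}{81}+\frac34)(\E[\nu_v^2]+1)$ from the proof of Lemma~\ref{lem:fourthmoment} gives $(3/16)^{3/2}/\sqrt{1024/81+3/4}\approx 0.0222>\frac1{48}$.
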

\begin{proof}
By H\"older's inequality with exponents $\frac 1 3$, $\frac 2 3$, 
\begin{align*}\E\Brack{\mu_v^2} = \E\Brack{|\mu_v|^{\frac 2 3} \cdot |\mu_v|^{\frac 4 3}} \le \E\Brack{|\mu_v|}^{\frac 2 3} \E\Brack{\mu_v^4}^{\frac 1 3} 
\implies \E\Brack{|\mu_v|} \ge \frac{\E[\mu_v^2]^{\frac 3 2}}{\E[\mu_v^4]^{\frac 1 2}} \ge \frac{\E[\mu_v^2]^{\frac 3 2}}{9\sqrt{\E[\nu_v^2] + 1}}, \end{align*}
where we used Lemma~\ref{lem:fourthmoment} in the last step. Further, note that
\[\E[\mu_v^2] = \E\Brack{\Par{\sum_{t \in [T]} \ind_{r_t = v}(q_t - y_t)}^2} = \E\Brack{\sum_{t \in [T]} \ind_{r_t = v}(q_t - y_t)^2} \ge \frac 3 {16}\E\Brack{\nu_v},\]
where the last inequality is because the variance $\E[(q_t - y_t)^2] = q_t( 1 - q_t) \ge \frac 3 {16}$ under Construction~\ref{construction:lb}, and the second equality is because all cross-terms vanish: for $s < t$, the variables $\ind_{r_t=v}$ and $\ind_{r_s=v}(q_s-y_s)$ are $\calF_t$-measurable, and hence
\[\E\Brack{\ind_{r_t = v}\ind_{r_s = v} (q_t - y_t)(q_s - y_s)} = \E\Brack{\ind_{r_t = v}\ind_{r_s = v}(q_s-y_s)\E\Brack{q_t-y_t\mid\calF_t}} = 0.\]
Combining the above displays, we have shown that for all $v \in \calN$,
\begin{equation}\label{eq:per_v_mubound}\E\Brack{|\mu_v|} \ge \frac{\E[\nu_v]^{\frac 3 2}}{48\sqrt{\E[\nu_v^2] + 1}} .\end{equation}
Finally, letting $\alpha_v \defeq \E[\nu_v]$ and $\beta_v \defeq \E[\nu_v^2]+1$, H\"older's inequality gives
\begin{align*}
T = \sum_{v \in \calN} \alpha_v = \sum_{v \in \calN} \Par{\frac{\alpha_v^{\frac 3 2}}{\beta_v^{\frac 1 2}}}^{\frac 2 3} \beta_v^{\frac 1 3} \le \Par{\sum_{v  \in \calN} \frac{\alpha_v^{\frac 3 2}}{\beta_v^{\frac 1 2}}}^{\frac 2 3}\Par{\sum_{v \in \calN} \beta_v}^{\frac 1 3}
\end{align*}
so that
\[\sum_{v \in \calN} \frac{\E[\nu_v]^{\frac 3 2}}{\sqrt{\E[\nu_v^2] + 1}} \ge \frac{T^{\frac 3 2}}{\sqrt{\sum_{v \in \calN} (\E[\nu_v^2] + 1)} }.\]
Summing \eqref{eq:per_v_mubound} across all $v \in \calN$, and using the above bound, gives the desired claim.
\end{proof}

To make Lemma~\ref{lem:124_bound} more usable, we next upper bound $\sum_{v \in \calN} \E[\nu_v^2]$.

\begin{lemma}\label{lem:nuv_upper}
Following Construction~\ref{construction:lb} and notation \eqref{eq:round_def}, \eqref{eq:nvdef},
\[\sum_{v \in \calN} \E[\nu_v^2] \le 128\Par{\Par{\eps T^2 + \frac 1 \eps}+ \Par{\frac {T} \eps + \frac 1 {\eps^2}}\E\Brack{\sum_{t \in [T]} (q_t - r_t)^2}}.\]
\end{lemma}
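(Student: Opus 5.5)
The plan is to prove a pathwise inequality bounding $\sum_{v\in\calN}\nu_v^2$ by $\sum_{t}(q_t-r_t)^2$ plus deterministic terms, and then take expectations; no probabilistic argument should be needed. For each $v\in\calN$ I would split the rounds with $r_t=v$ by the hint value. Let $k_{v,u}\defeq\sum_{t}\ind_{r_t=v,\,q_t=u}$ for $u\in\calM$, so $\nu_v=\sum_{u\in\calM}k_{v,u}$. Write $c_u\defeq|\{t:q_t=u\}|$. By Construction~\ref{construction:lb}, $c_u\le\lceil T/m\rceil\le T/m+1$, and since $\calM$ contains every net point in $[\frac14,\frac34]$ we have $m\gtrsim 1/\eps$. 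Hence $c_u\lesssim \eps T+1$ and $\sum_u c_u^2\lesssim m(T/m+1)^2\lesssim \eps T^2+\frac1\eps$. I would also write $D_v\defeq\sum_{t:r_t=v}(q_t-r_t)^2=\sum_u k_{v,u}(u-v)^2$.

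Decompose $\nu_v=k_{v,v}+\kappa_v$, where $k_{v,v}\le c_v$ counts on-diagonal rounds ($q_t=r_t=v$) and $\kappa_v\defeq\sum_{u\neq v}k_{v,u}$ counts off-diagonal ones. Then $\nu_v^2\le 2c_v^2+2\kappa_v^2$. Summing the first part over $v$ gives the $\eps T^2+\frac1\eps$ term.

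The main obstacle is $\kappa_v^2$. The naive bound $\kappa_v\le D_v/\eps^2$, using $|u-v|\ge\eps$ for distinct net points, combined with $\kappa_v\le T$, only yields $\frac T{\eps^2}\sum_v D_v$, which is off by a factor $\frac1\eps$. To recover it, I would exploit that hints at distance $j\eps$ from $v$ pay $j^2\eps^2$ each, while at most $c_u$ rounds carry hint $u$. Concretely, apply Cauchy--Schwarz with weights $|u-v|$:
\[
\kappa_v^2=\Par{\sum_{u\neq v}k_{v,u}}^2\le\Par{\sum_{u\neq v}k_{v,u}(u-v)^2}\Par{\sum_{u\neq v}\frac{k_{v,u}}{(u-v)^2}}\le D_v\cdot\sum_{j\ge1}\frac{2\max_u c_u}{j^2\eps^2}.
\]
The second factor is $O\big(\frac{\eps T+1}{\eps^2}\big)=O\big(\frac T\eps+\frac1{\eps^2}\big)$, using $k_{v,u}\le c_u$ and the fact that at most two $u$ lie at each distance $j\eps$.

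Summing over $v$ and using $\sum_v D_v=\sum_t(q_t-r_t)^2$ gives, pathwise,
\[
\sum_v\nu_v^2\lesssim \eps T^2+\frac1\eps+\Par{\frac T\eps+\frac1{\eps^2}}\sum_{t}(q_t-r_t)^2.
\]
Taking expectations then gives the claim. What remains is to track the constants to land within $128$: the factor $2$ from $(a+b)^2\le 2a^2+2b^2$, $\sum_j j^{-2}\le 2$, the two-sidedness factor, and the lower bound on $m$ in terms of $\frac1\eps$ for $\eps<\frac12$. I would also check the small-$m$ edge case separately, where the bound should hold trivially.
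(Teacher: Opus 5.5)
Your proof is correct and lands within the stated constant. Its skeleton matches the paper's: a pathwise bound, the split $\nu_v^2 \le 2c_v^2 + 2(\text{off-diagonal count})^2$, a per-bin inequality $(\text{off-diagonal count})^2 \lesssim \frac{L}{\eps^2}\sum_{t: r_t = v}(q_t - r_t)^2$ with $L = \lceil T/m\rceil$, and a final sum over $v$.

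The only real difference is how you prove that per-bin inequality.
\begin{itemize}
\item The paper uses a charging argument. It orders the $a_v = \max\{\nu_v - c_v, 0\}$ excess rounds by distance and notes that at most $2L$ of them can lie in each shell $|q_t - v| = j\eps$. This gives $b_v \ge \sum_{i \le a_v}(\eps\lceil i/(2L)\rceil)^2 \ge \eps^2 a_v^2/(4L)$.
\item You apply Cauchy--Schwarz with weights $|u-v|$. You use the same two structural facts (each hint value occurs at most $L$ times, and at most two hint values sit in each shell), but only through the convergent sum $\sum_j j^{-2}$. This gives $\kappa_v^2 \le \frac{\pi^2 L}{3\eps^2} D_v$, a slightly better constant than the paper's $4$.
\end{itemize}
Your route replaces the ordering argument with one line. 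The paper's greedy sum actually contains more information than it uses: before the weakening $\lceil x\rceil^2 \ge x$, it grows cubically in $a_v$ once $a_v \gg L$. That extra strength is not needed here.

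Your bookkeeping $L \le T/m + 1 \le 4\eps T + 1$ uses $m \ge \frac{1}{4\eps}$, which holds for every $\eps \in (0,\frac12)$. This removes the paper's separate $L = 1$ case, so the ``small-$m$ edge case'' you flag never arises. With $2\sum_v c_v^2 \le 2mL^2$ and $m \le \frac1\eps$, the constants work out to about
$\sum_v \nu_v^2 \le 64\eps T^2 + \frac{4}{\eps} + \Par{\frac{27T}{\eps} + \frac{7}{\eps^2}}\sum_t (q_t - r_t)^2$,
comfortably within $128$.
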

\begin{proof}
We will prove the stronger statement
\begin{equation}\label{eq:pathwise_bound}\sum_{v \in \calN} \nu_v^2 \le 128\Par{\Par{\eps T^2 + \frac 1 \eps}+ \Par{\frac {T} \eps + \frac 1 {\eps^2}}\sum_{t \in [T]} (q_t - r_t)^2}\end{equation}
for any realization of the $\{q_t, r_t\}_{t \in [T]}$. Taking expectations then gives the claim. 

To prove \eqref{eq:pathwise_bound}, we set up some helpful notation. For all $v \in \calN$, let $c_v \in \{0, L - 1, L\}$ be the number of times that $q_t = v$ in Construction~\ref{construction:lb}, where $L \defeq \lceil \frac T m \rceil$ is the number of loops. Let
\[a_v \defeq \max\Brace{\nu_v - c_v, 0},\quad b_v \defeq \sum_{t \in [T]} (q_t - r_t)^2\ind_{r_t = v}\]
respectively denote the ``overflow'' of $v$ (the number of excess times it was selected as $r_t$ compared to $q_t$), and the unscaled contribution of $v$ to the second term in \eqref{eq:pathwise_bound}.

The key observation is that of the $a_v$ excess times $r_t = v$ was chosen, at most $2L$ of the corresponding $q_t$ can satisfy $|q_t - r_t| \le \eps$, i.e., the ones from neighboring buckets $q_t \in \{r_t - \eps, r_t + \eps\}$. Similarly, at most $2L$ more can satisfy $|q_t - r_t| \le 2\eps$, and so on. Thus,
\[b_v \ge \sum_{i \in [a_v]} \Par{\eps \left\lceil\frac{i}{2L}\right\rceil}^2 \ge \sum_{i \in [a_v]} \frac{\eps^2 i}{2L} \ge \frac{\eps^2 a_v^2}{4L}.\]
Thus we expand
\begin{align*}
\nu_v^2 \le \Par{a_v + c_v}^2 \le 2c_v^2 + 2a_v^2 \le 2L^2 + \frac{8L b_v}{\eps^2}.
\end{align*}
To conclude the proof we sum over all $v \in \calN$. If $L = 1$ then \eqref{eq:pathwise_bound} clearly holds. Otherwise, observe that $m \in [\frac 1 {4\eps}, \frac 1 \eps]$ for $\eps \in (0, \frac 1 2)$. If $L \ge 2$, we have $L \le \frac{2T}{m} \le 8\eps T$, so
\[\sum_{v \in \calN} \nu_v^2 \le \frac{2L^2}{\eps} + \frac{8L}{\eps^2} \sum_{t \in [T]}(q_t - r_t)^2 \le 128\eps T^2 + \frac{64 T}{\eps}\sum_{t \in [T]}(q_t - r_t)^2.\]
\end{proof}

Finally, we can conclude with our main lower bound.

\begin{theorem}\label{thm:lb}
In the setting of Construction~\ref{construction:lb}, no $p_{[T]}$ can be $(\eps, \eps^2)$-recalibrated against $(q_{[T]}, y_{[T]})$ under $\lsq$, unless, for an appropriate constant, $T = \Omega(\eps^{-3})$.
\end{theorem}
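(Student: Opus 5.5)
Suppose toward contradiction that some $p_{[T]}$ is $(\eps,\eps^2)$-recalibrated against $(q_{[T]},y_{[T]})$ under $\lsq$ in Construction~\ref{construction:lb}. Note that $\lsq$ averaged over $[T]$ is exactly $\calB_2$. By Lemmas~\ref{lem:round_cal_err} and~\ref{lem:round_loss_err}, the rounded sequence $r_{[T]}\in\calN^T$ then satisfies $\E[\calK_1(r_{[T]},y_{[T]})]\le 2\eps$ and $\E[\calB_2(r_{[T]},y_{[T]})-\calB_2(q_{[T]},y_{[T]})]\le 4\eps^2$. By the bias-variance decomposition \eqref{eq:bias_var_decomp} applied to $r_t$, the latter gives $\E[\sum_{t\in[T]}(q_t-r_t)^2]\le 4\eps^2 T$.

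The plan is to chain the three preceding lemmas. First, plug the bound $\E[\sum_t(q_t-r_t)^2]\le 4\eps^2T$ into Lemma~\ref{lem:nuv_upper}. This gives
\[\sum_{v\in\calN}\E[\nu_v^2]\le 128\Par{\eps T^2+\tfrac1\eps+4\eps T^2+4T}.\]
Since $|\calN|\le 1/\eps$, adding the $+1$ terms yields $\sum_v(\E[\nu_v^2]+1)=O(\eps T^2+1/\eps+T)$. In the regime of interest, $T\ge 1/\eps$, this is $O(\eps T^2)$. If instead $T<1/\eps$, then $T=o(\eps^{-3})$ is trivially small, and the same bound $O(1/\eps)$ is handled identically below.

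Second, Lemma~\ref{lem:124_bound} gives $\frac1T\sum_v\E|\mu_v|\ge c\,T^{1/2}/\sqrt{\eps T^2+1/\eps}$. When $T\ge 1/\eps$ this is $\ge c/\sqrt{\eps T}$. Third, Lemma~\ref{lem:lower_k1_r} gives
\[2\eps\ge\E[\calK_1(r_{[T]},y_{[T]})]\ge \frac{c}{\sqrt{\eps T}}-2\eps,\]
so $4\eps\ge c/\sqrt{\eps T}$, i.e.\ $T\ge c^2/(16\eps^3)$. This is the claimed $T=\Omega(\eps^{-3})$.

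For small $T$ (say $T<1/\eps$), the same lemma gives a lower bound of order $\sqrt{T\eps}\cdot\frac{1}{T}\cdot T^{1/2}\ge c'$ times a constant. Concretely, $\frac1T\cdot T^{3/2}/\sqrt{O(1/\eps)}=\Omega(\sqrt{\eps T})$, which is not obviously large. So this case needs care: I would instead use the $O(\eps T^2+T+1/\eps)$ bound directly. Alternatively, since the theorem only asserts a lower bound, note that any $T$ violating it is small anyway, so it suffices to show the $T\ge1/\eps$ case forces $T=\Omega(\eps^{-3})$. For $T<1/\eps$, observe that the first $T$ rounds each have $|\mu|$ fluctuations of constant size per occupied bin. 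I expect to either absorb this case via the constant choices or by monotonicity in $T$ (padding the instance is not available, so this is the main delicate point).

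The main obstacle is thus bookkeeping of the additive $1/\eps$ and $T$ terms in Lemma~\ref{lem:nuv_upper}, ensuring that $\eps T^2$ dominates. Everything else is a direct composition of the stated lemmas with the constant factors $2\eps$ and $4\eps^2$ inherited from rounding.
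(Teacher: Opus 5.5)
Your chain of reductions is the paper's own: rounding via Lemmas~\ref{lem:round_cal_err} and~\ref{lem:round_loss_err}, the bias-variance bound $\E[\sum_t (q_t-r_t)^2]\le 4\eps^2T$, then Lemmas~\ref{lem:nuv_upper}, \ref{lem:124_bound}, and~\ref{lem:lower_k1_r}, ending with $\frac1T\sum_v\E|\mu_v|\le 4\eps$. Your treatment of the regime $T\ge 1/\eps$ is correct.

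\textbf{Gap: the regime $T<1/\eps$.} You leave this case open, and the way out you propose does not work. Showing that ``$T\ge 1/\eps$ and recalibration together imply $T=\Omega(\eps^{-3})$'' says nothing about horizons $T<1/\eps$. For small $\eps$, those are exactly the horizons that violate $T=\Omega(\eps^{-3})$, so you must show recalibration \emph{fails} there, not set them aside as ``small anyway.'' Monotonicity in $T$ or padding the instance is also not needed.

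\textbf{The fix is already in your own computation.} For $T<1/\eps$ you correctly obtained $\sum_v(\E[\nu_v^2]+1)=O(1/\eps)$, hence $\frac1T\sum_v\E|\mu_v|\ge c\sqrt{\eps T}$. This bound does not need to be ``large''; compare it with the upper bound $4\eps$:
\[
4\eps\ge c\sqrt{\eps T}\ge c\sqrt{\eps}\quad(\text{since } T\ge 1),
\]
which forces $\eps\ge c^2/16$, a constant. Then
\[
T\ge 1\ge (c^2/16)^3\eps^{-3},
\]
so $T=\Omega(\eps^{-3})$ holds trivially. This is exactly the paper's first case.

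The paper also simplifies your bookkeeping. It uses $2T\le \eps T^2+\frac1\eps$ (AM--GM) to absorb the $T$ term from Lemma~\ref{lem:nuv_upper} uniformly, giving $\sum_v(\E[\nu_v^2]+1)\le 900(\eps T^2+\frac1\eps)$ before splitting into the two cases.
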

\begin{proof}
We will instead prove that, following notation \eqref{eq:round_def}, it is impossible to obtain
\begin{equation}\label{eq:round_recal}\E\Brack{\calK_1\Par{r_{[T]}, y_{[T]}}} \le 2\eps,\quad \E\Brack{\calB_2\Par{r_{[T]}, y_{[T]}} - \calB_2\Par{q_{[T]}, y_{[T]}}} \le 4\eps^2,\end{equation}
unless $T = \Omega(\eps^{-3})$,
from which the conclusion follows by applying Lemmas~\ref{lem:round_cal_err} and~\ref{lem:round_loss_err}. Assume for contradiction that \eqref{eq:round_recal} were possible. By the bias-variance decomposition, the second bound in \eqref{eq:round_recal} implies $\E[\sum_{t \in [T]}(q_t-r_t)^2]\le 4\eps^2 T$. Lemma~\ref{lem:nuv_upper} then gives
	\begin{align*}
	\sum_{v \in \calN} \E\Brack{\nu_v^2}
	&\le 128\Par{\eps T^2 + \frac 1 \eps + 4\eps T^2 + 4T}\\
	&\le 896\Par{\eps T^2 + \frac 1 \eps},
	\end{align*}
	where we used $2T \le \eps T^2 + \frac 1 \eps$. Since $|\calN|\le 1/\eps$, we also have $\sum_{v \in \calN}(\E[\nu_v^2]+1)\le 900(\eps T^2 + 1/\eps)$. Plugging this into Lemma~\ref{lem:124_bound},
	\begin{equation}\label{eq:contradict_1}\frac 1 T \sum_{v \in \calN} \E\Brack{|\mu_v|} \ge \frac{\sqrt T}{1440\sqrt{\eps T^2 + \frac 1 \eps}}.\end{equation}
On the other hand, Lemma~\ref{lem:lower_k1_r} and the bounds in \eqref{eq:round_recal} imply
\[\frac 1 T \sum_{v \in \calN} \E\Brack{|\mu_v|} \le 2\eps + \sqrt{4\eps^2} = 4\eps. \]
We conclude by casework on the denominator of \eqref{eq:contradict_1}. If $\eps T^2 \le \frac 1 \eps \iff T \le \frac 1 \eps$, then
	\[4\eps \ge \frac{\sqrt T}{1440\sqrt{\frac 2 \eps}} \implies \eps \ge \frac{T}{32\cdot 1440^2}\]
is lower bounded by a constant, at which point every $T = \Omega(\eps^{-3})$. Otherwise, if $\eps T^2 \ge \frac 1 \eps$, 
	\[4\eps \ge \frac{\sqrt T}{1440\sqrt{2\eps T^2}} \implies T = \Omega(\eps^{-3}).\]
\end{proof}

\begin{corollary}\label{cor:l2_lower}
In the setting of Construction~\ref{construction:lb}, no $p_{[T]}$ can be $(\eps, \eps^2)$-$\calK_2$-recalibrated under $\lsq$, unless, for an appropriate constant, $T=\Omega(\eps^{-3})$.
\end{corollary}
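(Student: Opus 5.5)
The plan is to reduce Corollary~\ref{cor:l2_lower} directly to Theorem~\ref{thm:lb}, using the pointwise comparison between the two calibration measures stated after Definition~\ref{def:calerr}. For every realization of $(p_{[T]}, y_{[T]})$ we have
\[
\calK_1\Par{p_{[T]}, y_{[T]}} \le \calK_2\Par{p_{[T]}, y_{[T]}}.
\]
This is Cauchy--Schwarz (equivalently Jensen) applied to the probability weights $\frac{n_T(v)}{T}$. Taking expectations over the labels and the learner's randomness gives
\[
\E\Brack{\calK_1\Par{p_{[T]}, y_{[T]}}} \le \E\Brack{\calK_2\Par{p_{[T]}, y_{[T]}}} \le \eps
\]
for any $p_{[T]}$ that is $(\eps,\eps^2)$-$\calK_2$-recalibrated. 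The excess-loss condition in Definition~\ref{def:recal} is the same for both notions. Hence any $(\eps,\eps^2)$-$\calK_2$-recalibrated sequence against $(q_{[T]}, y_{[T]})$ under $\lsq$ is also $(\eps,\eps^2)$-recalibrated in the $\calK_1$ sense.

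We then apply Theorem~\ref{thm:lb} on the same hard instance, Construction~\ref{construction:lb}. It states that no $p_{[T]}$ achieves $(\eps,\eps^2)$-recalibration under $\lsq$ unless $T=\Omega(\eps^{-3})$. Combining this with the implication above gives the corollary with the same constant.

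There is no real obstacle here. The only point to check is that the inequality $\calK_1\le\calK_2$ holds pathwise, so that it survives taking expectations over randomized predictors. It does: it is a deterministic inequality for each fixed realized sequence, and the expectation is over those realizations.
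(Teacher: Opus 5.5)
Your proposal is correct and matches the paper's proof. Both use the pathwise inequality $\calK_1 \le \calK_2$ (Jensen over the weights $n_T(v)/T$), so $(\eps,\eps^2)$-$\calK_2$-recalibration implies $(\eps,\eps^2)$-recalibration with the identical excess-loss condition, and then invoke Theorem~\ref{thm:lb} on Construction~\ref{construction:lb}.
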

\begin{proof}
For every prediction-label sequence, $\calK_1(p_{[T]},y_{[T]})\le \calK_2(p_{[T]},y_{[T]})$. Thus any algorithm satisfying the stated $\calK_2$ guarantee also satisfies the $\calK_1$ guarantee ruled out by \Cref{thm:lb}.
\end{proof}

\section{Experiments}

In this section, we report on experiments comparing the loss minimization and calibration properties of four algorithms: the predictions made by a base neural network used to generate a hint sequence $q_{[T]}$ in Model~\ref{model:hsp}, and three postprocessing methods applied to this hint sequence. Specifically, the four online forecasters we compare are as follows.

\begin{enumerate}
\item(Hint).
  The unmodified sigmoid output of the trained network,
  $q_t$.
\item (Calibeating).
  The algorithm in Section 4 of~\cite{FosterH23} and Section~\ref{sec:calibeat} of this work. 
\item (Recalibration).
  Single hint recalibration via Blackwell approachability (Theorem~\ref{thm:recalibration}).
\item  (Calibeating + recalibration).
  The output of Item 2 is fed to an independent instance
  of Item 3, i.e., calibeating followed by recalibration.
\end{enumerate}
We present their Brier score \eqref{eq:brier}, $\ell_1$ calibration error $\calK_1$ \eqref{eq:l1cal}, squared root-mean-square calibration error $\calK_2^2$ \eqref{eq:l2cal}, and classification accuracy (zero-one loss). For the last metric, we threshold predictions at $0.5$ to convert them to a label. \footnote{The code can be found at: \href{https://github.com/chutongyang98/Optimal-Recalibration-of-an-Online-Predictor}{https://github.com/chutongyang98/Optimal-Recalibration-of-an-Online-Predictor}}.

\subsection{Binary classification}
In this section, we consider a binary classification task. 
We use a binary subset of CIFAR-10~\cite{KrizhevskyHO09} obtained by keeping only the cat
and dog classes.  This yields $11{,}000$ training images
(of which $1{,}000$ are held out as a validation set) and $2{,}000$
test images, with cats relabeled $0$ and dogs relabeled $1$.

\paragraph{Hint model.}
As our hint model, we use a 40-layer DenseNet-BC with a single sigmoid output.  The model is trained for
100 epochs with SGD with Nesterov momentum (initial learning rate
$0.1$, weight decay $10^{-4}$, multistep schedule with drops at
epochs $50$ and $75$) under the binary cross-entropy loss.  The
trained model attains $89.6\%$ test accuracy.

\paragraph{Online stream and drift.}
The $2{,}000$ test images are reshuffled and replayed for $100$ passes,
producing an online stream of total length $T = 200{,}000$.  At step
$100{,}000$ (the midpoint, after $50$ passes) the stream enters a
``post-flip'' phase.  In this phase every label is independently flipped
$y_t \leftarrow 1 - y_t$ with probability $\beta = 0.2$, otherwise
kept.  The model's per-image predictions are unchanged across the two
phases, so the feature distribution and the forecaster's outputs are stationary while the
conditional distribution $y \mid x$ changes. The flip is realized by an independent Bernoulli stream. This label drift is meant to model a distribution shift, which could simulate a predictor that is used to label both clean and noisy data. This is a reasonable setting to evaluate recalibration and calibeating, because the corrected sequence must address this miscalibration-inducing shift.

Table~\ref{tab:cifar2-drift} reports cumulative metrics over all
$T = 200{,}000$ stream rounds.  We report the Brier score, the $\ell_1$ calibration error
$\calK_1$, the $\ell_2^2$ calibration error
$\calK_2^2$, and $0$-$1$ accuracy of thresholded labels. As theory predicts, calibeating decreases the Brier score by roughly the initial $\calK_2^2$, and empirically improves upon $\calK_1$ as well. Interestingly, recalibration also improved the Brier score slightly. Finally, calibeating + recalibration achieved the best Brier score, and a competitive $\calK_1$ and accuracy.

\begin{table}[ht]
\centering
\small
\begin{tabular}{lrrrr}
\toprule
Predictor & Brier &$\calK_1$ &  $\calK_2^2$ & Acc \\
\midrule
Hint                & $ 0.16124 \pm 7.8\mathrm{e}{-4}$& $0.15413 \pm 8.4\mathrm{e}{-4} $   &  $0.02976 \pm 3.0\mathrm{e}{-4}$ & $82.075 \pm 8.2\mathrm{e}{-2}\%$ \\
Calibeating   & $0.13865\pm 5.4\mathrm{e}{-4}$&$ 0.05771\pm 5.1\mathrm{e}{-4} $   & $ 0.00663\pm 1.0\mathrm{e}{-4}$ & $82.428\pm8.4\mathrm{e}{-2} \%$ \\
Recalibration        & $0.14708 \pm 8.3\mathrm{e}{-4}$& $ 0.01188 \pm 2.5\mathrm{e}{-4}$   & $0.00052 \pm 0.5\mathrm{e}{-4}$ & $78.597\pm  3.08\mathrm{e}{-1}\%$ \\
C + R & $0.13314 \pm 4.5\mathrm{e}{-4}$ &$0.01319 \pm 4.5\mathrm{e}{-4}$ & $9.0\mathrm{e}{-4} \pm 0.3\mathrm{e}{-4}$ & $ 82.408 \pm 8.8\mathrm{e}{-2} \%$ \\
\bottomrule
\end{tabular}
\caption{CIFAR-2 cats vs.\ dogs classification with partial concept drift (flip
probability $\beta = 0.2$ after step $100{,}000$ with $T = 200{,}000$ for $20$ random seeds).
Cumulative metrics over the full stream. Note that ``Hint'' and ``Calibeating'' are deterministic, and the randomness comes from random flipping and data order.}
\label{tab:cifar2-drift}
\end{table}
\subsection{Multiclass classification}

We include a multiclass experiment, for a classification task with $k$ classes. To generate sequences $q_{[T]}, y_{[T]}$ for use in Model~\ref{model:hsp}, we use two projection operations as suggested by \cite{GuoPSW17}. In particular, start from a hint vector $\vq \in \Delta^k$ and a one-hot label $\vy = \ve_j \in \Delta^k$. We produce a pair $q = \vq_i$, where $i \defeq \arg\max_{i \in [k]} \{\vq_i\}$ (breaking ties arbitrarily), and $y = \ind_{i = j}$. This is the same projection as is implicitly performed in \emph{confidence calibration}, where the goal is that a top-class probability of $q$ should mean this label is correct a $q$ fraction of the time.

Next, suppose that our algorithm (calibeating, recalibration, or calibeating + recalibration) processes the scalar hint $q$ and produces a prediction $p$. We lift this back up to a vector $\vp$ as follows: we take $\vp_i \gets p$, and set the other coordinates $\vp_{-i}$ to be the unique scaling of $\vq_{-i}$ so that $\vp \in \Delta^k$. When evaluating this new prediction $\vp$ according to calibration ($\calK_1$ or $\calK_2$) or Brier score ($\calB_2$), we freshly perform the binary projection operation, i.e., we relabel $p = \vp_i$ where $i = \arg\max_{i \in [k]} \vp_i$, and 
$y = \ind_{i = j}$. Due to the nonlinear nature of this operation (and the inconsistent binary labels), our theory does not technically apply to this setting. Nevertheless, we include an empirical evaluation of the performance of our postprocessing operations in this multiclass setting.

\paragraph{Dataset.}
We use CIFAR-100~\cite{KrizhevskyHO09}, a $100$-class image classification dataset on $32 \times 32$ RGB
images.  We use the standard splits: $45{,}000$ training images
(after holding out $5{,}000$ for validation), and $10{,}000$ test
images.  Test-set marginal class distribution is uniform. The $10{,}000$ test images are reshuffled and replayed for $100$ passes,
producing an online stream of total length $T = 1{,}000{,}000$. 

\paragraph{Hint model.}
As our hint model, we use a 40-layer DenseNet-BC with a $100$-way softmax output.  The model is trained
for $300$ epochs by SGD with Nesterov momentum (initial learning rate
$0.1$, weight decay $10^{-4}$, multistep schedule with drops at
$50\%$ and $75\%$ of training) under cross-entropy loss.  The trained
model attains $65.5\%$ test accuracy.  

Table~\ref{tab:cifar100} reports cumulative metrics over all
$T = 1{,}000{,}000$ stream rounds.  We report the (projected) Brier score $\calB_2$, the $\ell_1$ calibration error $\calK_1$, the $\ell_2^2$ calibration error $\calK_2^2$, and top-1 accuracy. Despite the lack of direct theoretical guarantees, the recalibration method substantially improves the calibration, and calibeating + recalibration reduces the empirical calibration of calibeating alone. Interestingly, among the three postprocessing methods, recalibration had the worst Brier score but the best top-$1$ accuracy.

\begin{table}[ht]
\centering
\small
\begin{tabular}{lrrrr}
\toprule
Predictor & Brier& $\calK_1$ & $\calK_2^2$ & Acc \\
\midrule
Hint                  & $0.55862$&$0.23330$ & $0.07473$ & $65.51\%$ \\
Calibeating   & $0.51501 \pm 1.3\mathrm{e}{-4}$ &$ 0.08476 \pm 2.6\mathrm{e}{-4}$ & $0.02286 \pm 1.6\mathrm{e}{-4}$ & $62.114\pm 2.3\mathrm{e}{-2}\%$ \\
Recalibration        & $ 0.55281\pm 1.8\mathrm{e}{-4}$&$0.03107 \pm 1.05\mathrm{e}{-3}$ & $0.00336 \pm 1.5\mathrm{e}{-4}$ & $64.379 \pm 5.1\mathrm{e}{-2}\%$ \\
C + R & $0.51240 \pm 1.4\mathrm{e}{-4}$&$ 0.06622 \pm 7.1\mathrm{e}{-4}$ & $0.01152 \pm 1.8\mathrm{e}{-4}$ & $63.549 \pm 3.9\mathrm{e}{-2} \%$ \\
\bottomrule
\end{tabular}
\caption{CIFAR-100 multi-class with $T = 1{,}000{,}000$ for $20$ random seeds. Note that ``Calibeating'' is deterministic and the randomness comes from different orderings on online data.}
\label{tab:cifar100}
\end{table}

\section*{AI Disclosure}
\phantomsection
\addcontentsline{toc}{section}{AI Disclosure}

LLMs suggested both the $\calK_2$ extension to our main $\calK_1$ recalibration result in Section~\ref{sec:l2recal}, and the lower bound argument in Section~\ref{sec:lower}, upon being prompted with the related lower bound of \cite{CollinaLNR26} as a starting point. The remaining results, including our main upper bound in Section~\ref{sec:recal}, were proven by the authors prior to consultation with LLMs. The manuscript was written solely by the authors, who take full responsibility for the organization and presentation of results derived by LLMs.

\newpage

\bibliography{ref.bib}
\bibliographystyle{alpha}

\newpage
\appendix

\end{document}